\let\AND\relax
\theoremstyle{plain}
\newtheorem{theorem}{Theorem}
\newtheorem{lemma}{Lemma}
\theoremstyle{definition}
\newtheorem{definition}{Definition}
\newtheorem{property}{Property}
\newtheorem{example}{Example}
\newtheorem{assumption}{Assumption}
\theoremstyle{remark}
\newtheorem{remark}{Remark}
\title{Conformal Prediction for Hierarchical Data}
\author{\name Guillaume Principato \email guillaume.principato@universite-paris-saclay.fr \\
\addr EDF R{\&}D, Palaiseau, France \\
\addr Université Paris-Saclay, CNRS, Inria, Laboratoire de mathématiques d'Orsay, Orsay, France \\
\AND \\
\name Gilles Stoltz \email gilles.stoltz@universite-paris-saclay.fr \\
\addr Université Paris-Saclay, CNRS, Inria, Laboratoire de mathématiques d'Orsay, Orsay, France \\
\addr HEC Paris, Jouy-en-Josas, France \\
\AND \\
\name Yvenn Amara-Ouali\textsuperscript{*} \email yvenn.amara-ouali@edf.fr \\
\addr EDF R{\&}D, Palaiseau, France \\
\addr Université Paris-Saclay, CNRS, Laboratoire de mathématiques d'Orsay, Orsay, France \\
\AND \\
\name Yannig Goude\textsuperscript{*} \email yannig.goude@edf.fr \\
\addr EDF R{\&}D, Palaiseau, France \\
\addr Université Paris-Saclay, CNRS, Laboratoire de mathématiques d'Orsay, Orsay, France \\
\AND \\
\name Bachir Hamrouche\textsuperscript{*} \email bachir.hamrouche@edf.fr \\
\addr EDF R{\&}D, Palaiseau, France \\
\AND \\
\name Jean-Michel Poggi\textsuperscript{*} \email jean-michel.poggi@universite-paris-saclay.fr \\
\addr Université Paris-Saclay, CNRS, Inria, Laboratoire de mathématiques d'Orsay, Orsay, France \\
\addr Université Paris Cité, Paris, France \\
\AND \\
{\emph{\textsuperscript{*}Equal contribution}}
}
\newcommand{\Id}{\mathop{\mathrm{Id}}}
\renewcommand{\Im}{\mathop{\mathrm{Im}}}
\newcommand{\bop}[1]{\ensuremath{\boldsymbol{#1}}}
\newcommand{\bzero}{\bop{0}}
\newcommand{\bone}{\bop{1}}
\newcommand{\bbeta}{\bop{\beta}}
\newcommand{\by}{\bop{y}}
\newcommand{\bx}{\bop{x}}
\newcommand{\bz}{\bop{z}}
\newcommand{\bc}{\bop{c}}
\newcommand{\bu}{\bop{u}}
\newcommand{\bg}{\bop{g}}
\newcommand{\bh}{\bop{h}}
\newcommand{\bs}{\bop{s}}
\newcommand{\bm}{\bop{m}}
\newcommand{\bw}{\bop{w}}
\newcommand{\blambda}{\bop{\lambda}}
\newcommand{\bmu}{\bop{\mu}}
\newcommand{\cA}{\mathcal{A}}
\newcommand{\cF}{\mathcal{F}}
\newcommand{\cD}{\mathcal{D}}
\newcommand{\cP}{\mathcal{P}}
\newcommand{{\transp}}{\!{\top}\!}
\newcommand{\R}{\mathbb{R}}
\newcommand{\C}{\mathbb{C}}
\renewcommand{\P}{\mathbb{P}}
\newcommand{\E}{\mathbb{E}}
\newcommand{\eqdef}{\stackrel{\mbox{\scriptsize \rm def}}{=}}
\newcommand{\eqas}{\stackrel{\mbox{\scriptsize \rm a.s.}}{=}}
\DeclareMathOperator{\Tr}{\mathop{\mathrm{Tr}}}
\DeclareMathOperator{\train}{\mbox{\tiny \rm train}}
\DeclareMathOperator{\calib}{\mbox{\tiny \rm calib}}
\DeclareMathOperator{\est}{\mbox{\tiny \rm estim}}
\DeclareMathOperator{\estim}{\mbox{\tiny \rm estim}}
\DeclareMathOperator{\test}{\mbox{\tiny \rm test}}
\DeclareMathOperator{\diag}{\mathop{\mathrm{diag}}}
\DeclareMathOperator{\Diag}{\mathop{\mathrm{Diag}}}
\DeclareMathOperator{\mint}{\mathop{\mathrm{MinT}}}
\DeclareMathOperator{\wls}{\mathop{\mathrm{WLS}}}
\DeclareMathOperator{\ols}{\mathop{\mathrm{OLS}}}
\DeclareMathOperator{\combi}{\mathop{\mathrm{Combi}}}
\renewcommand{\hat}{\widehat}
\renewcommand{\tilde}{\widetilde}
\newcommand{\wh}{\widehat}
\newcommand{\wt}{\widetilde}
\newcommand{\whs}{\widehat{s}}
\newcommand{\wts}{\widetilde{s}}
\newcommand{\eqdistr}{\stackrel{\mbox{\scriptsize \rm (d)}}{=}}
\newcommand{\subH}{H_{\mbox{\tiny \rm sub}}}
\renewcommand{\leq}{\leqslant}
\renewcommand{\geq}{\geqslant}
\newcounter{algline}
\newcommand{\setalgline}{%
    \refstepcounter{algline}
    \label{line:\thealgline}
}
\newcommand{\bluefinal}[1]{\textcolor{blue}{#1}}
\renewcommand{\epsilon}{\varepsilon}
\newcommand{\ind}{\mathds{1}}
\DeclareMathOperator{\std}{\mathop{\mathrm{std}}}
\newcommand{\ol}{\overline}
\DeclareMathOperator{\Leb}{\mathfrak{L}}
\newcommand{\mse}{\mathrm{\textsc{mse}}}
\renewcommand{\i}{\mathrm{i}}
\begin{document}

\maketitle

\begin{abstract}
We consider conformal prediction for multivariate data and focus on hierarchical data, where some components are linear combinations of others. Intuitively, the hierarchical structure can be leveraged to reduce the size of prediction regions for the same coverage level. We implement this intuition by including a projection step (also called a reconciliation step) in the split conformal prediction [SCP] procedure, and prove that the resulting prediction regions are indeed globally smaller. We do so both under the classic objective of joint coverage and under a new and challenging task: component-wise coverage, for which efficiency results are more difficult to obtain. The associated strategies and their analyses are based both on the literature
of SCP and of forecast reconciliation, which we connect. We also illustrate the theoretical findings,
for different scales of hierarchies on simulated data. The code to reproduce the experiments is available on \href{https://github.com/PrincipatoG/Conformal-Prediction-for-Hierarchical-Data}{GitHub}.
\end{abstract}

\section{Introduction}

This article combines two post-hoc procedures (two procedures that are applied
after initial forecasts were computed): conformal prediction and forecast reconciliation
for hierarchical data, both in a regression setting.

\subsection{Motivation}
\label{sec:motiv}

Hierarchical data arise in many domains of applications, where values to be predicted are organized into basic categories that sum up to or may be aggregated into higher-level categories.
Two examples include household expenditure surveys, where spending on food, housing, taxes, etc., add up to the total household expenditure, or time series such as energy consumption data \citep{bregere2022online}, which are recorded at different geographic granularities.
Such hierarchical structures are actually ubiquitous in various domains~(see the review article by \citealp{athanasopoulos2024forecast}), including economics, demography, retail sales and supply chain, energy forecasting. Forecasting methods were introduced to take into account the hierarchical structure directly in the training phase, which could be computationally extensive and reduce the flexibility in the choice of the model \citep{doumeche2025forecasting}. Therefore, the present study is based on an alternative line of research known as \emph{forecast reconciliation}.

The aim of forecast reconciliation is to exploit the linear relationships defining the hierarchy in a post-hoc manner, after some initial forecasts were produced, so as to ensure that the final forecasts are \emph{coherent} across levels; this is often achieved via projections (not necessarily orthogonal ones).
This idea was shown to be effective for improving the accuracy of point forecasts.
However, extending these techniques to probabilistic forecasting remains a major challenge, despite the increasing importance of predictive uncertainty in modern decision-making \citep{gneiting2014probabilistic}.

In parallel, \emph{conformal prediction} provides a general and model-agnostic framework for constructing finite-sample valid prediction regions from any underlying forecasting method. Both forecast reconciliation and conformal prediction are post-hoc procedures applied after point forecasts are obtained. This conceptual similarity naturally suggests combining them to produce improved, hierarchy-aware prediction regions.

The goal of this work is to investigate this combination from a theoretical standpoint. While the setting considered here encompasses a wide range of applications, it should be noted that many practical instances of hierarchical data take the form of time series. Such cases typically lead to violations of the i.i.d.\ assumption on non-conformity scores adopted in this article.
Hence, the present analysis, developed under a favorable i.i.d.\ framework, establishes a theoretical basis that can guide future developments in more general settings.
\subsection{Related work}

\paragraph{Forecast reconciliation.}
Forecast reconciliation aims to exploit the hierarchical structure so as to improve the quality and coherence of forecasts.
The guiding intuition is that aggregate quantities, located higher in the hierarchy, are often easier to predict, and that these forecasts can be used to refine those at lower levels \citep{athanasopoulos2024forecast}.
Conversely, local forecasts may convey valuable disaggregated information that can improve higher-level predictions.
A central line of work
(\citealp{hyndman2011optimal};
\citealp{wickramasuriya2019optimal};
\citealp{panagiotelis2021forecast})
approaches reconciliation through the scope of projections onto the subspace of so-called coherent forecasts; see Appendix~\ref{app:innov} for additional background.
More recently, probabilistic extensions of forecast reconciliation have been developed. In particular, \citet{wickramasuriya2024probabilistic} studied reconciliation methods for Gaussian predictive distributions and \citet{panagiotelis2023probabilistic} proposed a general optimization-based framework, where reconciled forecasts are obtained by minimizing a proper scoring rule through gradient descent. However, we did not leverage results from these probabilistic extensions to build our own approach.

\paragraph{Conformal prediction.}
Conformal prediction is a general framework for constructing prediction sets with finite-sample coverage guarantees, based on any underlying forecasting method and under mild assumptions--typically, exchangeability of the data.
It was first formalized by \citet{vovk2005algorithmic} and has gained attention since the work of \citet{lei2018distribution}.

Recent developments have extended conformal prediction to multivariate settings, where the main challenge lies in accounting for dependencies among components.
Existing approaches include copula-based methods \citep{messoudi2021copula}, directional quantile regression \citep{feldman2023calibrated}, and optimal-transport-based formulations \citep{klein2025multivariate, thurin2025optimal}.
This literature focuses on constructing joint prediction regions that target joint coverage. Among these, ellipsoidal prediction regions proposed by \citet{johnstone2021conformal} and \citet{messoudi2022ellipsoidal} offer a particularly tractable formulation. Although our main interest lies in component-wise coverage, we also consider joint coverage for completeness, by adapting the ellipsoidal approach of \citet{johnstone2021conformal} and \citet{messoudi2022ellipsoidal} to hierarchical data (see Section~\ref{sec:ellipsoidalscp}).

\textbf{Terminology clarification.} The term ``hierarchical'' has also appeared in the context of conformal prediction, with a different meaning. For instance, \citet{lee2023distribution}, \citet{dunn2023distribution}, and \citet{duchi2024predictive} study settings involving data from multiple sources or environments rather than hierarchical aggregation constraints.
The latter work explicitly uses the term ``multi-environment'' to avoid confusion.  Similarly, in classification, \citet{mortier2025conformal} consider a hierarchy over possible classes, which is unrelated to the hierarchical setting we consider and describe in Section~\ref{sec:setting}.
Please note that the concept of hierarchical data is also not related to ``hierarchical modeling'' \citep{HM}.

\subsection{Contributions and challenges}

This work combines, for the first time, conformal prediction and forecast reconciliation to construct valid and efficient prediction regions for hierarchical data.
We view this combination as a natural synthesis of two post-hoc procedures: conformal prediction provides distribution-free coverage guarantees, while forecast reconciliation exploits the hierarchical structure to improve efficiency and coherence.

\paragraph{Main contributions.}
Our contributions can be summarized as follows:
\begin{itemize}
    \item \textbf{Joint-coverage setting.} We first revisit the classical ellipsoidal conformal prediction method and extend it to hierarchical data to derive an elementary efficiency result whenever joint-coverage is targeted.
    \item \textbf{Component-wise coverage.} We introduce a new criterion of component-wise coverage, more natural for hierarchical data than joint coverage; indeed, for hierarchical data, uncertainty is typically quantified in a component-wise manner \citep{taieb2017coherent} since each component can (and perhaps, should) be studied under individual scrutiny. The benchmark method considered is the split conformal prediction procedure \citep{lei2018distribution} applied component-wise to signed non-conformity scores as in \citet{linusson2014signed}.
    \item \textbf{Reconciled conformal prediction with efficiency guarantees.} We propose an improved prediction procedure that differs from the above benchmark by an additional reconciliation step, consisting of a projection applied to non-conformity scores.
        We show that, for a given coverage level, the reconciled prediction regions leverage the hierarchical structure of the data and are smaller--in a sense made precise--than those obtained without reconciliation. This constitutes one of the few efficiency results for conformal prediction.
\end{itemize}

\paragraph{Technical challenges.}
Establishing these results requires bridging tools from the two literatures. We rely on known trace inequalities from the forecast-reconciliation framework and introduce new ones tailored to our conformal setting.
A detailed discussion of these technical innovations is provided in Appendix~\ref{app:innov-reconcil}.

\subsection{Outline}

In Section~\ref{sec:setting},
we formally state the settings considered,
the objectives targeted, and the methodologies
followed. The objectives consist
of either joint-coverage guarantees or component-wise-coverage guarantees, with associated efficiency results. The methodology consists of taking extensions of the split conformal procedure [SCP] as benchmarks: we show how to improve on them by adding reconciliation steps through projections.
The analysis is straightforward for joint coverage, see Section~\ref{sec:analy-joint-SCP}.
Our core theoretical results concern the component-wise analysis,
reported in Section~\ref{sec:results}: the immediate component-wise coverage guarantees
are stated in Theorem~\ref{th:coverage},
and the efficiency results, which are our main results, are stated next (weak and practical version in Theorem~\ref{th:reduc-W},
strong and oracle version in Theorem~\ref{th:mint}).
We only provide a sketch of the proof of Theorem~\ref{th:reduc-W}
(highlighting how we connected the tools of conformal prediction
with the ones of forecast reconciliation), and defer
full proofs of all results in appendices.
Finally, Section~\ref{sec:simu} illustrates the theoretical findings on artificial data,
again with full details in appendices.

\textbf{Notation.}
For an integer $n \geq 1$, let $[n] = \{ 1,\ldots,n \}$.
We denote by $\lfloor x \rfloor$
and $\lceil x \rceil$ the lower and upper integer parts of a real number $x \geq 0$.
We let $\Leb_m$ be the Lebesgue measure over $\R^m$.
For a vector $\bu \in \R^m$ and $n \leq m$, let $\bu_{1:n} = (u_1, \ldots, u_n)^{\transp}$
be the vector of the first $n$ components of $\bu$.
The null vector of $\R^m$ is denoted by $\bzero = (0, \ldots, 0)^{\transp}$.
We let $\diag(\bw)$ denote the $m \times m$ diagonal matrix
with diagonal elements given by $\bw \in \R^m$.
We denote by $\Id_m$ the $m \times m$ identity matrix.
The trace of a square matrix~$M$ is denoted by $\Tr(M)$.
The image of a matrix $H$ is denoted by $\Im(H)$.
For a symmetric definite positive matrix $A$ of size $m \times m$, we denote by
$\lVert \bu \rVert_A = \sqrt{\bu^{\transp} A \bu}$ the $A$--norm of a vector $\bu \in \R^m$.

\section{Objectives and methodology}
\label{sec:setting}

\textbf{Setting.}
We consider a multivariate regression problem of observations $\by \in \R^m$,
where $m \geq 3$, based on features $\bx \in \R^d$, where $d \geq 1$.
The observations enjoy some hierarchical structure: some of their components (henceforth referred to as aggregated levels)
are given by sums over subsets of other components (henceforth referred to as the most disaggregated level).
More formally, up to reordering the components of $\by$,
there exist $2 \leq n < m$ and a $m \times n$ matrix~$H$ of the form
\[
H = \begin{bmatrix} \Id_n \\ \subH \end{bmatrix} \quad \mbox{such that} \quad
\by = H \by_{1:n} \,,
\]
where $\subH$ is any $(m-n) \times n$ matrix of real numbers.
The matrix~$H$ encoding the hierarchical summation constraints
is called the structural\footnote{In the literature of
forecast reconciliation, this matrix is usually denoted by $S$.
We rather keep this letter for non-conformity scores. Also, the most disaggregated
level is often composed by the last $n$ components, while we consider the
first $n$ components.}
matrix.
An example is provided in Figure~\ref{fig:ex-tree},
of a tree-like hierarchy (i.e., with a matrix $\subH$
of some specific form, but we recall that we will require no
specific assumption on $\subH$).
\begin{figure}[t]
\begin{tabular}{ccc}
\raisebox{-.5\height}{\begin{minipage}[b]{0.5\linewidth}
    \centering
    \begin{tikzpicture}[
      edge from parent/.style={draw},
      level 1/.style={sibling distance=2.5cm,level distance=0.5cm},
      level 2/.style={sibling distance=1cm,level distance=0.8cm}
      ]
      \node {Total}
        child {node {A}
          child {node {AA}}
          child {node {AB}}
          child {node {AC}}
        }
        child {node {B}
          child {node {BA}}
          child {node {BB}}
        };
    \end{tikzpicture}
\end{minipage}}
& &
$H = \begin{bmatrix}
  \multicolumn{5}{c}{\operatorname{Id}_5} \smallskip \\
  1 & 1 & 1 & 0 & 0 \\
  0 &0 & 0 & 1 & 1 \\
  1 & 1 & 1 & 1 & 1 \\
\end{bmatrix}$
\end{tabular}
\caption{\label{fig:ex-tree} An example of a hierarchical structure (Total=A+B; A=AA+AB+AC; B=BA+BB)
with its associated structural matrix~$H$.}
\end{figure}

\begin{definition}
Vectors $\bu \in \R^m$ satisfying the linear constraints
$\bu = H \bu_{1:n}$ are called coherent. The subspace $\Im(H)$ of all such vectors
is called the coherent subspace.
\end{definition}

Intuitively, coherence means that the values at the aggregated levels of the hierarchy are consistent with those observed or predicted at the most disaggregated level.
Except for the cases where the training procedure outputs coherent forecasts by design
(which, as mentioned in Section~\ref{sec:motiv}, is computationally demanding, see \citealp{doumeche2025forecasting}),
the forecasts are unlikely to be coherent.

\begin{example}
Following~\citet{bregere2022online}, consider the joint prediction of electricity load consumption at regional and national levels based on meteorological and calendar features.
The regional consumptions correspond to the most disaggregated components, and the national consumption is obtained by summing the former.
This is a simple hierarchy with two levels only.
\end{example}

\begin{remark}
The structural matrix~$H$ is fully known and available to the learner, as it is determined by the data and the practitioner's objectives.
\end{remark}

\textbf{Additional terminology.}
Observations of the form above with $n < m$ will be called hierarchical.
When $n=m$, i.e., $H$ is the identity matrix, we will use the terminology of (plain) multivariate observations.

\subsection{Objectives, part 1: joint coverage vs.\ component-wise coverage guarantees}

We are interested in regressing hierarchical observations
based on contextual information (that is not necessarily hierarchical).
As the form of data will be fixed throughout the article, we state it
as a global assumption.

\begin{assumption}[global assumption]
We consider data $(\bx_t,\by_t)_{1 \leq t \leq T}$ given by pairs
of features $\bx_t$ and of hierarchical observations $\by_t$.
\end{assumption}

For now, we assume that this data is formed by i.i.d.\ pairs
(for the sake of exposition; this assumption will later be slightly relaxed).
The primary objective is to construct prediction sets $C$ based on this $T$--sample,
where $C : \bx \in \R^d \mapsto C(\bx)$ is an application taking subsets of $\R^m$
as values.
Consider a new data point $(\bx_{T+1},\by_{T+1})$ i.i.d.\ from the $T$--sample.
Coverage guarantees refer to controlling probabilities of the form
$\P \bigl( \psi(\by_{T+1}) \in \psi(C(\bx_{T+1})) \bigr)$,
where $\psi$ may be the identity or the extraction of a given component, and
where the probability $\P$ is with respect to both $(\bx_{T+1},\by_{T+1})$ and
$(\bx_t,\by_t)_{1 \leq t \leq T}$. We set some miscoverage level $\alpha \in (0,1)$.

\textbf{Joint coverage.}
This is the typical objective in other contributions on (plain) multivariate conformal prediction (see \citealp{johnstone2021conformal},  \citealp{messoudi2021copula,messoudi2022ellipsoidal}, \citealp{feldman2023calibrated}) and corresponds to $\psi$
being the identity:
\[
\P \bigl( \by_{T+1} \in C(\bx_{T+1}) \bigr) \approx 1-\alpha\,.
\]
The prediction regions $C(\bx_{T+1})$ are typically ellipsoidal in the scope of Section~\ref{sec:ellipsoidalscp},
and in any case, their general shapes are not pre-specified (and should even be optimized
by the learning strategies).

\textbf{Component-wise coverage.}
This coverage objective corresponds to considering all extractions of components
for the functions $\psi$, i.e., simultaneous individual coverage guarantees
are targeted.
Therefore, the prediction set is given by a Cartesian product: $C = C_1 \times \ldots \times C_m$,
where each $C_i : \bx \in \R^d \mapsto C_i(\bx)$ is an application taking subsets of $\R$
as values, for $i \in [m]$. These individual prediction sets $C_i$ should be designed in such a way that each component $y_{T+1,i}$ of the observations $\by_{T+1}$ is in $C_i(\bx_{T+1})$ with probability approximately $1-\alpha$:
\[
\forall i \in [m], \quad \P \bigl( y_{T+1,i} \in C_i(\bx_{T+1}) \bigr) \approx 1-\alpha\,.
\]

\subsection{Objectives, part 2: efficiency}

A secondary objective is to ensure that
the prediction sets are efficient, i.e.,
are as small as possible.

\textbf{Efficiency criterion under joint coverage constraints.}
Joint coverage corresponds to evaluating performance at a global level.
It is therefore natural to measure efficiency in terms of volumes, as given by the Lebesgue measure $\Leb_m$
over $\R^m$, and to
\[
\mbox{minimize} \quad \E \Bigl[ \Leb_m \bigl( C(\bx_{T+1}) \bigr) \Bigr]\,,
\]
where again, the expectation is with respect to both $(\bx_{T+1},\by_{T+1})$ and
$(\bx_t,\by_t)_{1 \leq t \leq T}$.

We show in Theorem~\ref{th:reconciled coverage messoudi} that our approach satisfies this objective, and actually
provides a stronger result of uniform domination:
a prediction region $C$ is uniformly more efficient than a prediction region $C'$ if
$\Leb_m\bigl(C(\bx)\bigr) \leq \Leb_m\bigl(C'(\bx)\bigr)$ for all $\bx \in \R^d$.

\textbf{Efficiency criterion under component-wise coverage constraints.}
In this case, components are under individual scrutiny and some
may be more important than others.
This is why a vector $\bw = (w_1,\ldots,w_m)^{\transp}$ of positive numbers may be used to weight the components based
on their respective importance.
One quantification of the size of the Cartesian product $C(\bx) = C_1(\bx) \times \ldots \times C_m(\bx)$
is then given by the sum of the $w_i \, \Leb_1 \bigl( C_i(\bx) \bigr)^2$ over $i \in [m]$,
where $\Leb_1$ denotes the Lebesgue measure over $\R$.
Formally, the corresponding efficiency objective corresponds to
\[
\mbox{minimizing} \quad \E \! \left[ \sum_{i=1}^m w_i \, \Leb_1 \bigl( C_i(\bx_{T+1}) \bigr)^2 \right],
\]
where, again, the expectation $\E$ is with respect to both $(\bx_{T+1},\by_{T+1})$ and
$(\bx_t,\by_t)_{1 \leq t \leq T}$.

\subsection{Methodology: split conformal prediction [SCP], performed jointly or component-wise}
\label{sec:algo}

Split conformal prediction [SCP] \citep{lei2018distribution}, which is a reformulation of inductive conformal prediction \citep{vovk2005algorithmic}, is
a procedure based on splitting data indexed by $[T]$ between a training set indexed by $\mathcal{D}_{\train}$
(to learn a regressor function),
possibly an estimation set indexed by $\mathcal{D}_{\est}$ (typically to learn some parameters
for the evaluation), and a calibration set indexed by $\mathcal{D}_{\calib}$
(to compute estimation errors, a.k.a.\ residuals or non-conformity scores).
With pairs $(\bx_t,\by_t)$ indexed by $t \in \mathcal{D}_{\train}$,
a regressor function $\wh{\bmu} : \bx \in \R^d \mapsto \wh{\bmu}(\bx) \in \R^m$
is built, thanks to some regression algorithm~$\cA$ provided as input parameter
to the SCP procedure. On the calibration set, i.e., for each
$t \in \mathcal{D}_{\calib}$, point estimates $\wh{\by}_t = \wh{\bmu}(\bx_t)$
and associated non-conformity scores
are computed. The way the latter are defined depends on the specific
setting and objectives, as detailed below.
We denote by $T_{\train}, \, T_{\est}, \, T_{\calib}$
the respective cardinalities of $\cD_{\train}, \, \cD_{\est}, \, \cD_{\calib}$.
The SCP procedure has been extensively studied in the univariate case, and often through considering the absolute values of the residuals as non-conformity scores, which leads to centered intervals. We are interested in two extensions of this basic setting: multivariate SCP based on ellipsoidal sets, and component-wise signed non-conformity scores.

\subsubsection{Multivariate SCP for joint coverage based on ellipsoidal sets}\label{sec:ellipsoidalscp}

Multivariate SCP based on ellipsoidal sets was already studied by~\citet{johnstone2021conformal}
and~\citet{messoudi2022ellipsoidal} for plain multivariate data.
The key is to consider $A$--norms $\Arrowvert\,\cdot\,\Arrowvert_A$ of estimation errors, where $A$ is a positive definite data-based matrix designed to capture the potential multivariate dependencies of the targets. The matrix $A$ is learned on the estimation set $\mathcal{D}_{\est}$ (possibly also
based on data from $\cD_{\train}$, like $\wh{\bmu}$)
and its choice is critical for practical purposes.
A typical choice is $A = \hat{\Sigma}^{-1}$, where $\hat{\Sigma}$ is some estimated covariance matrix of the
residuals $\by_t - \bop{\widehat{y}}_t$, where $t \in \cD_{\est}$, and the Moore-Penrose pseudo-inverse is considered. To a certain extent, this approach may be
\label{page:decorr}
considered a form of de-correlation.
We denote by $\mathcal{E}$ the procedure outputting this matrix.

Scalar non-conformity scores given by $A$--norms are then computed on $\cD_{\calib}$:
\[
\smash{\check{s}_{t} = \lVert \by_t - \bop{\widehat{y}}_t \rVert_A
\eqdef \sqrt{\bigl( \by_t - \bop{\widehat{y}}_t \bigr)^{\transp} A \bigl(\by_t - \bop{\widehat{y}}_t\bigr)}\,.}
\]
These scores $(\check{s}_{t})_{t \in \cD_{\calib}}$ are ordered
into $\smash{\check{s}_{(1)} \leq \ldots \leq \check{s}_{(T_{\calib})}}$,
where we used the notation of order statistics.
We define $\check{s}_{(0)} = 0$ and $\check{s}_{(T_{\calib}+1)} = +\infty$.
The resulting prediction ellipsoid is $\check{E}(\bx_{T+1})$:
\begin{equation}
\label{eq:alg:messoudi}
\smash{\check{q}_{1-\alpha} = \check{s}_{\big(\lceil(T_{\calib} +1)(1-\alpha)\rceil\big)}
\quad \mbox{and} \quad
\check{E} : \bx \in \R^d
\longmapsto \Big\{ \bop{y}\in\mathbb{R}^m : \big\lVert \bop{y}- \widehat{\bop{\mu}}(\bx) \big\rVert_A \leq \check{q}_{1-\alpha} \Big\}\,.}
\end{equation}
For the convenience of the reader, Algorithm~\eqref{eq:alg:messoudi} described above
and Algorithm~\eqref{eq:alg:reconciled messoudi} discussed right below are
summarized in algorithm boxes in Appendix~\ref{app:messoudi}.

\textbf{Hierarchical SCP for joint coverage.} We adapt the above to hierarchical data
with the orthogonal projection $P_A$ in $A$--norm onto $\Im(H)$,
which equals
$P_A = H(H^{\transp}AH)^{-1}H^{\transp}A$ (see Lemma~\ref{lm:Wproj} in Appendix~\ref{app:A:threducW}), and by
considering rather the regressor function $\mathring{\bmu}(\,\cdot\,) = P_A  \widehat{\bop{\mu}}(\,\cdot\,)$.
Based on this, predictions $\mathring{\bop{y}}_t = \mathring{\bop{\mu}}(\bx_t)$
and non-conformity scores $\mathring{s}_t = \lVert \by_t - \bop{\mathring{y}}_t \rVert_A$
are computed for $t \in \cD_{\calib}$, they are ordered
into $\mathring{s}_{(1)} \leq \ldots \leq \mathring{s}_{(T_{\calib})}$,
and the resulting prediction ellipsoid is $\mathring{E}(\bx_{T+1})$, where
\begin{equation}
\label{eq:alg:reconciled messoudi}
\smash{\mathring{q}_{1-\alpha} = \mathring{s}_{\big(\lceil(T_{\calib} +1)(1-\alpha)\rceil\big)}
\quad \mbox{and} \quad
\mathring{E} : \bx \in \R^d
\longmapsto \Big\{ \bop{y}\in\mathbb{R}^m : \big\lVert \bop{y}- \mathring{\bop{\mu}}(\bx) \big\rVert_A \leq \mathring{q}_{1-\alpha} \Big\}\,.}
\end{equation}

\subsubsection{Component-wise SCP for component-wise coverage}\label{sec:componentwisemethodology}

Signed non-conformity scores were already considered in the univariate case by~\citet{linusson2014signed}.
They are handy in our setting because we consider linear constraints: the signed non-conformity scores $\wh{\bs}_t = \by_t - \wh{\by}_t$
between coherent observations $\by_t$ and forecasts $\wh{\by}_t$ are also coherent, while the vector of their absolute values
is not coherent in general.

\textbf{Component-wise SCP with signed scores.}
When component-wise objectives are targeted,
component-wise non-conformity scores should be considered.
More precisely, we consider the procedure summarized in Algorithm~\ref{alg:bench}
(where no estimation set is needed, for now), which first considers
vector-valued estimation errors $\bop{\widehat{s}}_t = \by_t - \widehat{\bmu}(\bx_t)$,
and then builds separately prediction intervals
for each component $i \in [m]$, based on the non-conformity scores $(\wh{s}_{t,i})_{t \in \cD_{\calib}}$,
in the same spirit as above. More precisely, these scores are ordered as
$\whs_{(1),i} \leq \ldots \leq \whs_{(T_{\calib}),i}$,
we further define $\whs_{(0),i} = -\infty$ and $\whs_{(T_{\calib}+1),i} = +\infty$,
and output $\wh{C}_i(\bx_{T+1})$, where
\[
\smash{\wh{C}_i : \bx \in \R^d \longmapsto
\biggl[
\wh{\mu}_i(\bx) + \whs_{\big(\lfloor(T_{\calib} +1)\alpha/2 \rfloor\big),i}\,,
\wh{\mu}_i(\bx) + \whs_{\big(\lceil(T_{\calib} +1)(1-\alpha/2) \rceil\big),i}
\biggr]\,,}
\]
where $\wh{\mu}_i(\bx_{T+1})$ is the $i$--th component of the
point estimate $\wh{\bmu}(\bx_{T+1})$.
The thus-defined (plain) component-wise SCP with signed scores
is summarized in Algorithm~\ref{alg:bench}.
We use it as a benchmark
and now introduce a generalization of this algorithm
taking the hierarchical structure~$H$ into account.

\textbf{Hierarchical component-wise SCP with signed scores.}
The hierarchical version of SCP is stated in Algorithm~\ref{alg:main}
and only differs from the plain multivariate version stated as
Algorithm~\ref{alg:bench} in line~1, where a projection matrix $P$
onto the coherent subspace $\Im(H)$ should be used: the regressor function
considered is $\wt{\bmu} = P \wh{\bmu}$, instead of simply $\wh{\bmu}$, and thus
outputs point estimates that are coherent
in the case where $\Im(P) \subseteq \Im(H)$. The rest of the procedure
is similar.

\addtocounter{algorithm}{2}
\begin{figure}[t]
\begin{algorithm}[H]
\caption{\label{alg:bench} Plain component-wise SCP with signed scores}
\begin{algorithmic}[1]
\REQUIRE
confidence level~$1-\alpha$;
regression algorithm~$\mathcal{A}$;
partition of $[T]$ into subsets $\mathcal{D}_{\train}$ and $\mathcal{D}_{\calib}$
of respective cardinalities $T_{\train}$ and $T_{\calib}$
\smallskip

\STATE Build the regressor $\widehat{\bmu}(\,\cdot\,) =
\mathcal{A}\bigl( (\bx_t,\by_t)_{t \in \mathcal{D}_{\train}} \bigr)$ and denote $\widehat{\bmu}(\,\cdot\,) =
\bigl( \widehat{\mu}_1(\,\cdot\,), \ldots, \widehat{\mu}_m(\,\cdot\,) \bigr)^{\transp}$ \setalgline \\
\STATE \textbf{for} {$t \in \mathcal{D}_{\calib}$} \textbf{do} let
$\bop{\widehat{y}}_t = \widehat{\bmu}(\bx_t) $ and compute the estimation errors $\bop{\widehat{s}}_t = \by_t - \bop{\widehat{y}}_t$ \setalgline \\
\FOR {each component $i \in [m]$} \setalgline
\STATE order the $(\whs_{t,i})_{t \in \mathcal{D}_{\calib}}$
into $\whs_{(1),i} \leq \ldots \leq \whs_{(T_{\calib}),i}$;
set $\whs_{(0),i} = -\infty$ and $\whs_{(T_{\calib}+1),i} = +\infty$ \setalgline \\
\STATE let $\displaystyle{\widehat{q}_{\alpha/2}^{(i)} = \whs_{\big(\lfloor(T_{\calib} +1)\alpha/2\rfloor\big),i}}$
and $\displaystyle{\widehat{q}_{1-\alpha/2}^{(i)} = \whs_{\big(\lceil(T_{\calib} +1)(1-\alpha/2)\rceil\big),i}}$ \setalgline \\
\STATE set $\widehat{C}_i(\,\cdot\,) =  \Big[\widehat{\mu}_i(\,\cdot\,) + \widehat{q}_{\alpha/2}^{(i)}, \,
\widehat{\mu}_i(\,\cdot\,)+ \widehat{q}_{1-\alpha/2}^{(i)} \Big]$ and \textbf{return} $\widehat{C}_i(\bx_{T+1})$
\ENDFOR
\end{algorithmic}
\end{algorithm}

\begin{algorithm}[H]
\caption{\label{alg:main} \bluefinal{Hierarchical} component-wise SCP with signed scores}
\begin{algorithmic}[1]
\REQUIRE
confidence level~$1-\alpha$;
regression algorithm~$\mathcal{A}$;
partition of $[T]$ into subsets $\mathcal{D}_{\train}$ and $\mathcal{D}_{\calib}$
of respective cardinalities $T_{\train}$ and $T_{\calib}$;
\bluefinal{matrix~$P$} \hspace{-2cm} \ \smallskip

\STATE Build the regressor $\widehat{\bmu}(\,\cdot\,) =
\mathcal{A}\bigl( (\bx_t,\by_t)_{t \in \mathcal{D}_{\train}} \bigr)$ and let $\widetilde{\bmu}(\,\cdot\,) = \bluefinal{P} \widehat{\bmu}(\,\cdot\,) =
\bigl( \widetilde{\mu}_1(\,\cdot\,), \ldots, \widetilde{\mu}_m(\,\cdot\,) \bigr)^{\transp}$ \setalgline \\
\STATE \textbf{for} {$t \in \mathcal{D}_{\calib}$} \textbf{do} let
$\bop{\widetilde{y}}_t = \widetilde{\bmu}(\bx_t) $ and compute the estimation errors $\bop{\widetilde{s}}_t = \by_t - \bop{\widetilde{y}}_t$ \setalgline \\
\FOR {each component $i \in [m]$} \setalgline
\STATE order the $(\wts_{t,i})_{t \in \mathcal{D}_{\calib}}$
into $\wts_{(1),i} \leq \ldots \leq \wts_{(T_{\calib}),i}$;
set $\wts_{(0),i} = -\infty$ and $\wts_{(T_{\calib}+1),i} = +\infty$ \setalgline \\
\STATE let $\displaystyle{\widetilde{q}_{\alpha/2}^{(i)} = \wts_{\big(\lfloor(T_{\calib} +1)\alpha/2\rfloor\big),i}}$
and $\displaystyle{\widetilde{q}_{1-\alpha/2}^{(i)} = \wts_{\big(\lceil(T_{\calib} +1)(1-\alpha/2)\rceil\big),i}}$ \setalgline \\
\STATE set $\widetilde{C}_i(\,\cdot\,) =  \Big[\widetilde{\mu}_i(\,\cdot\,) + \widetilde{q}_{\alpha/2}^{(i)}, \,
\widetilde{\mu}_i(\,\cdot\,)+ \widetilde{q}_{1-\alpha/2}^{(i)} \Big]$
and \textbf{return} $\widetilde{C}_i(\bop{x}_{T+1})$
\ENDFOR
\end{algorithmic}
\end{algorithm}

\begin{algorithm}[H]
\caption{\label{alg:sigma} Hierarchical component-wise SCP with signed scores \bluefinal{and data-based projection matrix}}
\begin{algorithmic}[1]
\REQUIRE
confidence level~$1-\alpha$;
regression algorithm~$\mathcal{A}$;
partition of $[T]$ into three subsets $\mathcal{D}_{\train}$, \bluefinal{$\mathcal{D}_{\est}$} and $\mathcal{D}_{\calib}$
of respective cardinalities $T_{\train}$, \bluefinal{$T_{\est}$} and $T_{\calib}$;
\bluefinal{function $\cP$ with values given by $m \times m$
matrices} \\[.1cm] \smallskip

\STATE Build the regressor $\widehat{\bmu}(\,\cdot\,) =
\mathcal{A}\bigl( (\bx_t,\by_t)_{t \in \mathcal{D}_{\train}} \bigr)$ \setalgline \\
\STATE \bluefinal{Build the projection matrix $P = \cP\bigl( \wh{\bmu}, \, (\bx_t,\by_t)_{t \in \mathcal{D}_{\estim}} \bigr)$}
and let $\widetilde{\bmu}(\,\cdot\,) = P \widehat{\bmu}(\,\cdot\,)$ \setalgline \\
\STATE \textbf{for} {$t \in \mathcal{D}_{\calib}$} \textbf{do} let
$\bop{\widetilde{y}}_t = \widetilde{\bmu}(\bx_t) $ and compute the estimation errors $\bop{\widetilde{s}}_t = \by_t - \bop{\widetilde{y}}_t$ \setalgline \\
\FOR {each component $i \in [m]$} \setalgline
\STATE order the $(\wts_{t,i})_{t \in \mathcal{D}_{\calib}}$
into $\wts_{(1),i} \leq \ldots \leq \wts_{(T_{\calib}),i}$;
set $\wts_{(0),i} = -\infty$ and $\wts_{(T_{\calib}+1),i} = +\infty$ \setalgline \\
\STATE let $\displaystyle{\widetilde{q}_{\alpha/2}^{(i)} = \wts_{\big(\lfloor(T_{\calib} +1)\alpha/2\rfloor\big),i}}$
and $\displaystyle{\widetilde{q}_{1-\alpha/2}^{(i)} = \wts_{\big(\lceil(T_{\calib} +1)(1-\alpha/2)\rceil\big),i}}$ \setalgline \\
\STATE set $\widetilde{C}_i(\,\cdot\,) =  \Big[\widetilde{\mu}_i(\,\cdot\,) + \widetilde{q}_{\alpha/2}^{(i)}, \,
\widetilde{\mu}_i(\,\cdot\,)+ \widetilde{q}_{1-\alpha/2}^{(i)} \Big]$
and \textbf{return} $\widetilde{C}_i(\bop{x}_{T+1})$
\ENDFOR
\end{algorithmic}
\end{algorithm}
\end{figure}

Algorithm~\ref{alg:bench} is a special case of Algorithm~\ref{alg:main},
for the choice $P = \Id_m$.
We however provide two separate statements to clarify the notation:
$\widehat{\,\cdot\,}$--type quantities are for the plain multivariate version
(Algorithm~\ref{alg:bench}), which we use as a benchmark, while
$\widetilde{\,\cdot\,}$--type quantities refer to their reconciled versions (as in Algorithms~\ref{alg:main} and~\ref{alg:sigma}), obtained by projection onto $\Im(H)$.

\textbf{Hierarchical component-wise SCP with signed scores and data-based projection matrix.}
The matrix $A$ for multivariate SCP based on ellipsoidal sets for joint coverage may be learned on an estimation
set $\cD_{\est}$, and we may well do so also for the projection matrix~$P$.
This leads to Algorithm~\ref{alg:sigma}, which is a generalization of Algorithm~\ref{alg:main}
and for which modifications to the latter
are stated in blue.
Typical functions $\cP$ (examples are provided in Section~\ref{sec:algo-sigma})
use $\hat{\Sigma}$, an estimated covariance matrix of the
residuals $\by_t - \bop{\widehat{y}}_t$, where $t \in \cD_{\est}$:
\begin{equation}
\label{eq:hatSigma}
\overline{\bs} = \frac{1}{T_{\est}} \sum_{t \in \mathcal{D}_{\est}} \bop{\widehat{s}}_t
\quad \mbox{and} \quad
\hat{\Sigma} = \frac{1}{T_{\est}} \sum_{t \in \mathcal{D}_{\est}} \bigl( \bop{\widehat{s}}_t - \overline{\bs} \bigr)
\bigl( \bop{\widehat{s}}_t - \overline{\bs} \bigr)^{\transp}\,.
\end{equation}

\section{Analysis of hierarchical SCP through ellipsoidal sets for joint coverage}
\label{sec:analy-joint-SCP}

This section only shows how straightforward
it is to take the hierarchy into account in this setting.
We state informally the results achieved. Formal statements
and short proofs thereof may be found in Appendices~\ref{app:messoudi} and~\ref{app:coverage}.

In conformal prediction, results typically hold in great generality.
In particular, we will require no direct assumption on the regression algorithm $\cA$,
which will be treated as a black-box regression procedure that
does not even have to output coherent point estimates (hence the consideration of
projection matrices).

\begin{theorem}[informal statement of Theorems~\ref{th:reconciled coverage messoudi} and~\ref{th:coverage messoudi}]
\label{th:coverage joint_informal}
Algorithms~\eqref{eq:alg:messoudi} and~\eqref{eq:alg:reconciled messoudi}, used with any regression algorithm~$\mathcal{A}$
and any estimation procedure~$\mathcal{E}$,
ensure that whenever data $(\bx_t,\by_t)_{1 \leq t \leq T+1}$ is i.i.d.,
\[
\P \bigl( \by_{T+1} \in \check{E}(\bx_{T+1}) \bigr) \approx 1-\alpha
\qquad \mbox{and} \qquad
\P \bigl( \by_{T+1} \in \mathring{E}(\bx_{T+1}) \bigr) \approx 1-\alpha\,.
\]
In addition, and under no assumption on the data,
Algorithm~\eqref{eq:alg:reconciled messoudi} outputs prediction ellipsoids $\mathring{E}$
that are uniformly more efficient
than the prediction ellipsoids $\check{E}$ output by Algorithm~\eqref{eq:alg:messoudi}:
\[
\forall \bx \in \R^d, \qquad
\Leb_m\bigl(\mathring{E}(\bop{x})\bigr) \leq \Leb_m\bigl(\check{E}(\bop{x})\bigr) \,.
\]
\end{theorem}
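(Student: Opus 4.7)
The plan has two independent parts matching the two claims of the theorem.

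For the coverage claim, I would invoke the standard split-conformal exchangeability argument. The regressor $\wh{\bmu}$ and the matrix $A$ are built exclusively from data indexed by $\mathcal{D}_{\train} \cup \mathcal{D}_{\est}$, so they are independent of the calibration data; conditionally on $\wh{\bmu}$ and $A$, the scores $(\check{s}_t)_{t \in \mathcal{D}_{\calib} \cup \{T+1\}}$ are i.i.d., hence exchangeable. Therefore the rank of $\check{s}_{T+1}$ among these $T_{\calib}+1$ scores is uniform on $[T_{\calib}+1]$, and the usual quantile inequality gives $\P(\check{s}_{T+1} \leq \check{q}_{1-\alpha}) \geq 1-\alpha$, with a matching $1-\alpha+1/(T_{\calib}+1)$ upper bound when scores are a.s.\ distinct. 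Since $\{\check{s}_{T+1} \leq \check{q}_{1-\alpha}\} = \{\by_{T+1} \in \check{E}(\bx_{T+1})\}$, this yields the desired coverage for Algorithm~\eqref{eq:alg:messoudi}. The same argument applies verbatim to Algorithm~\eqref{eq:alg:reconciled messoudi} once we note that $\mathring{\bmu} = P_A \wh{\bmu}$ is still a function of $\mathcal{D}_{\train} \cup \mathcal{D}_{\est}$ only.

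For the efficiency claim, the core observation is that the reconciled scores are pointwise smaller than the unreconciled ones: $\mathring{s}_t \leq \check{s}_t$ for every $t \in \mathcal{D}_{\calib}$. To see this, I would first use coherence of $\by_t$, namely $\by_t \in \Im(H)$, to write $P_A \by_t = \by_t$ (this is where the definition of $P_A$ as the $A$-norm orthogonal projector onto $\Im(H)$ enters); then
\[
\by_t - \mathring{\bmu}(\bx_t) \;=\; P_A \by_t - P_A \wh{\bmu}(\bx_t) \;=\; P_A\bigl(\by_t - \wh{\bmu}(\bx_t)\bigr),
\]
and since $P_A$ is a contraction in $A$-norm (as an $A$-orthogonal projection), taking $A$-norms gives $\mathring{s}_t \leq \check{s}_t$. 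Passing to order statistics preserves the inequality, so in particular $\mathring{q}_{1-\alpha} \leq \check{q}_{1-\alpha}$.

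Finally, both $\check{E}(\bx)$ and $\mathring{E}(\bx)$ are ellipsoids $\{\by : \|\by - c\|_A \leq q\}$ for the same positive-definite matrix $A$ and differ only by their centers and radii. The Lebesgue volume of such an ellipsoid equals $V_m\, q^m / \sqrt{\det A}$, where $V_m$ is the volume of the Euclidean unit ball, and crucially depends neither on the center nor on $\bx$. The inequality $\mathring{q}_{1-\alpha} \leq \check{q}_{1-\alpha}$ therefore translates directly into $\Leb_m(\mathring{E}(\bx)) \leq \Leb_m(\check{E}(\bx))$ for every $\bx$, which is the uniform-domination statement. There is no real obstacle in this argument; the only conceptual step to highlight carefully is the identity $P_A \by_t = \by_t$, which is exactly why coherence of the observations is the lever that makes the projection beneficial, and the fact that contraction in $A$-norm implies the desired inequality on scores.
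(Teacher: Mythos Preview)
Your proposal is correct and follows essentially the same route as the paper: the coverage part is the standard split-conformal exchangeability argument applied to the scores $\check{s}_t$ and $\mathring{s}_t$, and the efficiency part combines coherence $P_A\by_t=\by_t$ with the $A$-norm contraction of the orthogonal projection (the paper phrases this via the Pythagorean theorem) to get $\mathring{s}_t\leq\check{s}_t$, hence $\mathring{q}_{1-\alpha}\leq\check{q}_{1-\alpha}$, and then the common-shape volume comparison.
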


\begin{remark}
\label{rk:eg}
Our approach involves projection matrices that, by definition, leave coherent forecasts unchanged.
Therefore, the inequality in the theorem above (and in each of the subsequent efficiency results) is not strict in general: it is
an equality when the forecasts output by the regression algorithm $\mathcal{A}$ are already coherent.
However, we recall that we are not ready to assume this, see Section~\ref{sec:motiv}.
\end{remark}

\section{Analysis of hierarchical component-wise SCP algorithms}
\label{sec:results}

Coverage guarantees are immediate, under the typical assumptions.
The standard proof of Theorem~\ref{th:coverage} (together with references to earlier similar proofs)
may be found in Appendix~\ref{app:coverage}.
Theorem~\ref{th:coverage} of course holds for Algorithms~\ref{alg:bench} and~\ref{alg:main},
given that they are special cases of Algorithm~\ref{alg:sigma}, using matrices $P$ satisfying $PH=H$, namely $P=\Id_m$ for Algorithm~\ref{alg:bench} and a projection onto $\Im(H)$ for algorithm~\ref{alg:main}.

\begin{restatable}{assumption}{assiid}
\label{ass:iid}
The residuals $\wh{\bs}_t = \by_t - \widehat{\bmu}(\bx_t)$, for $t \in \mathcal{D}_{\calib} \cup \{T+1\}$,
are i.i.d.
This is in particular the case when data $(\bx_t,\by_t)_{1 \leq t \leq T+1}$ is i.i.d.
\end{restatable}

\begin{restatable}[Coverage]{theorem}{thcoverage}
\label{th:coverage}
Fix $\alpha \in (0,1)$.
Algorithm~\ref{alg:sigma}, used with any regression algorithm~$\mathcal{A}$
and any function~$\cP$ outputting matrices $P$ such that $PH=H$, ensures that
whenever Assumption~\ref{ass:iid} (i.i.d.\ scores) holds,
\[
\forall i \in [m], \quad \P \bigl( y_{T+1,i} \in \wt{C}_i(\bx_{T+1}) \bigr) \geq 1-\alpha\,.
\]
In addition, if the non-conformity scores $\smash{\bigl( \wt{\bs}_t \bigr)_{t \in \mathcal{D}_{\calib} \cup \{T+1\}}}$
are almost surely distinct, then
\[
\forall i \in [m], \quad \P \bigl( y_{T+1,i} \in \wt{C}_i(\bx_{T+1}) \bigr) \leq 1-\alpha + 2/(T_{\calib}+1)\,.
\]
\end{restatable}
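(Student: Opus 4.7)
The plan is to reduce both statements to a rank argument for an exchangeable scalar sequence in each coordinate~$i$, after rewriting the algorithm's non-conformity scores as $\wt{\bs}_t = P \wh{\bs}_t$. This identity is established as follows: since each observation is coherent, $\by_t = H (\by_t)_{1:n}$, and since $PH = H$, we have $P \by_t = PH (\by_t)_{1:n} = H (\by_t)_{1:n} = \by_t$; combined with $\wt{\bmu} = P \wh{\bmu}$, this yields $\wt{\bs}_t = \by_t - P \wh{\bmu}(\bx_t) = P \bigl(\by_t - \wh{\bmu}(\bx_t)\bigr) = P \wh{\bs}_t$. This is the one place the hierarchical structure and the hypothesis on~$P$ enter the proof, and it is what connects Assumption~\ref{ass:iid} (phrased on $\wh{\bs}_t$) to the algorithm's quantities $\wt{\bs}_t$.

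I would then condition on $\wh{\bmu}$ and $P$, both of which are measurable with respect to $\mathcal{D}_{\train} \cup \mathcal{D}_{\est}$, a set disjoint from $\mathcal{D}_{\calib} \cup \{T+1\}$. Under Assumption~\ref{ass:iid}, the residuals $(\wh{\bs}_t)_{t \in \mathcal{D}_{\calib} \cup \{T+1\}}$ are i.i.d., so the identity above makes $(\wt{\bs}_t)_{t \in \mathcal{D}_{\calib} \cup \{T+1\}} = (P \wh{\bs}_t)_{t \in \mathcal{D}_{\calib} \cup \{T+1\}}$ i.i.d.\ as well; in particular, for each $i \in [m]$, the scalar sequence $(\wt{s}_{t,i})_{t \in \mathcal{D}_{\calib} \cup \{T+1\}}$ is exchangeable. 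Rewriting the coverage event as $\bigl\{ \wt{q}_{\alpha/2}^{(i)} \leq \wt{s}_{T+1,i} \leq \wt{q}_{1-\alpha/2}^{(i)} \bigr\}$ reduces the problem to controlling where $\wt{s}_{T+1,i}$ falls among the order statistics of the $T_{\calib}$ calibration scores.

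The last step is a standard split-conformal quantile argument. Let $k_L = \lfloor (T_{\calib}+1)\alpha/2 \rfloor$ and $k_U = \lceil (T_{\calib}+1)(1-\alpha/2) \rceil$. For the lower bound, the one-sided conformal quantile inequality (a direct consequence of exchangeability) gives $\P\bigl(\wt{s}_{T+1,i} > \wt{q}_{1-\alpha/2}^{(i)}\bigr) \leq 1 - k_U/(T_{\calib}+1) \leq \alpha/2$, and symmetrically for the lower tail via $-\wt{s}_{t,i}$, so a union bound yields coverage $\geq 1-\alpha$. Under almost sure distinctness, the rank of $\wt{s}_{T+1,i}$ within the full length-$(T_{\calib}+1)$ sequence is uniform on $[T_{\calib}+1]$, so the coverage probability equals $(k_U - k_L)/(T_{\calib}+1)$ exactly; plugging in $k_U \leq (T_{\calib}+1)(1-\alpha/2) + 1$ and $k_L \geq (T_{\calib}+1)\alpha/2 - 1$ then gives the upper bound $1 - \alpha + 2/(T_{\calib}+1)$.

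The proof is a routine adaptation of the standard SCP template, and the only genuinely new observation is the identification $\wt{\bs}_t = P \wh{\bs}_t$ under coherence and $PH = H$. The main care required in the final step is to sharpen the upper constant from the naive $3/(T_{\calib}+1)$ (obtained by combining the two floor/ceiling slacks via a union bound on the two tails) to the stated $2/(T_{\calib}+1)$, which is achieved by computing the coverage probability in one shot as $(k_U - k_L)/(T_{\calib}+1)$ rather than handling the two tails separately.
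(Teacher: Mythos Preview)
Your proposal is correct and follows essentially the same route as the paper: establish $\wt{\bs}_t = P\wh{\bs}_t$ from coherence and $PH=H$, deduce exchangeability of each coordinate's scores from Assumption~\ref{ass:iid} and the fact that $P$ is $(\cD_{\train}\cup\cD_{\est})$--measurable, then apply a standard rank/quantile argument. The only minor stylistic difference is that the paper handles the edge case $\alpha<2/(T_{\calib}+1)$ (where $k_L=0$ and $k_U=T_{\calib}+1$) explicitly and derives both bounds from the single pair of inequalities $\P(\wt{s}_{T+1,i}\leq\wts_{(k),i})\geq k/(T_{\calib}+1)$ and $\P(\wt{s}_{T+1,i}<\wts_{(k),i})\leq k/(T_{\calib}+1)$, whereas you get the lower bound via a union bound on the two tails and the upper bound via the exact uniform-rank computation; both arrive at the same constants.
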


\subsection{Efficiency results: additional assumption}\label{sec:additionalassumption}

Efficiency results rely on an additional
assumption on the distribution of scores. We explain below in detail
why this assumption makes sense, even though this assumption goes against the distribution-free gist of conformal prediction.
We also underline that the aforementioned coverage guarantees did not require this
distributional assumption and explain that previous efficiency
results for conformal prediction all required additional assumptions of the same nature.

\begin{restatable}{definition}{defsph}
A random vector $\bz$ follows a spherical distribution over $\R^k$
if $\bz$ and $\Gamma \bz$ have the same distribution
for all $k \times k$ orthogonal matrices $\Gamma$.
\end{restatable}

\begin{restatable}{definition}{defellip}
An elliptical distribution over $\R^m$ is
of the form $\bc + M \bz$,
for a deterministic vector $\bc \in \R^m$,
a $m \times k$ matrix $M$ such that $M M^{\transp}$ has
rank $k$, and a random vector
$\bz$ following a spherical distribution over~$\R^k$.
\end{restatable}

A given spherical distribution thus generates a family $\cF$ of elliptical distributions
enjoying a stability property through affine transformations as expressed in Lemma~\ref{lm:marginals}
of Appendix~\ref{app:reduc-W}. The latter appendix provides more details on elliptical distributions (and in turn refers to
\citealp[Section~2.3]{kollo2005advanced}).

\newcommand{\exellipt}{\textbf{Examples.}
The simplest example of elliptical distributions consists of multivariate normal distributions
(which are light-tailed distributions).
Other examples include
multivariate \emph{t}--distributions and
symmetric multivariate Laplace distributions (both heavy tailed)
and the uniform distribution on an ellipse (no tail).}
\exellipt

\begin{restatable}{assumption}{assell}
\label{ass:elliptic}
The (i.i.d.) residuals $\wh{\bs}_t = \by_t - \widehat{\bmu}(\bx_t)$, for $t \in \mathcal{D}_{\calib}$,
follow some elliptical distribution (whose shape and parameters are unknown).
\end{restatable}

We justify in detail why this additional assumption may not be considered unnatural nor too restrictive compared to existing assumptions for efficiency results.
\textbf{First,} it is used only for the efficiency results, not for the coverage results (Theorem~\ref{th:coverage}),
which remain distribution-free. Finite-sample efficiency results are scarce in conformal prediction and are achieved
under additional assumptions---either on the model or on the non-conformity scores. For example, \citet{burnaev2014efficiency} show that
in a Gaussian model, conformal ridge regression achieves near-optimal efficiency. \citet{lei2018distribution} assume symmetry of the noise and stability under resampling and small perturbations of the base regressor to compare the conformal prediction bands with the oracle band. \citet{bars2025volume} issued assumptions on the regression function (it has to be the minimizer over a rich class of functions $\mathcal{F}$ of the empirical $(1-\alpha)$--quantile absolute error) to derive upper bounds on the lengths of prediction sets.
\textbf{Second}, the very assumption of elliptical residuals appears explicitly in \citealp[Theorem 1.1]{henderson2024adaptive}, to show that conformal ellipsoidal sets are more efficient than conformal balls. We also believe that this assumption is implicit in \citet{johnstone2021conformal,messoudi2022ellipsoidal,xu2024conformal}, since targeting ellipsoidal prediction regions is only meaningful if the underlying distribution is (at least approximately) elliptical.
\textbf{Third}, as written above, elliptical distributions form a broad family of distributions with diverse tail behaviors. The residuals
are also often assumed elliptical in the literature of forecast reconciliation, as in \citet{panagiotelis2023probabilistic}.

\subsection{Efficiency results: weak version for a fixed vector \texorpdfstring{$\bw$}{w}  of weights}
\label{sec:th-reducW}

We start with an efficiency result between the reconciled prediction sets $\tilde{C}_i(\bx_{T+1})$ output by Algorithm~\ref{alg:main} and the plain prediction sets $\hat{C}_i(\bx_{T+1})$ output by Algorithm~\ref{alg:bench} for a single fixed vector $\bw$ of weights.

\begin{restatable}{theorem}{threducW}
\label{th:reduc-W}
Fix $\bw \in (0,+\infty)^m$.
Under Assumptions~\ref{ass:iid} and~\ref{ass:elliptic}
(i.i.d.\ scores with elliptical distribution),
the hierarchical component-wise SCP (Algorithm~\ref{alg:main})
run with $P = P_{\bw}$, where
\begin{equation}
\label{def:Pw}
P_{\bw} \eqdef H \bigl( H^{\transp} \diag(\bw) H \bigr)^{-1} H^{\transp} \diag(\bw)\,,
\end{equation}
provides prediction sets that are more efficient
than the ones output by the plain component-wise SCP (Algorithm~\ref{alg:bench})
in the following sense:
\begin{equation}
\label{eq:eff-w}
\E \! \left[ \sum_{i=1}^m w_i \, \Leb_1 \bigl( \tilde{C}_i(\bx_{T+1}) \bigr)^2 \right]
\leq \E \! \left[ \sum_{i=1}^m w_i \, \Leb_1 \bigl( \hat{C}_i(\bx_{T+1}) \bigr)^2 \right].
\end{equation}
\end{restatable}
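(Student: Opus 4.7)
The core observation is that reconciliation here amounts to projecting non-conformity scores. Indeed, the observations $\by_t$ are coherent, so $\by_t = P_{\bw} \by_t$ (since $P_{\bw} H = H$), and therefore
\[
\wt{\bs}_t \;=\; \by_t - P_{\bw} \wh{\bmu}(\bx_t) \;=\; P_{\bw} \bigl( \by_t - \wh{\bmu}(\bx_t) \bigr) \;=\; P_{\bw} \, \wh{\bs}_t \,.
\]
This is the first step I would establish. Everything downstream follows from comparing expected squared lengths of the SCP intervals built from $\wh{\bs}_t$ versus from $P_{\bw} \wh{\bs}_t$.

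\textbf{Ellipticity argument.} Under Assumption~\ref{ass:elliptic}, write $\wh{\bs}_t \eqdistr \bc + M \bz$ with $\bz$ spherical on $\R^k$, and denote $\Sigma = M M^{\transp}$. Then $\wt{\bs}_t \eqdistr P_{\bw} \bc + (P_{\bw} M) \bz$, still elliptical, with shape matrix $P_{\bw} \Sigma P_{\bw}^{\transp}$. The key fact about spherical distributions is that every one-dimensional linear combination $\sum_j a_j z_j$ is distributed as $\sqrt{\sum_j a_j^2} \, Z$, where $Z$ is a fixed univariate law depending only on the spherical base. Consequently, for each $i \in [m]$, $(\wh{s}_{t,i})_{t \in \cD_{\calib}}$ is an i.i.d. sample from $c_i + \sigma_i Z$ with $\sigma_i^2 = \Sigma_{ii}$, and
\[
\E\!\left[ \Leb_1\bigl( \wh{C}_i(\bx_{T+1}) \bigr)^{2} \right] \;=\; \Sigma_{ii} \cdot K,
\qquad
\E\!\left[ \Leb_1\bigl( \wt{C}_i(\bx_{T+1}) \bigr)^{2} \right] \;=\; (P_{\bw} \Sigma P_{\bw}^{\transp})_{ii} \cdot K,
\]
where $K = K(\alpha, T_{\calib}, Z)$ is the expected squared difference of the relevant order statistics of i.i.d. copies of $Z$, independent of $i$ and shared by both expressions (the spherical base is unchanged by the projection). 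This step requires some care because the joint distribution of the scores across components changes after projection, but only the marginal distributions affect the interval lengths; I would single this out as the main delicate point.

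\textbf{Reduction to a trace inequality.} Set $W = \diag(\bw)$. The weighted sum of expected squared lengths becomes $K \, \Tr(W P_{\bw} \Sigma P_{\bw}^{\transp})$ for Algorithm~\ref{alg:main} and $K \, \Tr(W \Sigma)$ for Algorithm~\ref{alg:bench}. It thus suffices to prove $\Tr(W P_{\bw} \Sigma P_{\bw}^{\transp}) \leq \Tr(W \Sigma)$. Here $P_{\bw}$ is the orthogonal projection onto $\Im(H)$ with respect to the $W$-inner product, which yields the key self-adjointness identity $W P_{\bw} = P_{\bw}^{\transp} W$, and in turn $P_{\bw}^{\transp} W P_{\bw} = W P_{\bw}$. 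Using cyclicity of the trace,
\[
\Tr(W P_{\bw} \Sigma P_{\bw}^{\transp}) \;=\; \Tr(\Sigma P_{\bw}^{\transp} W P_{\bw}) \;=\; \Tr\bigl( \Sigma \, W P_{\bw} \bigr),
\]
so the desired inequality rewrites as $\Tr\bigl( W (\Id_m - P_{\bw}) \Sigma \bigr) \geq 0$.

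\textbf{Conclusion via positivity.} The matrix $W(\Id_m - P_{\bw})$ is symmetric (by $W P_{\bw} = P_{\bw}^{\transp} W$) and positive semidefinite, since for every $\bu \in \R^m$,
\[
\bu^{\transp} W (\Id_m - P_{\bw}) \bu \;=\; \bu^{\transp} W \bu - \bu^{\transp} P_{\bw}^{\transp} W P_{\bw} \bu \;=\; \lVert \bu \rVert_W^{2} - \lVert P_{\bw} \bu \rVert_W^{2} \;\geq\; 0
\]
by contractivity of the $W$-orthogonal projection. Since $\Sigma$ is also PSD, the trace of their product is nonnegative, which closes the argument. The hardest conceptual step is the ellipticity-based marginal computation isolating the common constant $K$; the final trace inequality is essentially a translation of the standard forecast-reconciliation fact that projecting in the $W$-norm reduces the $W$-weighted variance.
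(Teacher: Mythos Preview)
Your proof is correct and its overall architecture matches the paper's: the reduction $\wt{\bs}_t = P_{\bw}\wh{\bs}_t$ via $P_{\bw}H=H$, the ellipticity argument showing that each marginal length is a common factor $K$ times the corresponding diagonal entry of the shape matrix, and the reduction to the trace inequality $\Tr(W P_{\bw}\Sigma P_{\bw}^{\transp}) \leq \Tr(W\Sigma)$ are exactly what the paper does.

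The one genuine difference is how you prove the trace inequality. The paper (Lemma~\ref{lm:Wproj}) introduces an auxiliary standard Gaussian $\bz'$ and computes $\E\bigl[\|P_{\bw}M\bz'\|_W^2\bigr] = \Tr(WP_{\bw}MM^{\transp}P_{\bw}^{\transp})$, so that the Pythagorean inequality $\|P_{\bw}M\bz'\|_W^2 \leq \|M\bz'\|_W^2$ integrates to the desired bound. You instead use the self-adjointness identity $WP_{\bw} = P_{\bw}^{\transp}W$ to collapse $P_{\bw}^{\transp}WP_{\bw}$ to $WP_{\bw}$, apply cyclicity of the trace, and conclude by showing that $W(\Id_m - P_{\bw})$ is symmetric PSD (so its trace against the PSD matrix $\Sigma$ is nonnegative). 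Your route is a bit more direct and purely algebraic; the paper's is more geometric but needs the auxiliary random-vector device. Both are short.

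One minor gap worth patching: the paper explicitly handles the edge case where $K = \E[L_\alpha^2]$ may be $+\infty$ (the inequality then holds trivially); you should add a sentence to that effect, since Assumption~\ref{ass:elliptic} does not impose second moments.
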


\emph{Sketch of proof.}~~The centered residuals $\tilde{\bs}_t - \E\bigl[ \tilde{\bs}_t \bigr]$
are i.i.d.\ according to a centered elliptical distribution as $t \in \cD_{\calib}$.
Thus, their $i$--th components have the same distribution $\nu$
up to scaling factors denoted by $\sqrt{\gamma_i}$. Let $(v_t)_{t \in \cD_{\calib}}$ be
i.i.d.\ variables distributed according to $\nu$,
consider their order statistics $v_{(1)} \leq \ldots \leq v_{(T_{\calib})}$,
set $\smash{L_\alpha = v_{\big(\lceil(T_{\calib} +1)(1-\alpha/2)\rceil\big)} - v_{\big(\lfloor(T_{\calib} +1)\alpha/2\rfloor\big)}}$.
Thus,
\begin{equation}
\label{eq:chall1}
\E \! \left[ \sum_{i=1}^m w_i \,\, \Leb_1 \bigl( \tilde{C}_i(\bx_{T+1}) \bigr)^2 \right]
= \E\bigl[L_\alpha^2\bigr] \sum_{i \in [m]} w_i \gamma_i\,.
\end{equation}
It may be shown that $\gamma_i$ is the $(i,i)$--th element
of a matrix of the form $P_{\bw} \Gamma P_{\bw}^{\transp}$,
where $\Gamma$ is symmetric positive semi-definite.
A similar result holds for the $\hat{C}_i$,
with scaling factors given by the diagonal elements of $\Gamma$.
It thus suffices to show that
\[
\sum_{i \in [m]} w_i \Gamma_{i,i} = \Tr\bigl( \diag(\bw)\,\Gamma \bigr)
\geq
\Tr\bigl( \diag(\bw)\,P_{\bw} \Gamma P_{\bw}^{\transp} \bigr) =
\sum_{i \in [m]} w_i \bigl( P_{\bw} \Gamma P_{\bw}^{\transp} \bigr)_{i,i}\,.
\]

The inequality above is a result of our own, though
inspired by the literature of forecast reconciliation (e.g., \citealp[Theorem~3.2]{panagiotelis2021forecast}). The complete proof may be found in Appendix~\ref{app:reduc-W}.

\begin{remark}
The same comments as in Remark~\ref{rk:eg} apply: no improvement
is achieved when the forecasts output by the regression algorithm $\mathcal{A}$ are already coherent.
\end{remark}

\textbf{Challenges overcome.} As detailed in Appendix~\ref{app:innov-reconcil}
the main hurdle in the proof above was to relate the minimization
of squared lengths~\eqref{eq:chall1} to
some trace minimization. Such relationships
are classic in the literature of forecast reconciliation,
but they rely on assumptions of unbiasedness (i.e.,
of centered residuals, which we preferred
not to assume). Thanks to signed residuals,
a cancellation takes place:
the distribution of $L_\alpha$ is stable by
translations of the $(v_t)_{t \in \cD_{\calib}}$.

\subsection{Efficiency results: stronger but oracle version}
\label{sec:algo-sigma}

We improve the result of Theorem~\ref{th:reduc-W} to have it hold
simultaneously over all possible positive weight vectors~$\bw$.
However, this improvement is only for an oracle strategy
relying on a projection matrix $P_{\Sigma^{-1}}$ depending on
the covariance matrix~$\Sigma$ of the scores (unknown to the learner). Yet, our use of the covariance matrix does not involve de-correlating the scores, a process that would have contravened the distribution-free nature of conformal prediction. One way to see this is to note that $P_{\Sigma^{-1}}$ does not modify coherent forecasts, which are inherently correlated. We stated (see Algorithm~\ref{alg:sigma}) a ``practical'' implementation of this oracle, adding an estimation step for $\Sigma$.

\begin{restatable}{assumption}{asssigma}
\label{ass:sigma}
The (i.i.d.) residuals $\wh{\bs}_t = \by_t - \widehat{\bmu}(\bx_t)$, for $t \in \mathcal{D}_{\calib}$,
have a bounded second-order moment, with positive definite covariance matrix~$\Sigma$.
\end{restatable}

We crucially use the following minimum-trace result, that is central
in the theory of forecast reconciliation (and provide
an elementary proof thereof in Appendix~\ref{app:mint}, of independent interest).

\begin{restatable}[Minimum-trace projection, \citealp{wickramasuriya2019optimal}]{lemma}{lmmint}
\label{lm:mint}
Let $W$ and $\Sigma$ be two symmetric $m \times m$ matrices, where $W$ is positive semi-definite and
$\Sigma$ is positive definite.
Then, for all projection matrices $P$ onto $\Im(H)$,
\begin{equation}
\label{eq:def-PSigma}
\Tr \bigl( W P_{\Sigma^{-1}} \Sigma P_{\Sigma^{-1}}^{\transp} \bigr)
\leq \Tr \bigl( W P \Sigma P^{\transp} \bigr)\,, \qquad \mbox{where} \quad
P_{\Sigma^{-1}} \eqdef H \bigl( H^{\transp} \Sigma^{-1} H \bigr)^{-1} H^{\transp} \Sigma^{-1} \,.
\end{equation}
\end{restatable}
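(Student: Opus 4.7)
My plan is to use a Pythagorean-style decomposition: write any projection $P$ onto $\Im(H)$ as $P = P_{\Sigma^{-1}} + Q$, where $Q \eqdef P - P_{\Sigma^{-1}}$, and show that when this is inserted into $\Tr(W P \Sigma P^\transp)$, the two cross terms vanish exactly while the remaining correction is non-negative. The whole argument rests on a single algebraic property of $Q$: the matrix $P_{\Sigma^{-1}}$ is itself a projection onto $\Im(H)$ (idempotence follows from $(H^\transp \Sigma^{-1} H)(H^\transp \Sigma^{-1} H)^{-1} = \Id_n$, and the image is contained in $\Im(H)$ by construction), and every projection onto $\Im(H)$ acts as the identity on $\Im(H)$, hence $PH = P_{\Sigma^{-1}} H = H$ and consequently $QH = 0$.

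Next, I would expand
$$\Tr\bigl(W P \Sigma P^\transp\bigr) = \Tr\bigl(W P_{\Sigma^{-1}} \Sigma P_{\Sigma^{-1}}^\transp\bigr) + \Tr\bigl(W P_{\Sigma^{-1}} \Sigma Q^\transp\bigr) + \Tr\bigl(W Q \Sigma P_{\Sigma^{-1}}^\transp\bigr) + \Tr\bigl(W Q \Sigma Q^\transp\bigr)$$
and simplify $\Sigma P_{\Sigma^{-1}}^\transp$. Using symmetry of $\Sigma$ (hence $\Sigma^{-\transp} = \Sigma^{-1}$) and of $H^\transp \Sigma^{-1} H$, a direct computation yields $\Sigma P_{\Sigma^{-1}}^\transp = H (H^\transp \Sigma^{-1} H)^{-1} H^\transp$, and symmetrically $P_{\Sigma^{-1}} \Sigma$ equals the same matrix. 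The leftmost factor~$H$ meets $QH = 0$ when multiplied on the left by $Q$, giving $Q \Sigma P_{\Sigma^{-1}}^\transp = 0$; the transposed computation yields $P_{\Sigma^{-1}} \Sigma Q^\transp = 0$. Both cross traces therefore vanish exactly, not merely sum to zero.

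Finally, the correction term $\Tr(W Q \Sigma Q^\transp)$ is non-negative: writing $W = U^\transp U$ (possible since $W \succeq 0$) and using $\Sigma = \Sigma^{1/2} \Sigma^{1/2}$ (possible since $\Sigma \succ 0$), trace cyclicity rewrites this quantity as $\| U Q \Sigma^{1/2} \|_F^2 \geq 0$, and the announced inequality follows by combining the three contributions. I do not anticipate a real obstacle: the core of the proof is the cancellation of the cross terms, which dovetails exactly with the fact that $\Sigma P_{\Sigma^{-1}}^\transp$ carries $H$ as its leftmost factor. One may view this as a generalized orthogonality (in the $\Sigma$-weighted inner product) between the oracle projection $P_{\Sigma^{-1}}$ and any perturbation $Q$ that annihilates the coherent subspace.
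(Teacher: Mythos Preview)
Your proposal is correct and follows essentially the same approach as the paper: decompose $P = P_{\Sigma^{-1}} + Q$, exploit $QH=0$ to kill the cross terms via the factor $H$ appearing in $\Sigma P_{\Sigma^{-1}}^{\transp} = H(H^{\transp}\Sigma^{-1}H)^{-1}H^{\transp}$, and conclude by showing $\Tr(W Q \Sigma Q^{\transp}) \geq 0$ through square-root factorizations of $W$ and $\Sigma$. The only cosmetic difference is that the paper first characterizes all projections onto $\Im(H)$ as $HG$ with $GH=\Id_n$ and parametrizes the perturbation as $Q = HA$ with $AH=[0]_n$, whereas you work directly with $Q = P - P_{\Sigma^{-1}}$ and the consequence $QH=0$; the two formulations are equivalent since $Q$ has range in $\Im(H)$.
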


We denote by
$\smash{\tilde{C}^\star_1(\bx_{T+1}) \times \ldots \times \tilde{C}^\star_m(\bx_{T+1})}$
the prediction set output by
the hierarchical component-wise SCP (Algorithm~\ref{alg:main})
run with $P_{\Sigma^{-1}}$,
and denote $\tilde{C}_1(\bx_{T+1}) \times \ldots \times \tilde{C}_m(\bx_{T+1})$
the prediction set obtained by the same algorithm with another choice of a projection matrix $P$.

We obtain the following efficiency result,
that yields the inequalities~\eqref{eq:eff-w} of Theorem~\ref{th:reduc-W}
for all positive weight vectors $\bw$,
not just a single fixed one.
(The proof is located in Appendix~\ref{app:mint} and
consists of direct adaptations of the proof of Theorem~\ref{th:reduc-W},
together with an application of Lemma~\ref{lm:mint}.)

\begin{restatable}{theorem}{thmint}
\label{th:mint}
Under Assumptions~\ref{ass:iid}, \ref{ass:elliptic}, and~\ref{ass:sigma}
(i.i.d.\ scores with elliptical distribution admitting a second-order moment),
the hierarchical version of SCP (Algorithm~\ref{alg:main})
run with $P_{\Sigma^{-1}}$ provides more efficient prediction sets than with any other choice of a projection matrix $P$ onto~$\Im(H)$:
\[
\forall i \in [m], \quad
\E \Bigl[ \Leb_1 \bigl( \tilde{C}^\star_i(\bx_{T+1}) \bigr)^2 \Bigr]
\leq \E \Bigl[ \Leb_1 \bigl( \tilde{C}_i(\bx_{T+1}) \bigr)^2 \Bigr] \, .
\]
\end{restatable}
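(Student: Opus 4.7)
The plan is to replay, on a per-component basis, the scaling step sketched for Theorem~\ref{th:reduc-W}: I would expose, inside $\E\bigl[\Leb_1(\wt{C}_i(\bx_{T+1}))^2\bigr]$, a positive factor that does not depend on~$P$, multiplying $(P\Sigma P^{\transp})_{i,i}$, and then reduce the theorem to a componentwise minimum-trace inequality that is a direct application of Lemma~\ref{lm:mint}.

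First, I would note that every projection matrix $P$ onto $\Im(H)$ satisfies $PH=H$, hence $P\by_t = \by_t$ by coherence of the observations, so that the reconciled residuals reduce to $\wt{\bs}_t = \by_t - P\wh{\bmu}(\bx_t) = P\wh{\bs}_t$. Under Assumptions~\ref{ass:elliptic} and~\ref{ass:sigma}, I would write the centered raw residuals as $\wh{\bs}_t - \E[\wh{\bs}_t] = M\bz$ with $\bz$ spherical on $\R^k$, normalized so that $\E[\bz\bz^{\transp}]=\Id_k$; then $\Sigma = MM^{\transp}$ and, for every $i$, the centered $i$-th coordinate of $PM\bz$ is distributed, by sphericity, as $\sqrt{(P\Sigma P^{\transp})_{i,i}}\,U$, where $U$ is the common one-dimensional marginal of $\bz$, whose distribution does not depend on~$P$.

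Second, transposing the sketch of Theorem~\ref{th:reduc-W} to a single component, the length $\Leb_1(\wt{C}_i(\bx_{T+1}))$ factors as $\sqrt{(P\Sigma P^{\transp})_{i,i}}\,L_\alpha$, where $L_\alpha$ is the gap between the two prescribed order statistics among $T_{\calib}$ i.i.d.\ copies of $U$. Here the cancellation argument emphasized in the sketch is crucial: because signed (not absolute) scores are used, a translation of the $i$-th coordinates by their common mean leaves both order statistics unchanged, so $L_\alpha$ depends only on $U$, $\alpha$, and $T_{\calib}$, not on~$P$. Squaring and taking expectation then yields
\[
\E\bigl[\Leb_1(\wt{C}_i(\bx_{T+1}))^2\bigr] = \E[L_\alpha^2]\cdot (P\Sigma P^{\transp})_{i,i}\,,
\]
so the theorem reduces to proving $(P_{\Sigma^{-1}}\Sigma P_{\Sigma^{-1}}^{\transp})_{i,i} \leq (P\Sigma P^{\transp})_{i,i}$ for every $i\in[m]$ and every projection $P$ onto $\Im(H)$.

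This last inequality is Lemma~\ref{lm:mint} applied to the rank-one symmetric positive semi-definite weight matrix $W = e_i e_i^{\transp}$, for which $\Tr(WQ)=Q_{i,i}$ holds for every $m\times m$ matrix~$Q$. The main conceptual hurdle will be the decoupling step: arguing rigorously that $L_\alpha$, and hence the scalar $\E[L_\alpha^2]$, is identical for $P_{\Sigma^{-1}}$ and for any other projection~$P$, so that the entire dependence on~$P$ is carried by the diagonal entry $(P\Sigma P^{\transp})_{i,i}$. Once this decoupling is in place, the conclusion follows as an immediate instance of Lemma~\ref{lm:mint}.
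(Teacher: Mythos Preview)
Your proposal is correct and follows essentially the same approach as the paper: decouple the expected squared length, via the elliptical structure, into a $P$-independent factor $\E[L_\alpha^2]$ times $(P\Sigma P^{\transp})_{i,i}$, and then invoke Lemma~\ref{lm:mint}. Your direct application of Lemma~\ref{lm:mint} with the rank-one weight $W=e_ie_i^{\transp}$ is a mild streamlining of the paper's route, which first establishes the weighted inequality for all $\diag(\bw)$ with $\bw\in(0,+\infty)^m$ and then recovers the component-wise statement by letting $w_j\to 0$ for $j\neq i$.
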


\begin{restatable}{corollary}{cormint}
\label{cor:mint}
Under the setting and assumptions of
Theorem~\ref{th:mint}, more efficient prediction sets
are obtained than with
the ones $\hat{C}_i(\bx_{T+1})$ from the
plain component-wise version of SCP (Algorithm~\ref{alg:bench}):
\[
\forall i \in [m], \quad
\E \Bigl[ \Leb_1 \bigl( \tilde{C}^\star_i(\bx_{T+1}) \bigr)^2 \Bigr]
\leq \E \Bigl[ \Leb_1 \bigl( \hat{C}_i(\bx_{T+1}) \bigr)^2 \Bigr] \,.
\]
\end{restatable}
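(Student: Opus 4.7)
The plan is to follow the reduction that drives the sketch of Theorem~\ref{th:reduc-W}: express the squared lengths of both $\wh{C}_i$ and $\wt{C}_i^\star$ as the product of a common factor $\E[L_\alpha^2]$ and a diagonal entry of a suitable matrix, and then compare these diagonal entries. The only novelty with respect to Theorem~\ref{th:mint} is that the competitor $P = \Id_m$ underlying Algorithm~\ref{alg:bench} is not a projection onto $\Im(H)$, so Lemma~\ref{lm:mint} does not apply directly; a Loewner inequality between $\Sigma$ and $P_{\Sigma^{-1}} \Sigma P_{\Sigma^{-1}}^{\transp}$ will serve as a substitute.

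I would first exploit the coherence $\by_t \in \Im(H)$ together with the fact that $P_{\Sigma^{-1}}$ is a projection onto $\Im(H)$: $P_{\Sigma^{-1}} \by_t = \by_t$, hence $\wt{\bs}_t = \by_t - P_{\Sigma^{-1}} \wh{\bmu}(\bx_t) = P_{\Sigma^{-1}} \wh{\bs}_t$. By Assumption~\ref{ass:elliptic}, write $\wh{\bs}_t = \bc + M \bz_t$ with $\bz_t$ spherical. Rotation-invariance then implies that the $i$-th components of $\wh{\bs}_t$ and of $\wt{\bs}_t = P_{\Sigma^{-1}} \wh{\bs}_t$ have the same one-dimensional marginal distribution $\nu$ up to shifts and to scale factors equal respectively to $\sqrt{(MM^{\transp})_{i,i}}$ and $\sqrt{(P_{\Sigma^{-1}} MM^{\transp} P_{\Sigma^{-1}}^{\transp})_{i,i}}$. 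Reasoning exactly as in the sketch of Theorem~\ref{th:reduc-W} then yields
$$\E\bigl[\Leb_1\bigl(\wh{C}_i(\bx_{T+1})\bigr)^2\bigr] = \E[L_\alpha^2] \cdot (MM^{\transp})_{i,i}$$
and
$$\E\bigl[\Leb_1\bigl(\wt{C}_i^\star(\bx_{T+1})\bigr)^2\bigr] = \E[L_\alpha^2] \cdot (P_{\Sigma^{-1}} MM^{\transp} P_{\Sigma^{-1}}^{\transp})_{i,i}.$$

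It therefore suffices to prove $(P_{\Sigma^{-1}} MM^{\transp} P_{\Sigma^{-1}}^{\transp})_{i,i} \leq (MM^{\transp})_{i,i}$. Under Assumption~\ref{ass:sigma}, $MM^{\transp}$ is a strictly positive multiple of $\Sigma$, so the claim reduces to $(P_{\Sigma^{-1}} \Sigma P_{\Sigma^{-1}}^{\transp})_{i,i} \leq \Sigma_{i,i}$, itself a consequence of the Loewner inequality $\Sigma \succeq P_{\Sigma^{-1}} \Sigma P_{\Sigma^{-1}}^{\transp}$. A direct computation gives $P_{\Sigma^{-1}} \Sigma P_{\Sigma^{-1}}^{\transp} = H(H^{\transp} \Sigma^{-1} H)^{-1} H^{\transp}$, and setting $U = \Sigma^{-1/2} H$ yields
$$\Sigma - P_{\Sigma^{-1}} \Sigma P_{\Sigma^{-1}}^{\transp} = \Sigma^{1/2} \bigl( \Id_m - U(U^{\transp} U)^{-1} U^{\transp} \bigr) \Sigma^{1/2}\,,$$
whose middle factor is the orthogonal projector onto the orthogonal complement of $\Im(U)$; the whole expression is therefore positive semi-definite, and extracting its $(i,i)$-th entry concludes.

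The main --- and essentially only --- obstacle is the need for this Loewner substitute: Lemma~\ref{lm:mint} and Theorem~\ref{th:mint} compare $P_{\Sigma^{-1}}$ to other projections onto $\Im(H)$, whereas Algorithm~\ref{alg:bench} corresponds to the out-of-class choice $P = \Id_m$. Recasting the required diagonal inequality via the $\Sigma^{-1/2}$ change of coordinates above turns it into the non-negativity of an orthogonal projector, which closes the argument.
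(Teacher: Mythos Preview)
Your proof is correct and takes a genuinely different route from the paper's.

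The paper proves the corollary by chaining two results already established: Theorem~\ref{th:reduc-W} gives, for every $\bw \in (0,+\infty)^m$, the weighted inequality $\sum_i w_i\,\E[\Leb_1(\wt{C}_i^{\bw})^2] \leq \sum_i w_i\,\E[\Leb_1(\wh{C}_i)^2]$, and inequality~\eqref{eq:lastproofmint} from the proof of Theorem~\ref{th:mint} gives $\sum_i w_i\,\E[\Leb_1(\wt{C}_i^\star)^2] \leq \sum_i w_i\,\E[\Leb_1(\wt{C}_i^{\bw})^2]$. Concatenating these and then letting $\bw$ tend to a canonical basis vector yields the component-wise bound. You instead bypass both intermediate results and the limit argument: after the same reduction to diagonal entries of $\Gamma = MM^{\transp}$ versus $P_{\Sigma^{-1}}\Gamma P_{\Sigma^{-1}}^{\transp}$, you prove the Loewner inequality $\Sigma \succeq P_{\Sigma^{-1}}\Sigma P_{\Sigma^{-1}}^{\transp}$ directly via the change of variables $U = \Sigma^{-1/2}H$, recognizing the middle factor as an orthogonal projector. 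This Loewner inequality is strictly stronger than the trace inequalities of Lemmas~\ref{lm:Wproj} and~\ref{lm:mint} and gives every diagonal comparison at once. The paper's route has the virtue of reusing machinery already in place; yours is more self-contained and avoids the slightly awkward limiting step.
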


\paragraph{Practical implementation.}
Theorem~\ref{th:mint} motivated the introduction of
Algorithm~\ref{alg:sigma}.
Examples of functions $\cP$ used therein are listed below,
all of the form $\cP\bigl( \wh{\bmu}, \, (\bx_t,\by_t)_{t \in \mathcal{D}_{\estim}} \bigr)
= \cP'\bigl( \wh{\Sigma} \bigr)$,
where $\wh{\Sigma}$ was defined in~\eqref{eq:hatSigma} and
whose Moore-Penrose pseudo-inverse is considered.
These functions indeed return projection matrices onto $\Im(H)$,
see Lemma~\ref{lm:Wproj} in Appendix~\ref{app:reduc-W}:
\begin{center}
\begin{tabular}{cccc}
\toprule
 Nickname & Algorithms & Parameter & Expression \\
 \midrule
  Direct & Alg.~\ref{alg:bench} & -- & -- \\
  OLS & Alg.~\ref{alg:main} & $P_{\bone}$ &  $H \bigl( H^{\transp} H \bigr)^{-1} H^{\transp}$ \\
  WLS & Alg.~\ref{alg:sigma} & $\cP'_{\wls} \bigl( \wh{\Sigma} \bigr)$ & $H \bigl( H^{\transp} \Diag\bigl(\wh{\Sigma}\bigr)^{-1} H \bigr)^{-1} H^{\transp} \Diag\bigl(\wh{\Sigma}\bigr)^{-1}$ \\
  MinT & Alg.~\ref{alg:sigma} & $\cP'_{\mint} \bigl( \wh{\Sigma} \bigr)$ & $H \bigl( H^{\transp} \wh{\Sigma}^{-1} H \bigr)^{-1} H^{\transp} \wh{\Sigma}^{-1}$ \\
 Combi & Alg.~\ref{alg:sigma} & $\cP'_{\combi}\bigl( \wh{\Sigma} \bigr)$ & $\frac{1}{3} \bigl( P_{\bone} + \cP'_{\wls} \bigl( \wh{\Sigma} \bigr) + \cP'_{\mint} \bigl( \wh{\Sigma} \bigr) \bigr)$  \\
 \bottomrule
\end{tabular}
\end{center}

%
$\cP'_{\mint}$ mimics the expression for $P_{\Sigma^{-1}}$ in Theorem~\ref{th:mint},
corresponding to the minimum-trace [MinT] projection.
When data is scarce, the estimates $\wh{\Sigma}$ may be poor, and
a more robust approach is to consider only the associated diagonal matrices
$\Diag\bigl(\wh{\Sigma}\bigr)$; this corresponds to some data-based weighted least squares [WLS],
as pointed out by \citet{hyndman2016fast}.
Another alternative is to consider the orthogonal projection onto $\Im(H)$,
whose closed-form expression is $P_{\bone}$ in~\eqref{def:Pw},
with $\bone = (1,\ldots,1)^{\transp}$;
it performs an ordinary least squares [OLS] approach, and corresponds to a constant function $\cP'_{\ols}$ when implemented in Algorithm~\ref{alg:sigma}.
Finally, another robust approach could be to use a combination [Combi]
of the $\cP$ functions defined above, as suggested by \citet{hollyman2021understanding}.



\section{Experiments on synthetic data}
\label{sec:simu}

We provide detailed numerical experiments on synthetic hierarchical i.i.d.\ data
in Appendix~\ref{app:simu}.

\begin{remark}
Real-world hierarchical i.i.d.\ data would include survey data
with hierarchically structured answers (e.g., household budget surveys, where expenditures are decomposed across various categories\footnote{As in \url{https://ec.europa.eu/eurostat/web/microdata/household-budget-survey}}); however, due to confidentiality issues,
we could not get access to a sufficient amount of such data.
We thank a reviewer for pointing out an interesting potential application with accessible hierarchical i.i.d.\ data, namely, image processing through downsampling and upsampling; we let interested readers work it out.
That being said, other examples of accessible hierarchical data are formed by some time series, where the assumption of i.i.d.\ residuals is particularly strong, as it is model-dependent, and therefore
do not fall into the scope of the present contribution;
additional methodological developments are required to tackle them, see Section~\ref{sec:lim}.
\end{remark}

Given our data generation process, we consider base forecasts given by generalized additive models learned independently on each component (details provided in Appendix~\ref{sec:GAM}).
We compare the performance achieved by the algorithms presented in this article, both
in terms of coverage and efficiency. The target coverage is $1-\alpha = 90\%$ for joint coverage and for all component-wise
coverages. As expected, all algorithms do achieve the required coverage level for all configurations tested, which is why we do not detail these coverage
results in the main body of this article.

We consider 6 hierarchical configurations (referred to as Configurations 1--6, see
Appendix~\ref{app:datageneration} for details), ordered by increasing
complexity, with total numbers $m$ of nodes ranging from 16 to 1,801 (and with depths~3 or~4).
We consider $T = 10^6$ observations and generate $N = 10^3$ runs for each configuration--algorithm pair.
In the tables below,
we compute (normalized) empirical averages of the volumes of the ellipsoidal sets output
by Algorithms~\eqref{eq:alg:messoudi} and~\eqref{eq:alg:reconciled messoudi},
and root-empirical averages $\sqrt{\ol{L}_\bullet}$ of the uniform total lengths output
by Algorithms~\ref{alg:bench}--\ref{alg:main}--\ref{alg:sigma}: by indexing
the outcomes of runs by~$(r)$,
\[
\ol{L}_\bullet =
\frac{1}{N} \sum_{r=1}^{N} \sum_{i=1}^{m} \Bigl( \Leb_1 \bigl( \wt{C}^{(r)}_i(\,\cdot\,) \bigr) \Bigr)^2\,;
\]
the lengths of the intervals $\wt{C}_i(\bx)$ do not depend on $\bx$, hence the notation $\Leb_1 \bigl( \wt{C}^{(r)}_i(\,\cdot\,) \bigr)$.
These root-empirical averages are reported in the tables below with $\pm \sqrt{1.96 \times \text{standard errors}}$.
Again, complete details may be found in Appendix~\ref{app:simu}.
Note that our generation process involves a broad family of distributions,
which explains why the standard errors reported below are relatively large.

\textbf{SCP for joint coverage based on ellipsoidal sets.}
The table illustrates the second part of Theorem~\ref{th:coverage joint_informal} with a proper choice of $A$:
Algorithm~\eqref{eq:alg:reconciled messoudi} provides smaller prediction ellipsoids than Algorithm~\eqref{eq:alg:messoudi}. We report here empirical averages of the normalized volumes for the typical matrix $A = \hat{\Sigma}^{-1}$, but two other choices are considered in Appendix~\ref{app:simu}.
\begin{center}
\begin{tabular}{ccccccc}
\toprule
Matrix $A$ & Config. & Alg.~\eqref{eq:alg:messoudi} & Alg.~\eqref{eq:alg:reconciled messoudi}  \\
\midrule
 \multirow{2}{0.8cm}{$\hat\Sigma^{-1}$} & 1 & 17.4 $\pm$ 0.6 & \textbf{16.5 $\pm$ 0.55} \\
 & 2 & 3.36 $\pm$ 0.12 & \textbf{3.19 $\pm$ 0.11}\\
\bottomrule
\end{tabular}
\end{center}

\textbf{Component-wise SCP.}
The table below illustrates Theorems~\ref{th:reduc-W} and~\ref{th:mint}, with the specific algorithms described
in Section~\ref{sec:algo-sigma}; it reports the root-empirical averages $\sqrt{\ol{L}_\bullet}$
with $\pm \sqrt{1.96 \times \text{standard errors}}$.
The best and second best intervals of each line are overlapping and no statistically significant
differences in performance between them are exhibited with the number $N = 10^3$ of runs considered.
\newcommand{\tablegloballength}[1]{
\begin{center}
\begin{tabular}{cccccc}
\toprule
Config. & Direct & OLS & WLS & Combi & MinT \\
\midrule
1 & 876 $\pm$ 254 & 787 $\pm$ 226 & 322 $\pm$ 131 & 364 $\pm$ 101 & \textbf{216 $\pm$ 47} \\
2 & 871 $\pm$ 253 & 753 $\pm$ 216 & 308 $\pm$ 116 & 361 $\pm$ 92 & \textbf{246 $\pm$ 51} \\
3 & 3032 $\pm$ 467 & 2954 $\pm$ 455 & 1869 $\pm$ 377 & 1758 $\pm$ 395 & \textbf{1502 $\pm$ 578} \\
4 & 3036 $\pm$ 479 & 2901 $\pm$ 458 & 1581 $\pm$ 340 & 1604 $\pm$ 349 & \textbf{1404 $\pm$ 571} \\
5 & 10424 $\pm$ 885 & 10358 $\pm$ 880 & \textbf{8861 $\pm$ 853} & 9664 $\pm$ 850 & 10613 $\pm$ 918 \\
6 & 10621 $\pm$ 889 & 10460 $\pm$ 875 & \textbf{7673 $\pm$ 785} & 9068 $\pm$ 806 & 10503 $\pm$ 905 \\
\bottomrule
\end{tabular}
\end{center}
Three algorithms perform uniformly better than the benchmark algorithm Direct, namely:
WLS (with reductions in total lengths in the 15\% -- 65\% range)
and to a smaller extent, Combi and OLS.
Algorithm MinT has a dual behavior and is somewhat unreliable: it is the most efficient one for the smallest hierarchies but performs
worse than the benchmark Direct for the largest hierarchies. {#1}}
\tablegloballength{These limitations of MinT,
and the robustness of WLS, are further discussed in Appendix~\ref{app:simu}.}

\section{Discussion and future work}
\label{sec:lim}

In this article, we combined conformal prediction and forecast reconciliation, thereby unifying distribution-free uncertainty quantification with the structural information encoded in hierarchical data.

Our theoretical analysis was conducted under a favorable setting, assuming i.i.d.\ non-conformity scores drawn from an elliptical distribution. For our strongest results, we further assumed that the covariance matrix of the scores was known.
Although simplified, this setting captures key structural properties that are expected to remain relevant in practical applications.
We therefore believe that the principles and efficiency gains demonstrated here will translate empirically to applied forecasting contexts.

In practice, many natural applications of the proposed framework arise in hierarchical time series forecasting, where temporal dependence typically violates the i.i.d.\ assumption on non-conformity scores.
One must therefore rely on techniques designed for non-exchangeable data.
Extending the present framework to hierarchical time series is a promising and challenging direction for future work.
A natural avenue consists in leveraging the adaptive conformal inference [ACI] framework of \citet{gibbs2021adaptive} and its recent extensions by \citet{zaffran2022adaptive} and \citet{gibbs2024conformal}, which relax exchangeability by incorporating temporal dynamics while maintaining long-term coverage guarantees. Another interesting direction is to consider dynamic hierarchies, where nodes may be added or removed over time.

\subsubsection*{Acknowledgments}
This work was supported by Défi Inria--EDF.

\bibliographystyle{tmlr}
\bibliography{tmlr}

\appendix
\section*{Appendices}

The appendices provide detailed proofs of all claims
made in the main body, as well as some background material.

We organized them so that the most important material,
related to the efficiency results for hierarchical component-wise SCP
(Theorems~\ref{th:reduc-W} and~\ref{th:mint}),
comes first. Standard proofs, like the ones for coverage guarantees,
are provided later. Numerical experiments conclude the appendices.

More precisely:
\begin{itemize}[itemsep=2pt]
\item[--] Appendix~\ref{app:reduc-W} provides a proof of the efficiency result
for fixed weights $\bw$ (Theorem~\ref{th:reduc-W}), and does so
by providing first some background on elliptical distributions
as well as a first trace-minimization result.
\item[--] Appendix~\ref{app:mint} proves the stronger
component-wise efficiency results stated in Theorem~\ref{th:mint} and Corollary~\ref{cor:mint},
and does so by first proving a central trace-minimization result known as
the minimum-trace projection lemma (Lemma~\ref{lm:mint} of Section~\ref{sec:algo-sigma}, whose
elementary proof is of independent interest).
\item[--] Appendix~\ref{app:innov} provides some background on the theory of forecast reconciliation
and discusses the challenges overcome when connecting this literature to conformal prediction.
\item[--] Appendix~\ref{app:messoudi} proves in a straightforward manner the efficiency results
for hierarchical SCP for joint coverage (Theorem~\ref{th:coverage joint_informal}),
which illustrates, again, the challenges overcome when obtaining
efficiency results for component-wise SCP.
\item[--] Appendix~\ref{app:coverage} formally states and proves all
coverage results (see Theorems~\ref{th:coverage joint_informal} and~\ref{th:coverage}); the proofs are extremely standard and are only
provided for the sake of completeness.
\item[--] Finally, Appendix~\ref{app:simu} provides detailed and extensive
numerical experiments on synthetic data.
\end{itemize}

\section{Proof of the efficiency result for fixed weights \texorpdfstring{$\bw$}{w} (Theorem~\ref{th:reduc-W})}
\label{app:reduc-W}

We restate all definitions, facts, etc., as well as Theorem~\ref{th:reduc-W},
so that this appendix is fully self-contained and may be read without
reading back the main body of the article.

\subsection{Background on elliptical distributions}

We first recall the definition of elliptical distributions and
then state some elementary properties thereof.

\defsph*

\defellip*

\exellipt

\begin{property}
\label{prop:ellipt}
The marginals of a spherical distribution are identically distributed.
A spherical distribution with a first-order moment is centered: $\E[\bz] = \bzero$.
A spherical distribution with a second-order moment has a covariance matrix
proportional to the identity: there exists $\sigma^2 \in [0,+\infty)$ such
that $\E\bigl[ \bz \bz^{\transp} \bigr] = \sigma^2 \, \Id_k$.
\end{property}
\begin{proof}
The first property is proved by considering permutation matrices~$\Gamma$.
The second property holds because $\bu = \bzero$ is the only vector $\bu \in \R^k$
such that $\Gamma \bu = \bu$ for all orthogonal matrices (first consider permutation
matrices to get that all components of $\bu$ are equal).
For the third property, denote by $\Sigma$ the covariance matrix of $\bz$.
Since it is symmetric (positive semi-definite), there exists an orthogonal matrix $\Gamma$
and a vector $\blambda \in \R^k$ (with non-negative elements) such
that $\Gamma \Sigma \Gamma = \diag(\blambda)$. Now, $\Gamma^{\transp}\bz$
has the same distribution as $\bz$, thus their covariance
matrices are equal, which shows that $\Sigma = \diag(\blambda)$.
As marginals have the same distribution, we finally get $\Sigma = \sigma^2 \Id_k$
for some $\sigma^2 \in [0,+\infty)$, which is actually positive except
if the distribution of $\bz$ is a Dirac at $\bzero$.
\end{proof}

A slightly more advanced result provides the form of the characteristic function of an elliptical distribution.
Its proof is based on first showing that characteristic functions of spherical distributions
are exactly of the form $\bu \mapsto \phi \bigl( \bu^{\transp} \bu \bigr)$,
which is consistent with the fact that spherical distributions are centered.
Actually, it may be seen that $\phi$ is the characteristic function of the common distribution
of the marginals of $\bz$.

\begin{lemma}[{\citealp[Theorem~2.3.5]{kollo2005advanced}}]
\label{lm:charact}
Consider a random variable following an elliptical distribution over $\R^m$,
of the form $\bc + M \bz$,
for a deterministic vector $\bc \in \R^m$,
a $m \times k$ matrix $M$ such that $M M^{\transp}$ has
rank $k$, and a random vector
$\bz$ following a spherical distribution over $\R^k$.
The characteristic function of $\bc + M \bz$ is of the form
\[
\forall \bu \in \R^m, \qquad
\E \Bigl[ \exp\bigl( \i \bu^{\transp} (\bc + M \bz) \bigr) \Bigr]
= \exp\bigl( \i \bu^{\transp} \bc\bigr) \,\, \phi \bigl( \bu^{\transp} M M^{\transp} \bu \bigr)\,,
\]
for some function $\phi : \R \to \C$ that only depends on the distribution of $\bz$.
\end{lemma}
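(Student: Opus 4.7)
The plan is to reduce the claim to a statement about the characteristic function of $\bz$ alone, and then exploit rotational invariance to show that this characteristic function depends on its argument only through its squared Euclidean norm. First, I would use linearity of the inner product to write $\bu^{\transp}(\bc + M\bz) = \bu^{\transp}\bc + (M^{\transp}\bu)^{\transp}\bz$, and pull the deterministic factor $\exp(\i \bu^{\transp}\bc)$ out of the expectation, yielding
\[
\E\bigl[\exp(\i \bu^{\transp}(\bc + M\bz))\bigr] = \exp(\i \bu^{\transp}\bc)\,\psi_{\bz}(M^{\transp}\bu),
\]
where $\psi_{\bz} : \bop{v} \in \R^k \mapsto \E[\exp(\i \bop{v}^{\transp}\bz)]$ denotes the characteristic function of $\bz$. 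It then remains to prove that $\psi_{\bz}(\bop{v})$ depends on $\bop{v}$ only through $\bop{v}^{\transp}\bop{v}$.

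For this, I would appeal directly to sphericity. For any orthogonal $k \times k$ matrix $\Gamma$, the vectors $\bz$ and $\Gamma \bz$ are identically distributed, so
\[
\psi_{\bz}(\bop{v}) = \E\bigl[\exp(\i \bop{v}^{\transp}\Gamma\bz)\bigr] = \E\bigl[\exp(\i (\Gamma^{\transp}\bop{v})^{\transp}\bz)\bigr] = \psi_{\bz}(\Gamma^{\transp}\bop{v}).
\]
Next, for any two vectors in $\R^k$ sharing the same Euclidean norm, a standard linear-algebra fact (e.g., via a Householder reflection, or by completing each into an orthonormal basis) provides an orthogonal matrix $\Gamma$ mapping one to the other. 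The previous display then forces $\psi_{\bz}$ to be constant on every sphere centered at $\bzero$, which yields the existence of $\phi : [0, +\infty) \to \C$, depending solely on the distribution of $\bz$ (extend $\phi$ arbitrarily to $\R$), such that $\psi_{\bz}(\bop{v}) = \phi(\bop{v}^{\transp}\bop{v})$.

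Substituting $\bop{v} = M^{\transp}\bu$ into the first display then gives the claimed identity $\exp(\i \bu^{\transp}\bc)\,\phi(\bu^{\transp}MM^{\transp}\bu)$. I do not expect any genuine obstacle here: the argument is entirely elementary, and I note that the rank-$k$ assumption on $MM^{\transp}$ plays no role in this computation — it merely ensures non-degeneracy of the elliptical distribution in $\R^m$ (so that the representation $\bc + M\bz$ is minimal in some sense), rather than being needed to establish the functional form of the characteristic function itself.
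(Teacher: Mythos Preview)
Your proof is correct and follows the same approach sketched in the paper: the lemma is cited from \citet[Theorem~2.3.5]{kollo2005advanced}, and the paper only notes that the proof proceeds by first showing that characteristic functions of spherical distributions are of the form $\bop{v} \mapsto \phi(\bop{v}^{\transp}\bop{v})$, which is precisely what you establish via the orthogonal-invariance argument. The paper additionally remarks that $\phi$ may be identified with the characteristic function of the common marginal distribution of $\bz$ (by taking $\bop{v}$ along a coordinate axis), but this is not part of the lemma statement and your omission of it is immaterial.
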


Lemma~\ref{lm:charact} is instrumental in showing that the
normalized marginals of (a linear transformation of)
an elliptical distribution have comparable univariate distributions
(that are homothetical), as stated next.

\begin{lemma}
\label{lm:marginals}
With the setting and the notation of Lemma~\ref{lm:charact},
let $N$ be any $m \times m$ matrix
and consider the random vector $\bs = N(\bc + M \bz)$.
Let $\Lambda = N M M^{\transp} N^{\transp}$.
There exists a random variable $v$, following a univariate distribution induced by
the spherical distribution of $\bz$, such that
\[
\forall i \in [m], \qquad
s_i - \E[s_i] \eqdistr \sqrt{\Lambda_{i,i}} \,\, v\,.
\]
\end{lemma}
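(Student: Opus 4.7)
\emph{Proof proposal.}~~The whole argument will be carried out at the level of characteristic functions, using Lemma~\ref{lm:charact} twice: once to obtain the characteristic function of $\bs$, and once to identify a candidate univariate distribution for $v$.

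First, I observe that $\bs = N\bc + (NM)\bz$ is itself of the elliptical form described in Lemma~\ref{lm:charact}, with deterministic translation vector $N\bc$ and mixing matrix $NM$ (the rank condition $NMM^{\transp}N^{\transp}$ of rank equal to the width of $NM$ need not concern us: Lemma~\ref{lm:charact} is only used to read off the characteristic function, and the computation
$\E[\exp(\i \bu^{\transp} \bs)] = \exp(\i \bu^{\transp} N\bc)\,\phi(\bu^{\transp} NMM^{\transp}N^{\transp}\bu) = \exp(\i \bu^{\transp} N\bc)\,\phi(\bu^{\transp} \Lambda \bu)$
is valid regardless, by linearity of the inner product). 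Since $\E[s_i]$ appears in the statement, the first-order moment of $\bz$ exists, so Property~\ref{prop:ellipt} gives $\E[\bz] = \bzero$ and hence $\E[\bs] = N\bc$.

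Next, I specialize $\bu = u \, \bop{e}_i$ for $u \in \R$, where $\bop{e}_i$ denotes the $i$-th canonical basis vector of $\R^m$. This yields the characteristic function of $s_i - \E[s_i] = (NM\bz)_i$, namely
\[
\E\!\bigl[\exp\bigl(\i u (s_i - \E[s_i])\bigr)\bigr] = \phi\bigl(u^2 \Lambda_{i,i}\bigr),
\]
since $\bop{e}_i^{\transp} \Lambda \bop{e}_i = \Lambda_{i,i}$.

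To identify $v$, I apply Lemma~\ref{lm:charact} to $\bz$ itself (i.e., $\bc = \bzero$ and $M = \Id_k$), which tells me that every marginal $z_j$ of $\bz$ has characteristic function $u \mapsto \phi(u^2)$. I then take $v$ to have the (common, by Property~\ref{prop:ellipt}) distribution of these marginals. The characteristic function of $\sqrt{\Lambda_{i,i}}\,v$ is $u \mapsto \phi\bigl(\Lambda_{i,i}\, u^2\bigr)$, which coincides with the one computed above. By uniqueness of characteristic functions, $s_i - \E[s_i] \eqdistr \sqrt{\Lambda_{i,i}}\,v$, concluding the proof. There is no real obstacle here: the only subtle point is to notice that the same function $\phi$ governs both the marginals of $\bz$ and the marginals of any linear image of $\bc + M\bz$, which is exactly what Lemma~\ref{lm:charact} encodes.
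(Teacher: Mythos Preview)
Your proof is correct and follows essentially the same approach as the paper: compute the characteristic function of $\bs - \E[\bs]$ via Lemma~\ref{lm:charact}, specialize to each coordinate to obtain $u \mapsto \phi(\Lambda_{i,i}\,u^2)$, and take $v$ to be any random variable with characteristic function $\phi$. You are slightly more explicit than the paper in justifying $\E[\bs] = N\bc$ and in noting that the rank hypothesis on $NM$ is not needed for the characteristic-function computation, and you identify $v$ concretely as a marginal of $\bz$ (which the paper mentions in passing before the lemma but omits from the proof proper).
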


\begin{proof}
By Lemma~\ref{lm:charact},
the characteristic function of $\bs - \E[\bs]$ is
$\bu \in \R^m \mapsto \phi \bigl( \bu^{\transp} \Lambda \bu \bigr)$.
Thus, the characteristic function of each $s_i - \E[s_i]$
is $u \in \R \mapsto \phi( \Lambda_{i,i} u^2)$.
This shows the stated result, for a random variable $v$ with characteristic
function~$\phi$.
\end{proof}

\subsection{Proof of Theorem~\ref{th:reduc-W}}
\label{app:A:threducW}

We first prove
that the matrix $P_{\bw}$ introduced in Theorem~\ref{th:reduc-W}
(re-stated below) is well defined.

\begin{restatable}{lemma}{lmHHT}
\label{lm:HHT}
The matrices $H^{\transp} H$ and $H^{\transp} W H$ are $n \times n$ symmetric positive definite matrices, where
$W$ is itself a $m \times m$ symmetric positive definite matrix. Thus, these matrices are invertible.
\end{restatable}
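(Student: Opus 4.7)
The plan is to exploit the block structure of $H$: since the top $n \times n$ block is $\Id_n$, the matrix $H$ has full column rank $n$. Indeed, if $\bu \in \R^n$ satisfies $H \bu = \bzero$, then reading the first $n$ components of $H \bu$ directly yields $\bu = \bzero$. This injectivity of $\bu \mapsto H \bu$ is the only structural property of $H$ that will be needed.

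I would handle both matrices in parallel. Symmetry is immediate: $(H^{\transp} H)^{\transp} = H^{\transp} H$, and $(H^{\transp} W H)^{\transp} = H^{\transp} W^{\transp} H = H^{\transp} W H$ by symmetry of $W$. For definiteness, pick any $\bu \in \R^n$ and write
\[
\bu^{\transp} H^{\transp} H \bu = \lVert H \bu \rVert^2 \geq 0
\qquad \mbox{and} \qquad
\bu^{\transp} H^{\transp} W H \bu = (H \bu)^{\transp} W (H \bu) \geq 0,
\]
the latter because $W$ is positive definite, hence positive semi-definite. Each expression equals zero if and only if $H \bu = \bzero$ (again using positive definiteness of $W$ for the second one), which by full column rank of $H$ forces $\bu = \bzero$. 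Hence both matrices are positive definite, and in particular invertible.

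Essentially there is no obstacle here: the proof is a two-line linear-algebra exercise once the full column rank of $H$ is noted. The only point to be slightly careful about is that the conclusion $(H\bu)^{\transp} W (H\bu) = 0 \Rightarrow H\bu = \bzero$ requires positive definiteness of $W$, not merely positive semi-definiteness; this is used to justify why the result is stated with $W$ positive definite (and the same statement fails in general for $W$ only positive semi-definite).
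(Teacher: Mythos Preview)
Your proof is correct and follows essentially the same approach as the paper: both exploit the block form of $H$ (the $\Id_n$ on top) to conclude that $H\bu = \bzero$ forces $\bu = \bzero$, and both then argue via the quadratic form. The only cosmetic difference is that the paper handles $H^{\transp}H$ by the explicit computation $H^{\transp}H = \Id_n + \subH^{\transp}\subH$ and treats $H^{\transp}WH$ via a factorization $W = N^{\transp}N$ with $N$ invertible, whereas you give a single uniform argument using positive definiteness of $W$ directly; your version is arguably a touch cleaner.
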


\begin{proof}
As indicated in Section~\ref{sec:setting}, the structural matrix~$H$ is of the form
\begin{equation}
\label{eq:H-cstr}
H = \begin{bmatrix} \Id_n \\ \subH \end{bmatrix}
\end{equation}
where $\subH$ is any $(m-n) \times n$ matrix of real numbers.
This entails that $H^{\transp} H = \Id_n + \subH^{\transp} \subH$,
where $\subH^{\transp} \subH$ is symmetric positive semi-definite. Thus, $H^{\transp} H$ is
symmetric positive definite. Given it is symmetric positive definite, the
matrix $W$ may be decomposed as $W = N^{\transp}N$ for some $m \times m$ invertible matrix~$N$.
The matrix $H^{\transp} W H = (NH)^{\transp} NH$ is symmetric positive semi-definite.
We show that it is even symmetric positive definite: $\bu^{\transp} (NH)^{\transp} NH \bu = 0$
is equivalent to the standard Euclidean norm of $NH \bu$ being null, thus to $H\bu = \bzero$
(as $N$ is invertible); given the form~\eqref{eq:H-cstr} of $H$,
we conclude that $\bu^{\transp} (NH)^{\transp} NH \bu = 0$ is equivalent to $\bu = \bzero$,
which is the definition of $H^{\transp} W H = (NH)^{\transp} NH$ being definite.
\end{proof}

We are now ready to actually prove Theorem~\ref{th:reduc-W},
which we restate first, together with its key assumption.
Assumption~\ref{ass:iid} is that the residuals considered in
Assumption~\ref{ass:elliptic} are i.i.d.

\assell*

\threducW*

\begin{proof}
The matrix $P_{\bw}$ satisfies $P_{\bw} H = H$,
i.e., $P_{\bw}$ leaves elements of $\Im(H)$ unchanged.
Since observations $\by_t$ are coherent, we have, for all $t \in \cD_{\calib}$,
\[
\tilde{\bs}_t \eqdef \by_t - P_{\bw} \hat{\by}_t = P_{\bw} \bigl( \by_t - \hat{\by}_t \bigr) = P_{\bw} \, \hat{\bs}_t\,.
\]
We let, for all $t \in \cD_{\calib}$ and $i \in [m]$,
\[
\hat{\xi}_{t,i} = \hat{s}_{t,i} - \E\bigl[\hat{s}_{1,i}\bigr]
\qquad \mbox{and} \qquad
\tilde{\xi}_{t,i} = \tilde{s}_{t,i} - \E\bigl[\tilde{s}_{1,i}\bigr]\,.
\]
By Assumption~\ref{ass:iid} (i.i.d.\ scores), for each $i \in [m]$,
the univariate random variables $\hat{\xi}_{t,i}$, where $t \in \mathcal{D}_{\calib}$ are i.i.d.;
a similar statement holds for the $\tilde{\xi}_{t,i}$, where $t \in \mathcal{D}_{\calib}$.
By Assumption~\ref{ass:elliptic} and Lemma~\ref{lm:marginals},
there exist a matrix $\Gamma$ of the form $\Gamma = M M^{\transp}$ and a random variable $v$
such that, for each $i \in [m]$,
\[
\hat{\xi}_{t,i} \eqdistr \sqrt{\Gamma_{i,i}}\,\,v
\qquad \mbox{and} \qquad
\tilde{\xi}_{t,i} \eqdistr \sqrt{\Gamma'_{i,i}}\,\,v \,,
\qquad \mbox{where} \qquad \Gamma' = P_{\bw} \Gamma P_{\bw}^{\transp}\,.
\]
Let $(v_t)_{t \in \mathcal{D}_{\calib}}$ be i.i.d.\ random variables with the same distribution as $v$.
We conclude from the facts above that for each $i \in [m]$,
\[
\bigl( \hat{s}_{t,i} \bigr)_{t \in \mathcal{D}_{\calib}} \eqdistr
\Bigl( \E\bigl[\hat{s}_{1,i}\bigr] + \sqrt{\Gamma_{i,i}}\,\,v_t \Bigr)_{t \in \mathcal{D}_{\calib}}
~~~ \mbox{and}~~~~
\bigl( \tilde{s}_{t,i} \bigr)_{t \in \mathcal{D}_{\calib}} \eqdistr
\Bigl( \E\bigl[\tilde{s}_{1,i}\bigr] + \sqrt{\Gamma'_{i,i}}\,\,v_t \Bigr)_{t \in \mathcal{D}_{\calib}}\,.
\]
The same equalities in distributions hold for the corresponding order statistics: for each $i \in [m]$,
\begin{align*}
\bigl( \hat{s}_{(t),i} \bigr)_{1 \leq t \leq T_{\calib}} &\eqdistr
\Bigl( \E\bigl[\hat{s}_{1,i}\bigr] + \sqrt{\Gamma_{i,i}}\,\,v_{(t)} \Bigr)_{1 \leq t \leq T_{\calib}} \\
\mbox{and} \qquad
\bigl( \tilde{s}_{(t),i} \bigr)_{1 \leq t \leq T_{\calib}} &\eqdistr
\Bigl( \E\bigl[\tilde{s}_{1,i}\bigr] + \sqrt{\Gamma'_{i,i}}\,\,v_{(t)} \Bigr)_{1 \leq t \leq T_{\calib}}\,.
\end{align*}
By following the conventions of Section~\ref{sec:algo} and letting $v_{(0)} = -\infty$ and $v_{(T_{\calib}+1)} = +\infty$,
we even have these equalities in distribution over vectors indexed by $0 \leq t \leq T_{\calib}+1$:
for each $i \in [m]$,
\begin{align*}
\bigl( \hat{s}_{(t),i} \bigr)_{0 \leq t \leq T_{\calib}+1} &\eqdistr
\Bigl( \E\bigl[\hat{s}_{1,i}\bigr] + \sqrt{\Gamma_{i,i}}\,\,v_{(t)} \Bigr)_{0 \leq t \leq T_{\calib}+1} \\
\mbox{and} \qquad
\bigl( \tilde{s}_{(t),i} \bigr)_{0 \leq t \leq T_{\calib}+1} &\eqdistr
\Bigl( \E\bigl[\tilde{s}_{1,i}\bigr] + \sqrt{\Gamma'_{i,i}}\,\,v_{(t)} \Bigr)_{0 \leq t \leq T_{\calib}+1}\,.
\end{align*}
Now, for each $i \in [m]$, by design of Algorithms~\ref{alg:bench} and~\ref{alg:main},
the lengths of the intervals $\hat{C}_i(\bx_{T+1})$ and $\tilde{C}_i(\bx_{T+1})$
output are
\begin{align*}
\Leb_1 \bigl( \hat{C}_i(\bx_{T+1}) \bigr) & =
\whs_{\big(\lceil(T_{\calib} +1)(1-\alpha/2)\rceil\big),i} - \whs_{\big(\lfloor(T_{\calib} +1)\alpha/2\rfloor\big),i} \\
\mbox{and} \qquad
\Leb_1 \bigl( \tilde{C}_i(\bx_{T+1}) \bigr) & =
\wts_{\big(\lceil(T_{\calib} +1)(1-\alpha/2)\rceil\big),i} - \wts_{\big(\lfloor(T_{\calib} +1)\alpha/2\rfloor\big),i}\,.
\end{align*}
Thus, letting
$L_\alpha = v_{\big(\lceil(T_{\calib} +1)(1-\alpha/2)\rceil\big)} - v_{\big(\lfloor(T_{\calib} +1)\alpha/2\rfloor\big)}$,
where $L_\alpha \geq 0$ a.s.,
we finally proved
\[
\forall i \in [m], \qquad
\Leb_1 \bigl( \hat{C}_i(\bx_{T+1}) \bigr) \eqdistr \sqrt{\Gamma_{i,i}} \,\, L_\alpha
\quad \ \ \mbox{and} \quad \ \
\Leb_1 \bigl( \tilde{C}_i(\bx_{T+1}) \bigr) \eqdistr \sqrt{\Gamma'_{i,i}} \,\, L_\alpha\,.
\]
We showed so far that
\begin{align*}
\E \! \left[ \sum_{i=1}^m w_i \, \Leb_1 \bigl( \hat{C}_i(\bx_{T+1}) \bigr)^2 \right]
&= \left( \sum_{i=1}^m w_i \, \Gamma_{i,i} \right) \E\bigl[L^2_\alpha\bigr] \\
\mbox{and} \qquad
\E \! \left[ \sum_{i=1}^m w_i \, \Leb_1 \bigl( \tilde{C}_i(\bx_{T+1}) \bigr)^2 \right]
&= \left( \sum_{i=1}^m w_i \, \Gamma'_{i,i} \right) \E\bigl[L^2_\alpha\bigr] \,,
\end{align*}
where $\E\bigl[L^2_\alpha\bigr] \geq 0$ is possibly infinite (in which case the stated result holds).
The proof is concluded in the case $\E\bigl[L^2_\alpha\bigr] < +\infty$ by noting that
\[
\Tr \bigl( \diag(\bw) \, \Gamma \bigr) =
\sum_{i=1}^m w_i \, \Gamma_{i,i} \geq \sum_{i=1}^m w_i \, \Gamma'_{i,i}
= \Tr \bigl( \diag(\bw) \, \Gamma' \bigr) = \Tr \bigl( \diag(\bw) \, P_{\bw} \Gamma P_{\bw}^{\transp} \bigr)\,,
\]
which is guaranteed by the lemma below with $W = \diag(\bw)$, since $\Gamma = M M^{\transp}$ for some $m \times k$ matrix.
\end{proof}

The first part of Lemma~\ref{lm:Wproj} is elementary.
Its second part is inspired by~\citet[Theorem~3.2]{panagiotelis2021forecast}, which is a result about using orthogonal
projections in the $\Arrowvert \cdot \Arrowvert_{W}$--norm to derive distance-reducing properties,
and by trace-minimization results that are classic in the literature of forecast reconciliation
(like Lemma~\ref{lm:mint} of Section~\ref{sec:algo-sigma}).
We however see this second part as a new result of our own.
See Appendix~\ref{app:review:reconc}, and in particular, the comments after~\eqref{eq:lmWproj-expl},
for more details on the challenges overcome when connecting the theory of
forecast reconciliation to the one of conformal learning.

\begin{restatable}{lemma}{lmWproj}
\label{lm:Wproj}
Fix a symmetric positive definite matrix $W$ and consider the associated inner product and induced norm
\[
\bu, \bu' \in \R^m \longmapsto \langle \bu, \, \bu' \rangle_W = \sqrt{\bu^{\transp} W \bu'}
\qquad \mbox{and} \qquad
\bu \in \R^m \longmapsto \Arrowvert \bu \Arrowvert_{W} \eqdef \sqrt{\bu^{\transp} W \bu}\,.
\]
Then, $P_{W} \eqdef H \bigl( H^{\transp} W H \bigr)^{-1} H^{\transp} W$
is the orthogonal projection onto $\Im(H)$ in the $\Arrowvert \cdot \Arrowvert_W$--norm.

Furthermore,  for all $m \times k$ matrices $M$,
\[
0 \leq \Tr \bigl( W P_{W} M M^{\transp} P_{W}^{\transp} \bigr) \leq \Tr \bigl( W M M^{\transp} \bigr)\,.
\]
\end{restatable}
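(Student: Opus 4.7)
\emph{Sketch of approach.}~~For the first part, the plan is to verify directly that $P_W \eqdef H\bigl(H^{\transp} W H\bigr)^{-1} H^{\transp} W$ enjoys the three defining properties of an orthogonal projection onto $\Im(H)$ for $\langle \cdot, \cdot \rangle_W$. Lemma~\ref{lm:HHT} ensures that $H^{\transp} W H$ is invertible, so $P_W$ is well defined. A one-line computation gives $P_W H = H$, which simultaneously yields idempotency ($P_W^2 = P_W$) and $\Im(P_W) = \Im(H)$. Finally, the matrix $W P_W = W H \bigl(H^{\transp} W H\bigr)^{-1} H^{\transp} W$ is visibly symmetric, which translates into $\langle P_W \bu, \bu' \rangle_W = \langle \bu, P_W \bu' \rangle_W$ for all $\bu, \bu' \in \R^m$; hence $P_W$ is self-adjoint for the $W$-inner product, and therefore it is the orthogonal projection onto $\Im(H)$ in the $W$-norm.

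For the second part, I would proceed in two steps. Non-negativity is immediate: since $W$ is symmetric positive definite, I write $W = N^{\transp} N$ with $N$ invertible, so that
\[
\Tr \bigl( W P_W M M^{\transp} P_W^{\transp} \bigr)
= \Tr \bigl( N P_W M M^{\transp} P_W^{\transp} N^{\transp} \bigr)
= \bigl\| N P_W M \bigr\|_F^2 \geq 0\,,
\]
where $\|\cdot\|_F$ denotes the Frobenius norm. The upper bound is where the structural argument lies; the plan is to reduce it to a column-by-column application of the fact that orthogonal projections are norm-reducing.

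Denoting by $\bm_1, \ldots, \bm_k$ the columns of $M$, the cyclic property of the trace rewrites both sides as sums of squared $W$-norms:
\[
\Tr \bigl( W M M^{\transp} \bigr) = \sum_{j=1}^k \bm_j^{\transp} W \bm_j = \sum_{j=1}^k \| \bm_j \|_W^2
\quad \mbox{and} \quad
\Tr \bigl( W P_W M M^{\transp} P_W^{\transp} \bigr) = \sum_{j=1}^k \| P_W \bm_j \|_W^2\,.
\]
Since $P_W$ is the orthogonal projection onto $\Im(H)$ in the $W$-norm by the first part, Pythagoras' theorem yields $\| \bm_j \|_W^2 = \| P_W \bm_j \|_W^2 + \| \bm_j - P_W \bm_j \|_W^2 \geq \| P_W \bm_j \|_W^2$ for each $j \in [k]$; summing over $j$ closes the proof. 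The key insight, rather than a genuine obstacle, is the column decomposition that recasts the trace inequality as a pointwise Pythagoras statement; no assumption on $M$ (such as full column rank) is needed, and the only structural ingredient beyond $W \succ 0$ is the self-adjointness of $P_W$ already established in the first part.
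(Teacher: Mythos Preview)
Your proof is correct. The first part matches the paper's argument essentially verbatim: check $P_W H = H$ (hence idempotency and $\Im(P_W)=\Im(H)$), then observe that $W P_W$ is symmetric to obtain self-adjointness for $\langle\cdot,\cdot\rangle_W$.

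For the trace inequality, you and the paper both reduce to the Pythagorean inequality $\|P_W \bu\|_W^2 \leq \|\bu\|_W^2$, but the device for converting the trace into a sum of squared $W$-norms differs. You expand column-by-column, writing $\Tr(WMM^{\transp}) = \Tr(M^{\transp} W M) = \sum_{j=1}^k \|\bm_j\|_W^2$ and similarly for the projected version, then apply Pythagoras to each column. The paper instead introduces a standard Gaussian $\bz' \sim \mathcal{N}(\bzero,\Id_k)$, applies Pythagoras to the single random vector $M\bz'$, and takes expectations, using $\E[\bz'(\bz')^{\transp}] = \Id_k$ to recover the traces. The two devices are equivalent in content (your sum over columns is the paper's expectation over the canonical basis directions), but your route is more elementary and self-contained, avoiding the probabilistic detour entirely.
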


\begin{proof}
First, $P_{W}$ is indeed a projection onto $\Im(H)$: namely,
$P_{W} P_{W} = P_{W}$ and $P_{W} H = H$.
To show that $P_{W}$ is an orthogonal projection
for the $\Arrowvert \cdot \Arrowvert_{W}$--norm, it suffices to note that
for all $\bu,\bu' \in \R^m$,
\[
\langle P_{W}\bu, \, \bu' \rangle_{W}
\eqdef (P_{W}\bu)^{\transp} W \bu'
= \bu^{\transp}  W P_{W} \bu' \eqdef \langle \bu, \, P_{W}\bu' \rangle_{W}\,,
\]
where we used that $P_{W}^{\transp} W = W P_{W}$, given the closed-form expression of $P_W$.

Now, let $\bz'$ be a standard Gaussian random $k$--vector: $\bz' \sim \mathcal{N}(\bzero, \Id_k)$.
On the one hand, given the orthogonality proved for $P_{W}$ and by a Pythagorean theorem,
\begin{equation}
\label{eq:Pyth}
\Arrowvert P_{W} M\bz' \Arrowvert_{W}^2 \leq \Arrowvert M\bz' \Arrowvert_{W}^2 \quad \mbox{a.s.}
\end{equation}
Now, by definition of the $\Arrowvert \cdot \Arrowvert_{W}$--norm
and by elementary properties of the trace,
\begin{align*}
\E \bigl[ \Arrowvert P_{W} M\bz' \Arrowvert_{W}^2 \bigr]
& = \E \bigl[ (P_{W} M \bz')^{\transp} W P_{W} M \bz' \bigr] \\
& = \E \Bigl[ \Tr \bigl( W P_{W} M \bz' (P_{W} M \bz')^{\transp} \bigr) \Bigr]
= \smash{\Tr \Bigl( W P_{W} M \, \overbrace{\E\bigl[\bz' (\bz')^{\transp}\bigr]}^{= \Id_k} \, M^{\transp} P_{W}^{\transp} \Bigr)} \\
& = \Tr \Bigl( W P_{W} M M^{\transp} P_{W}^{\transp} \Bigr)\,.
\end{align*}
Similarly, $\quad \E \bigl[ \Arrowvert M\bz' \Arrowvert_{W}^2 \bigr]
= \Tr \bigl( W M M^{\transp} \bigr)$.

The inequality~\eqref{eq:Pyth} and
the two equalities proved above conclude the proof.
\end{proof}

\section{Proof of the component-wise efficiency results (Theorem~\ref{th:mint} and Corollary~\ref{cor:mint})}
\label{app:mint}

In this section (as in Appendix~\ref{app:reduc-W}),
we restate all the material needed
so that this appendix is fully self-contained and may be read without
reading back the main body of the article.
The proof of Theorem~\ref{th:mint}
is based on a key equality established in the proof of
Theorem~\ref{th:reduc-W} and on a minimum-trace-projection result that is central
in the theory of forecast reconciliation. We start with the latter.

\subsection{Minimum-trace projection}
\label{app:lm:mint}

The following lemma is a deep and central result
in the theory of forecast reconciliation. First stated for $W = \Id_m$ by \citet{wickramasuriya2019optimal},
this result has since been extended to symmetric positive semi-definite matrices $W$ in \citet{panagiotelis2021forecast} and \citet{ando2024alternative}.
We provide a short and elementary proof,
which may actually be seen as a simplification of the proof by \citet{ando2024alternative},
an article entirely devoted to proving Lemma~\ref{lm:mint}. The latter article
sees the minimization problem at hand as a constrained minimization problem (given how projections onto $\Im(H)$
may be written), thus introduced a Lagrangian and discussed Karush-Kuhn-Tucker conditions to solve it.

\lmmint*

\begin{proof}
We first show that projection matrices $P$ onto $\Im(H)$ are exactly the matrices
of the form $H G$, where $G$ is a $n \times m$ matrix such that $GH = \Id_n$.
Indeed, such a matrix $HG$ satisfies $HG\,HG = HG$ and $HG\,H = H$,
which characterizes projections onto~$\Im(H)$. Conversely,
fix a projection $P$ onto $\Im(H)$ and a basis $\bu_1,\ldots,\bu_m$ of
$\R^m$: each $P\bu_i$ belongs to $\Im(H)$, thus is of the form $H \bg_i$
for some $\bg_i \in \R^n$. Denote by $G$ the $n \times m$ matrix with columns
given by $\bg_1,\ldots,\bg_m$. By linearity of $P$ and given that
$\bu_1,\ldots,\bu_m$ is a basis, we have $P = HG$.
We denote by $\bh_1,\ldots,\bh_n$ the columns of the $m \times n$ structural matrix~$H$.
Since $P$ is a projection onto $\Im(H)$, we have in particular $P \bh_i = \bh_i$ for all $i \in [n]$,
or put differently, $PH = H$. Substituting $P = HG$ and multiplying both sides by $H^{\transp}$,
we proved so far that $H^{\transp}HGH = H^{\transp}H$, where (see Lemma~\ref{lm:HHT}
in Appendix~\ref{app:A:threducW}),
the matrix $H^{\transp}H$ is invertible. All in all, we thus proved $GH = \Id_n$.

Given the above characterization, the projection matrices $P$ onto $\Im(H)$
are also exactly the matrices of the form
\[
P = P_{\Sigma^{-1}} + HA = H \Bigl( \bigl( H^{\transp} \Sigma^{-1} H\bigr)^{-1} H^{\transp} \Sigma^{-1} + A \Bigr)\,,
~~~ \mbox{for $n \times m$ matrices $A$ such that} ~~~ A H = [0]_n\,,
\]
where $[0]_n$ denotes the $n \times n$ null matrix.
Keeping in mind that $\Sigma$ and $\Sigma^{-1}$ are symmetric,
this decomposition entails that
\begin{align*}
W P \Sigma P^{\transp} = & \quad \ \ \
W \, \Bigl( H \bigl( H^{\transp} \Sigma^{-1} H\bigr)^{-1} H^{\transp} \Sigma^{-1} \Bigr)
\, \Sigma \, \Bigl( \Sigma^{-1} H \bigl( H^{\transp} H\bigr)^{-1} H^{\transp} \Bigr) \\
& \ \ + W \, (H A) \, \Sigma \, \Bigl( \Sigma^{-1} H \bigl( H^{\transp} \Sigma^{-1} H\bigr)^{-1} H^{\transp} \Bigr) \\
& \ \ + W \, \Bigl( H \bigl( H^{\transp} \Sigma^{-1} H\bigr)^{-1} H^{\transp} \Sigma^{-1} \Bigr) \, \Sigma \,
\bigl( A^{\transp} H^{\transp} \bigr) \\
& \ \ + W \, (HA) \, \Sigma \, (HA)^{\transp}\,.
\end{align*}
The second term in the decomposition simplifies into
\[
\smash{W \, (H A) \, \Sigma \, \Bigl( \Sigma^{-1} H \bigl( H^{\transp} \Sigma^{-1} H\bigr)^{-1} H^{\transp} \Bigr)
= W H \overbrace{A H}^{= [0]_n} \bigl( H^{\transp} \Sigma^{-1} H\bigr)^{-1} H^{\transp} = [0]_m\,.}
\]
Similarly, the third term is also null, due to the term $H^{\transp} \Sigma^{-1} \Sigma A^{\transp} = (AH)^{\transp}$.
The proof is concluded by noting that for all matrices $A$, the trace of the fourth term in the decomposition
of $W P \Sigma P^{\transp}$ is non-negative.
Indeed, given that $W$ and $\Sigma$ are positive semi-definite,
we may write them as $W = M M^{\transp}$ and $\Sigma = N N^{\transp}$ for $m \times m$ matrices
$M$ and $N$. Then, together with elementary properties of the trace,
\begin{align*}
\Tr\Bigl( W \, (HA) \, \Sigma \, (HA)^{\transp} \Bigr)
&= \Tr\Bigl( M M^{\transp} (HA) N N^{\transp} (HA)^{\transp} \Bigr) \\
& = \Tr\Bigl( M^{\transp} (HA) N N^{\transp} (HA)^{\transp} M \Bigr) \\
& = \Tr\Bigl( \bigl( M^{\transp} (HA) N \bigr) \bigl( M^{\transp} (HA) N \bigr)^{\transp\,} \Bigr) \geq 0\,,
\end{align*}
given that the trace of a symmetric positive semi-definite matrix is non-negative.
\end{proof}

\subsection{Proofs of Theorem~\ref{th:mint} and Corollary~\ref{cor:mint}}

We recall that we denoted by
\[
\tilde{C}^\star_1(\bx_{T+1}) \times \ldots \times \tilde{C}^\star_m(\bx_{T+1})
\qquad \mbox{and} \qquad
\tilde{C}_1(\bx_{T+1}) \times \ldots \times \tilde{C}_m(\bx_{T+1})
\]
the prediction rectangles output by
the hierarchical component-wise SCP (Algorithm~\ref{alg:main})
run with $P_{\Sigma^{-1}} = H \bigl( H^{\transp} \Sigma^{-1} H \bigr)^{-1} H^{\transp} \Sigma^{-1}$
and any other choice of a projection matrix $P$ onto $\Im(H)$,
respectively.

\asssigma*

\thmint*

\begin{proof}
The proof of Theorem~\ref{th:reduc-W} did not rely on the existence of a second-order
moment, i.e., of a covariance matrix $\Sigma$ for the distribution of the scores $\hat{\bs}_t$.
(It did not even rely on the existence of a first-order moment.)

When such a second-order moment exists,
we may modify the proof of Theorem~\ref{th:reduc-W} in the following way, to
obtain expected lengths depending on $\Sigma$.
We also note that though we wrote the beginning of that proof for a specific projection matrix $P_{\bw}$ onto $\Im(H)$,
it holds for all projection matrices $P$ onto $\Im(H)$, and even for all matrices $P$ such that $PH = H$.
Namely, when Algorithm~\ref{alg:main} is
run with any projection matrix~$P$ onto $\Im(H)$,
\[
\E \! \left[ \sum_{i=1}^m w_i \, \Leb_1 \bigl( \tilde{C}_i(\bx_{T+1}) \bigr)^2 \right]
= \left( \sum_{i=1}^m w_i \, \Gamma'_{i,i} \right) \E\bigl[L^2_\alpha\bigr]
= \Tr \bigl( \diag(\bw) \, P \Gamma P^{\transp} \bigr) \,\, \E\bigl[L^2_\alpha\bigr]\,,
\]
where $\Gamma = M M^{\transp}$ for some matrix $M$ such that scores $\hat{\bs}_t$
have the same distribution as some $\bc + M\bz$ with $\bz$ following some spherical distribution.
In particular, Assumption~\ref{ass:sigma} and Property~\ref{prop:ellipt}
impose that $M$ is a $m \times m$ matrix and they
entail that there exists $\sigma^2 > 0$ such that $\Sigma = \sigma^2 M M^{\transp} = \sigma^2 \Gamma$.

Therefore, we actually have, when Algorithm~\ref{alg:main} is
run with any projection matrix~$P$ onto $\Im(H)$,
\[
\E \! \left[ \sum_{i=1}^m w_i \, \Leb_1 \bigl( \tilde{C}_i(\bx_{T+1}) \bigr)^2 \right]
= \left( \sum_{i=1}^m w_i \, \Gamma'_{i,i} \right) \E\bigl[L^2_\alpha\bigr]
= \Tr \bigl( \diag(\bw) \, P \Sigma P^{\transp} \bigr) \,\, \frac{\E\bigl[L^2_\alpha\bigr]}{\sigma^2}\,.
\]
Lemma~\ref{lm:Wproj} shows that $P_{\Sigma^{-1}}$ is a projection matrix onto~$\Im(H)$. Therefore,
Lemma~\ref{lm:mint} shows that for all projections $P$ onto~$\Im(H)$ and all
positive vectors $\bw \in \R^m$,
\[
\Tr \bigl( \diag(\bw) \, P_{\Sigma^{-1}} \Sigma P_{\Sigma^{-1}}^{\transp} \bigr)
\leq \Tr \bigl( \diag(\bw) \, P \Sigma P^{\transp} \bigr)\,.
\]
Collecting all elements, whether $\E\bigl[L^2_\alpha\bigr] = +\infty$ or $\E\bigl[L^2_\alpha\bigr] \in [0,+\infty)$,
we proved so far that when Algorithm~\ref{alg:main} is
run with any projection matrix~$P$ onto $\Im(H)$ to output prediction intervals $\tilde{C}_i$,
\begin{equation}
\label{eq:lastproofmint}
\forall \bw \in (0,+\infty)^m, \qquad
\E \! \left[ \sum_{i=1}^m w_i \, \Leb_1 \bigl( \tilde{C}_i^\star(\bx_{T+1}) \bigr)^2 \right]
\leq
\E \! \left[ \sum_{i=1}^m w_i \, \Leb_1 \bigl( \tilde{C}_i(\bx_{T+1}) \bigr)^2 \right].
\end{equation}
We obtain the claimed component-wise inequalities by taking $w_i = 1$ for one component~$i$
and letting $w_j \to 0$ for $j \ne i$.
\end{proof}

We now move on to the proof of Corollary~\ref{cor:mint}.

\cormint*

\begin{proof}
The result follows from Theorems~\ref{th:reduc-W} and~\ref{th:mint}
(which both hold under the stronger set of assumptions of Theorem~\ref{th:mint}).
More precisely, for each $\bw \in (0,+\infty)^m$,
denote by $\tilde{C}^{\bw}_i$ the prediction intervals output by Algorithm~\ref{alg:main}
run with $P = P_{\bw}$.
Theorem~\ref{th:reduc-W} ensures that
\[
\forall \bw \in (0,+\infty)^m\,, \qquad
\E \! \left[ \sum_{i=1}^m w_i \, \Leb_1 \bigl( \tilde{C}^{\bw}_i(\bx_{T+1}) \bigr)^2 \right]
\leq \E \! \left[ \sum_{i=1}^m w_i \, \Leb_1 \bigl( \hat{C}_i(\bx_{T+1}) \bigr)^2 \right].
\]
Equality~\eqref{eq:lastproofmint} in the proof of Theorem~\ref{th:mint}
states that
\[
\forall \bw \in (0,+\infty)^m, \qquad
\E \! \left[ \sum_{i=1}^m w_i \, \Leb_1 \bigl( \tilde{C}_i^\star(\bx_{T+1}) \bigr)^2 \right]
\leq
\E \! \left[ \sum_{i=1}^m w_i \, \Leb_1 \bigl( \tilde{C}^{\bw}_i(\bx_{T+1}) \bigr)^2 \right].
\]
Combining these two inequalities, we have
\[
\forall \bw \in (0,+\infty)^m, \qquad
\E \! \left[ \sum_{i=1}^m w_i \, \Leb_1 \bigl( \tilde{C}_i^\star(\bx_{T+1}) \bigr)^2 \right]
\leq \E \! \left[ \sum_{i=1}^m w_i \, \Leb_1 \bigl( \hat{C}_i(\bx_{T+1}) \bigr)^2 \right],
\]
and we conclude the proof with the same limit arguments as after~\eqref{eq:lastproofmint}
in the proof of Theorem~\ref{th:mint}.
\end{proof}

\section{Forecast reconciliation: review, connections made, challenges overcome}
\label{app:innov}

The aim of this appendix is to provide some background
on the theory of forecast reconciliation and to explain how we connected it
to conformal prediction
in the proofs of Appendices~\ref{app:reduc-W} and~\ref{app:mint}.

\subsection{Brief overview of the literature on forecast reconciliation}

For a complete review on the forecast reconciliation literature, we refer the reader to \citet{athanasopoulos2024forecast}
and only provide a brief overview below.

At first, forecasts in the hierarchical setting were conducted using a single-level approach
(most notably, in the bottom-up or top-down fashion), i.e., by choosing a level of the hierarchy (typically, either the bottom level or the top level) to generate forecasts, and then, by propagating these forecasts (typically in a linear fashion).
A notable pitfall of the single-level approaches is that potentially valuable information from all other levels are ignored. To overcome this issue, the concept of forecast reconciliation was introduced by \citet{athanasopoulos2009hierarchical} and \citet{hyndman2011optimal}: the idea is to combine forecasts from different levels of aggregation through linear combinations. Recently, developments were made in reconciliation through projections (\citealp{wickramasuriya2019optimal}, \citealp{panagiotelis2021forecast}), which we review and detail in the next section.

Probabilistic hierarchical forecasting and reconciliation is an emerging field.
Notable works include the one by
\citet{wickramasuriya2024probabilistic}, which studied probabilistic forecast reconciliation for Gaussian distributions, while \citet{panagiotelis2023probabilistic} provided reconciled forecasts based on the minimization of a probabilistic score through gradient descent. However, we did not leverage results from this literature for our own probabilistic approach.

\subsection{Background on forecast reconciliation through projections}
\label{app:review:reconc}

We summarize the approach followed by \citet{hyndman2011optimal},
\citet{wickramasuriya2019optimal}, and \citet{panagiotelis2021forecast}.

The setting is the one described in Section~\ref{sec:setting}, with stochastic observations
following some hierarchical structure $\by = H \by_{1:n}$; features are possibly available.
Initial point forecasts $\wh{\by}$ are provided by some regression method~$\cA$; these forecasts
are possibly incoherent, i.e., do not belong to $\Im(H)$. The goal of forecast reconciliation
is to leverage the hierarchical structure to improve the point forecasts.

A typical assumption made in this literature is that the point forecasts $\wh{\by}$ are unbiased,
or, put differently, that the forecasting errors $\wh{\bs} = \by - \wh{\by}$
are centered. A natural performance criterion then is the mean-square error [MSE]:
letting $\Arrowvert \cdot \Arrowvert_2$ denote the Euclidean norm
and $\Sigma$ the covariance matrix of $\wh{\bs} = \by - \wh{\by}$,
\[
\mse\bigl(\wh{\by},\by\bigr) \eqdef \E \bigl[ \Arrowvert \wh{\bs} \Arrowvert_2^2 \bigr]
= \E\bigl[ \wh{\bs}^{\transp} \, \wh{\bs} \bigr]
= \E\Bigl[ \Tr\bigl( \wh{\bs} \, \wh{\bs}^{\transp} \bigr) \Bigr]
= \Tr \Bigl( \E \bigl[ \wh{\bs} \, \wh{\bs}^{\transp} \bigr] \Bigr)
\eqdef \Tr(\Sigma)\,.
\]
The equalities above may be generalized to $W$--norms (as defined in Lemma~\ref{lm:Wproj}),
where $W$ is a symmetric definite positive matrix:
\[
\mse\bigl(\wh{\by},\by,W\bigr) \eqdef \E \bigl[ \Arrowvert \wh{\bs} \Arrowvert_W^2 \bigr]
= \E\bigl[ \wh{\bs}^{\transp} W \wh{\bs} \bigr]
= \E\Bigl[ \Tr\bigl( W \wh{\bs} \wh{\bs}^{\transp} \bigr) \Bigr]
= \Tr(W \Sigma)\,.
\]

Natural improvements of the unbiased point forecasts are exactly given by projections thereof onto~$\Im(H)$,
as justified below in Lemma~\ref{lm:hynd}. Let $P$ be a projection onto~$\Im(H)$ and denote $\tilde{\by} = P\hat{\by}$.
By linearity of a projection, the point forecasts $\tilde{\by}$ are also unbiased.
Since observations are coherent, we have
\[
\by - \tilde{\by} \eqdef \by - P\hat{\by} = P \bigl( \by - \wh{\by} \bigr)
= P \wh{\bs} \eqdef \tilde{\bs}\,.
\]
The mean-squared error of $\tilde{\by}$ in $W$--norm thus equals
\[
\mse\bigl(\wt{\by},\by,W\bigr)
= \E\Bigl[ \Tr\bigl( W \wt{\bs} \, \wt{\bs}^{\transp} \bigr) \Bigr]
= \Tr\Bigl( W \, P \, \E\bigl[\wh{\bs} \, \wh{\bs}^{\transp} \bigr] P^{\transp} \Bigr)
= \Tr\bigl( W \, P \Sigma P^{\transp} \bigr)\,.
\]
Actually, the formula above holds more generally for all matrices $P$ such that $PH = H$.

Optimal unbiased point forecasts in the sense of the mean-square error
thus exactly correspond to minimizing $\Tr\bigl( W \, P \Sigma P^{\transp} \bigr)$,
a problem that we discuss below.
Before we do so, we justify why (only) projections
onto $\Im(H)$ are considered.

\paragraph{Why (only) projections onto~$\Im(H)$ are considered.}
This follows from the lemma below,
given that the literature of forecast reconciliation considers, implicitly or explicitly,
two restrictions: that forecasts should be unbiased;
that improved forecasts should be obtained by linear combinations
of the original forecasts (and be coherent, of course).

\begin{lemma}[\citealp{hyndman2011optimal}]
\label{lm:hynd}
Assume that the point forecasts $\wh{\by}$ are unbiased.
Let $M$ be a $m \times m$ matrix taking values in the coherent subspace $\Im(H)$.
Then the linear combinations $\wt{\by} = M \wh{\by}$
are unbiased if and only if $M$ is a projection onto $\Im(H)$.
\end{lemma}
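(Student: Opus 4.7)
The plan is to unfold what ``unbiased'' means and then treat the two implications separately, noting that the statement is structural in $M$ and so must hold uniformly over possible distributions of $\by$. Since $\by = H \by_{1:n}$ almost surely, one has $\E[\by] \in \Im(H)$, and unbiasedness of $\wh{\by}$ means $\E[\wh{\by}] = \E[\by]$. As the underlying distribution varies, $\E[\by]$ can be any vector $\bc \in \Im(H)$. The reconciled forecast $\wt{\by} = M \wh{\by}$ is unbiased when $M \E[\by] = \E[\by]$, and requiring this for every possible value $\bc = \E[\by] \in \Im(H)$ is equivalent to $M \bc = \bc$ for all $\bc \in \Im(H)$, i.e., to the matrix identity $MH = H$.

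For the forward direction I would observe that if $M$ is a projection onto $\Im(H)$, then $M$ acts as the identity on its image $\Im(H)$, so $MH = H$ and in particular $M \E[\by] = \E[\by]$. Linearity of the expectation then yields $\E[M \wh{\by}] = M \E[\wh{\by}] = M \E[\by] = \E[\by]$, so $\wt{\by}$ is unbiased.

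For the reverse direction I would combine the two algebraic conditions obtained so far: $MH = H$ (from unbiasedness, once quantified over distributions) and $\Im(M) \subseteq \Im(H)$ (from the assumption that $M$ takes values in the coherent subspace). The first gives that $M$ is the identity on $\Im(H)$; combined with the second, for any $\bu \in \R^m$ the vector $M \bu$ lies in $\Im(H)$, so $M(M \bu) = M \bu$, which yields $M^2 = M$. Finally, $MH = H$ also implies $\Im(M) \supseteq \Im(H)$, hence $\Im(M) = \Im(H)$, and $M$ is indeed a projection onto $\Im(H)$.

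The argument is essentially algebraic once the distributional interpretation is fixed. The only subtle point---and the main obstacle to formalize cleanly---is precisely that quantification in the reverse direction: the lemma is a statement about the matrix $M$ alone, so it must be read as ranging over every distribution of $(\bx, \by)$ with $\by \in \Im(H)$ almost surely, which is what forces $\E[\by]$ to cover all of $\Im(H)$ and hence $M \bc = \bc$ for every coherent $\bc$. Without this reading, a single unbiasedness condition would only constrain $M$ at one point of $\Im(H)$, which is clearly insufficient.
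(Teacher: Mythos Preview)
Your proof is correct and handles both implications cleanly. The interpretation of unbiasedness as ranging over all distributions with $\E[\by] \in \Im(H)$, hence forcing $MH = H$, matches the paper exactly.

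The reverse direction is where you differ. The paper factorizes $M = HG$ for some $n \times m$ matrix $G$ (possible since $\Im(M) \subseteq \Im(H)$), rewrites $MH = H$ as $HGH = H$, left-multiplies by $(H^{\transp} H)^{-1} H^{\transp}$ to extract $GH = \Id_n$, and then computes $M^2 = HGHG = HG = M$. Your argument is more direct: since $M\bu \in \Im(H)$ for every $\bu$, and $M$ is the identity on $\Im(H)$ by $MH = H$, you immediately get $M(M\bu) = M\bu$, hence $M^2 = M$. Your route avoids the factorization and the inversion of $H^{\transp} H$ entirely; the paper's route, on the other hand, reuses the $M = HG$ with $GH = \Id_n$ characterization that also appears in its proof of the minimum-trace lemma, so there is some internal consistency gained. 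Both are short and valid; yours is the more elementary of the two.
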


\begin{proof}
Being unbiased means the following in \citet{hyndman2011optimal}:
we denote by $\bm = H\bbeta$ the expectation of $\by$, i.e., $\E[\by] = \bm = H\bbeta$,
and assume that the model is rich enough so that all values of $\bbeta \in \R^n$,
i.e., all values of $\bm \in \Im(H)$, may be obtained when the specifications of the model vary.

That $\wt{\by} = M \wh{\by}$ is unbiased thus corresponds to the equalities
\[
\forall \bbeta \in \R^n, \quad M H\bbeta = H\bbeta\,, \qquad \mbox{i.e.,} \qquad
MH = H.
\]
Now, the proof of Lemma~\ref{lm:mint} in Appendix~\ref{app:mint} shows that since $M$
takes values in $\Im(H)$, it is of the form $M = HG$ for some $n \times m$ matrix $G$.
The equality $MH = H$ may be rewritten as $HGH = H$.
Again as in the proof of Lemma~\ref{lm:mint},
by multiplying both sides of this equality
by $\bigl( H^{\transp} H\bigr) ^{-1} H^{\transp}$, we obtain $GH= \Id_n$,
which yields $M^2 = HGHG = HG = M$. Thus, $M$ is indeed a projection onto~$\Im(H)$.
\end{proof}

\paragraph{Trace optimization, part 1: known covariance matrix.}
As explained above, original unbiased forecasts $\wh{\by}$
and their (still unbiased, linear) transformations $\wt{\by} = P\wh{\by}$,
where $P$ is a projection onto $\Im(H)$ may be compared through their
mean-squared errors in $W$--norm:
\[
\mse\bigl(\wh{\by},\by,W\bigr) = \Tr(W \Sigma)
\qquad \mbox{vs.} \qquad
\mse\bigl(\wt{\by},\by,W\bigr) = \Tr\bigl( W \, P \Sigma P^{\transp} \bigr)\,.
\]
This consideration leads to the central result in forecast reconciliation: the optimality of the so-called minimum-trace reconciliation method from \citet{wickramasuriya2019optimal}, formally stated as Lemma~\ref{lm:mint}
in Section~\ref{sec:algo-sigma} and Appendix~\ref{app:lm:mint}.

\paragraph{Trace optimization, part 2: a more practical approach.}
The drawback with the approach above is that it relies on the knowledge of the covariance matrix~$\Sigma$,
but its advantage is that it holds for all weight matrices~$W$.
We now show how to exchange the roles of $W$ and $\Sigma$, and get a trace-reduction result
for a given weight matrix $W$ but for all possible covariance matrices~$\Gamma$, i.e.,
symmetric positive semi-definite matrices.

This result is inspired from \citet{panagiotelis2021forecast}, who recommend to use the orthogonal projection
in $W$--norm, whose closed-form expression (see Lemma~\ref{lm:Wproj}) reads
\[
P_{W} \eqdef H \bigl( H^{\transp} W H \bigr)^{-1} H^{\transp} W\,.
\]
A Pythagorean theorem ensures that, for all point forecasts $\wh{\by}$ and (coherent) observations~$\by$,
\[
\bigl\Arrowvert \by - P_{W}\wh{\by} \bigr\Arrowvert_{W}^2  =
\bigl\Arrowvert P_{W} (\by - \wh{\by}) \bigr\Arrowvert_{W}^2 \leq \bigl\Arrowvert (\by - \wh{\by}) \Arrowvert_{W}^2 \quad \mbox{a.s.},
\]
thus, by taking expectations,
\[
\Tr\bigl( W \, P_W \Sigma P_W^{\transp} \bigr) =
\mse\bigl(P_{W} \wh{\by}, \by, W \bigr) \leq \mse\bigl(\wh{\by}, \by, W \bigr)
= \Tr(W \Sigma)\,.
\]
The equality above holds no matter the specific value of the covariance matrix~$\Sigma$, which corresponds to the following
trace-reduction inequality, stated as the second part of Lemma~\ref{lm:Wproj}:
for all symmetric positive semi-definite matrices~$\Gamma$,
\begin{equation}
\label{eq:lmWproj-expl}
0 \leq \Tr \bigl( W P_{W} \Gamma P_{W}^{\transp} \bigr) \leq \Tr \bigl( W \Gamma \bigr)\,.
\end{equation}

The inequality above (i.e., Lemma~\ref{lm:Wproj}) is a result of our own though it was inspired
by both Lemma~\ref{lm:mint} and the approach by \citet{panagiotelis2021forecast}
relying on $P_W$--projections.

\subsection{How we leveraged and transferred these results (and why it was not immediate)}
\label{app:innov-reconcil}

\paragraph{On the unnecessity of unbiasedness.}
As we made clear several times in Section~\ref{app:review:reconc},
a key assumption in the literature of forecast reconciliation is that point
forecasts are unbiased, or put differently, that the forecasting errors $\hat{\bs}$ are centered.

This is in sharp contrast with the residuals $\hat{\bs}$
considered in this article, thought of as signed vector-valued non-conformity scores, which we do not want (nor need) to assume centered. None of Assumptions~\ref{ass:iid}--\ref{ass:elliptic}--\ref{ass:sigma}
are about this. We rather assume that these scores follows a so-called elliptical distribution,
with possibly a non-null expectation.
Elliptical distributions were considered, not in the literature of reconciliation of point forecasts but of
probabilistic forecasts, see \citet{panagiotelis2023probabilistic}.
Now, the proof of Theorem~\ref{th:reduc-W} in Appendix~\ref{app:reduc-W}
reveals that our construction of prediction rectangles is such that the length
of the $i$--th defining interval is given by
\[
\Leb_1 \bigl( \hat{C}_i(\bx_{T+1}) \bigr) =
\whs_{\big(\lceil(T_{\calib} +1)(1-\alpha/2)\rceil\big),i} - \whs_{\big(\lfloor(T_{\calib} +1)\alpha/2\rfloor\big),i} \, .
\]
Non-null expectations of the underlying elliptical distribution cancel out in the above equation,
hence the unnecessity of an assumption of unbiasedness. The cancellation
is only possible because we considered signed non-conformity scores (which is slightly unusual in the
literature of conformal prediction).

\paragraph{On the component-wise objectives targeted.}
As should be clear from Sections~\ref{sec:setting} and~\ref{sec:analy-joint-SCP},
the theory provided in this article is only worth being detailed
because we do not target joint-coverage guarantees (that are straightforward to get,
see Appendix~\ref{app:messoudi} below) but
component-wise coverage guarantees (which are more difficult to achieve,
see Appendices~\ref{app:reduc-W} and~\ref{app:mint}).
We had to find out a component-wise efficiency objective
that we could handle.
With the literature of forecast reconciliation in mind, we somehow had to
build an intuition of such an efficiency criterion.

The proof of Theorem~\ref{th:reduc-W} in Appendix~\ref{app:reduc-W}
explains how we could relate our (component-wise) small-volume objective,
namely,
\begin{equation}
\label{eq:min:1}
\mbox{minimizing} \quad \E \! \left[ \sum_{i=1}^m w_i \, \Leb_1 \bigl( C_i(\bx_{T+1}) \bigr)^2 \right],
\end{equation}
to problems of the form
\begin{equation}
\label{eq:min:2}
\mbox{minimizing} \quad \Tr \bigl (\diag(\bw) \, P \Gamma P \bigr),
\end{equation}
for some symmetric positive semi-definite matrix $\Gamma$, so as to leverage
inequality~\eqref{eq:lmWproj-expl}, which is of our own.
The proof of Theorem~\ref{th:mint} reveals that when non-conformity scores have
a bounded second-order moment, the matrix $\Gamma$ is proportional to
their covariance matrix~$\Sigma$, which opened
the avenue of the minimum-trace approaches of Lemma~\ref{lm:mint}.

\paragraph{Summary of the challenges overcome.}
In a nutshell, the main challenge overcome was to relate
the two minimization problems~\eqref{eq:min:1} and~\eqref{eq:min:2},
and in the first place, state suitably the efficiency criterion~\eqref{eq:min:1},
which, to the best of our knowledge, is a novel criterion.
The main tools used were to resort to signed vector-valued non-conformity scores, which are not necessarily unbiased, and to
exploit properties of elliptical distributions, in terms of
stability of the shapes of these distributions under certain affine transformations.

\section{Hierarchical SCP for joint coverage: Efficiency results}
\label{app:messoudi}

The SCP algorithms for joint coverage described in Section~\ref{sec:algo}
and based on ellipsoidal sets
are restated in algorithm boxes (with the same
numbers: Algorithms~\ref{alg:messoudi} and~\ref{alg:reconciled messoudi}).
As their names suggest, they output prediction regions given by ellipsoids;
\citet{messoudi2022ellipsoidal} empirically illustrated that ellipsoidal predictive regions
are more efficient than hyper-rectangular ones in terms of volumes, when joint-coverage guarantees
are targeted.
To do so, these algorithms pick definite positive matrices $A$ based on data and
rely on $A$--norms, defined as
\[
\bu \in \R^m \longmapsto \Arrowvert \bu \Arrowvert_{A} \eqdef \sqrt{\bu^{\transp} A \bu}
\]
For instance, \citet{johnstone2021conformal} suggests using the so-called Mahalanobis distance, which corresponds to taking
$A$ as the inverse of the (estimated) covariance matrix of the forecasting errors. We actually present a slightly different methodology and algorithm than the one considered by~\citet{johnstone2021conformal},
in the spirit of Algorithm~\ref{alg:sigma}. Indeed, in Algorithms~\ref{alg:messoudi} and~\ref{alg:reconciled messoudi}, the estimation of the sample covariance matrix is made on a separate data subset (namely, $D_{\est}$) to avoid potential overfitting concerns.

\setcounter{algorithm}{0}
\begin{figure}[t]
\begin{algorithm}[H]
\caption{\label{alg:messoudi} Plain multivariate SCP for joint coverage based on ellipsoidal sets}
\begin{algorithmic}[1]
\REQUIRE
confidence level~$1-\alpha$;
regression algorithm~$\mathcal{A}$;
partition of $[T]$ into subsets $\mathcal{D}_{\train}$, $\mathcal{D}_{\estim}$ and $\mathcal{D}_{\calib}$
of respective cardinalities $T_{\train}$, $T_{\estim}$ and $T_{\calib}$;
estimation procedure $\mathcal{E}$ for the matrix used to define the norm \smallskip

\STATE Build the regressor $\widehat{\bmu}(\,\cdot\,) =
\mathcal{A}\bigl( (\bx_t,\by_t)_{t \in \mathcal{D}_{\train}} \bigr)$ \setalgline \\
\STATE Compute a symmetric definite positive matrix $A = \mathcal{E}\bigl( (\bx_t,\by_t)_{t \in \mathcal{D}_{\train} \cup \mathcal{D}_{\estim}}
\bigr)$ \setalgline \\
\STATE \textbf{for} {$t \in \mathcal{D}_{\calib}$} \textbf{do} let
$\bop{\widehat{y}}_t = \widehat{\bop{\mu}}(\bx_t) $ and $\check{s}_t = \lVert \by_t - \bop{\widehat{y}}_t \rVert_A$ \setalgline \\
\STATE Order the $(\check{s}_{t})_{t \in \mathcal{D}_{\calib}}$
into $\check{s}_{(1)} \leq \ldots \leq \check{s}_{(T_{\calib})}$
and define $\check{s}_{(0)} = 0$ and $\check{s}_{(T_{\calib}+1)} = +\infty$ \setalgline \\
\STATE Let  $\displaystyle{\check{q}_{1-\alpha} = \check{s}_{\big(\lceil(T_{\calib} +1)(1-\alpha)\rceil\big)}}$ \setalgline \\
\STATE Set $\check{E}(\,\cdot\,) = \Big\{ \bop{y}\in\mathbb{R}^m : \big\lVert \bop{y}- \widehat{\bop{\mu}}(\,\cdot\,) \big\rVert_A \leq \check{q}_{1-\alpha} \Big\}$ \setalgline \\
\STATE\textbf{return} $\check{E}(\bx_{T+1}) $ \setalgline
\end{algorithmic}
\end{algorithm}

\begin{algorithm}[H]
\caption{\label{alg:reconciled messoudi} Hierarchical SCP for joint coverage based on ellipsoidal sets}
\begin{algorithmic}[1]
\REQUIRE
confidence level~$1-\alpha$;
regression algorithm~$\mathcal{A}$;
partition of $[T]$ into subsets $\mathcal{D}_{\train}$, $\mathcal{D}_{\estim}$ and $\mathcal{D}_{\calib}$
of respective cardinalities $T_{\train}$, $T_{\estim}$ and $T_{\calib}$;
estimation procedure $\mathcal{E}$ for the matrix used to define the norm \smallskip

\STATE Build the regressor $\widehat{\bmu}(\,\cdot\,) =
\mathcal{A}\bigl( (\bx_t,\by_t)_{t \in \mathcal{D}_{\train}} \bigr)$ \setalgline \\
\STATE Compute a symmetric definite positive matrix $A = \mathcal{E}\bigl( (\bx_t,\by_t)_{t \in \mathcal{D}_{\train} \cup \mathcal{D}_{\estim}}
\bigr)$ \setalgline \\
\STATE \bluefinal{Let $P_A = H(H^{\transp}AH)^{-1}H^{\transp}A$ and
consider $\mathring{\bmu}(\,\cdot\,) = P_A  \widehat{\bop{\mu}}(\,\cdot\,)$}
\STATE \textbf{for} {$t \in \mathcal{D}_{\calib}$} \textbf{do} let
$\mathring{\bop{y}}_t = \mathring{\bop{\mu}}(\bx_t) $ and $\mathring{s}_t = \lVert \by_t - \bop{\mathring{y}}_t \rVert_A$ \setalgline \\
\STATE Order the $(\mathring{s}_{t})_{t \in \mathcal{D}_{\calib}}$
into $\mathring{s}_{(1)} \leq \ldots \leq \mathring{s}_{(T_{\calib})}$
and define $\mathring{s}_{(0)} = 0$ and $\mathring{s}_{(T_{\calib}+1)} = +\infty$ \setalgline \\
\STATE Let  $\displaystyle{\mathring{q}_{1-\alpha} = \mathring{s}_{\big(\lceil(T_{\calib} +1)(1-\alpha)\rceil\big)}}$ \setalgline \\
\STATE Set $\mathring{E}(\,\cdot\,) = \Big\{ \bop{y}\in\mathbb{R}^m : \big\lVert \bop{y}- \mathring{\bop{\mu}}(\,\cdot\,) \big\rVert_A \leq \mathring{q}_{1-\alpha} \Big\}$ \setalgline \\
\STATE\textbf{return} $\mathring{E}(\bx_{T+1}) $ \setalgline
\end{algorithmic}
\end{algorithm}
\end{figure}

\textbf{Analysis.}
Observations $\by_t$ are coherent, i.e., belong to $\Im(H)$, and
$P_A$ is the orthogonal projection in $A$--norm onto $\Im(H)$,
as indicated by Lemma~\ref{lm:Wproj} of Appendix~\ref{app:A:threducW}.
Thus, by a Pythagorean theorem,
\[
\forall t \in \mathcal{D}_{\calib}, \qquad \mathring{s}_t = \lVert \by_t - P_A \bop{\widehat{y}}_t \rVert_A
\leq \lVert \by_t - \bop{\widehat{y}}_t \rVert_A = \check{s}_t \,.
\]
Thus, in particular
\[
\mathring{q}_{1-\alpha} =
\mathring{s}_{\big(\lceil (T_{\calib}+1)(1-\alpha) \rceil \big)} \leq \check{s}_{\big(\lceil (T_{\calib}+1)(1-\alpha) \rceil \big)}
= \check{q}_{1-\alpha}\,.
\]
The ellipsoids
\begin{align*}
\check{E}(\,\cdot\,) & = \Big\{ \bop{y}\in\mathbb{R}^m : \big\lVert \bop{y}- \widehat{\bop{\mu}}(\,\cdot\,) \big\rVert_A \leq \check{q}_{1-\alpha} \Big\} \\
\mbox{and} \qquad \mathring{E}(\,\cdot\,) &
= \Big\{ \bop{y}\in\mathbb{R}^m : \big\lVert \bop{y}- \mathring{\bop{\mu}}(\,\cdot\,) \big\rVert_A \leq \mathring{q}_{1-\alpha} \Big\}
\end{align*}
have different centers and different radii, but their shapes are similar.
The inequality $\mathring{q}_{1-\alpha} \leq \check{q}_{1-\alpha}$ between the radii entails a similar inequality
about volumes: $\Leb_m\bigl(\mathring{E}(\bop{x})\bigr) \leq \Leb_m\bigl(\check{E}(\bop{x})\bigr)$
for all $\bx \in \R^d$.

Note that the argument above is fully deterministic and relies on no assumption on data.
We therefore proved in a straightforward manner the theorem below.

\begin{theorem}
\label{th:reconciled coverage messoudi}
Fix $\alpha \in (0,1)$.
Under no assumption on the data,
Algorithm~\eqref{eq:alg:reconciled messoudi} outputs prediction ellipsoids $\check{E}$
that are uniformly more efficient
than the prediction ellipsoids $\check{E}$ output by Algorithm~\eqref{eq:alg:messoudi}:
\[
\forall \bx \in \R^d, \qquad
\Leb_m\bigl(\mathring{E}(\bop{x})\bigr) \leq \Leb_m\bigl(\check{E}(\bop{x})\bigr) \,.
\]
\end{theorem}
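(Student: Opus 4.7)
The plan is to follow the deterministic argument sketched just before the theorem statement and to make the volume comparison explicit. The key observation is that $P_A$ is the orthogonal projection onto $\Im(H)$ in the $A$-norm, a fact guaranteed by Lemma~\ref{lm:Wproj} since $A$ is symmetric positive definite. Because each observation $\by_t$ is coherent (it lies in $\Im(H)$), it is fixed by $P_A$, so
\[
\by_t - \mathring{\by}_t = \by_t - P_A \widehat{\by}_t = P_A\bigl(\by_t - \widehat{\by}_t\bigr).
\]
A Pythagorean theorem in $A$-norm then yields $\mathring{s}_t = \lVert P_A(\by_t - \widehat{\by}_t)\rVert_A \leq \lVert \by_t - \widehat{\by}_t\rVert_A = \check{s}_t$ for every $t \in \mathcal{D}_{\calib}$.

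Next I would argue that this pointwise inequality on scores transfers to the order statistics: if $a_t \leq b_t$ for all $t$ in some finite index set, then $a_{(k)} \leq b_{(k)}$ for every $k$. Applying this with $k = \lceil (T_{\calib}+1)(1-\alpha)\rceil$ gives $\mathring{q}_{1-\alpha} \leq \check{q}_{1-\alpha}$.

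Finally, I would make the volume comparison precise. For any definite positive $A$ and any center $\bc \in \R^m$, the ellipsoid $E(\bc, q) = \{\by \in \R^m : \lVert \by - \bc \rVert_A \leq q\}$ is obtained from the unit ball $E(\bzero,1)$ by a translation by $\bc$ and a dilation by a factor $q$, so its Lebesgue measure equals $q^m \, V_A$ where $V_A \eqdef \Leb_m\bigl(E(\bzero,1)\bigr)$ depends only on $A$ (and $m$). Both $\mathring{E}(\bx)$ and $\check{E}(\bx)$ share the same defining norm $\lVert \cdot \rVert_A$, so
\[
\Leb_m\bigl(\mathring{E}(\bx)\bigr) = \bigl(\mathring{q}_{1-\alpha}\bigr)^m V_A \leq \bigl(\check{q}_{1-\alpha}\bigr)^m V_A = \Leb_m\bigl(\check{E}(\bx)\bigr),
\]
for every $\bx \in \R^d$, which is the stated inequality.

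There is no real obstacle here: the result follows from a one-line Pythagorean inequality combined with the standard volume formula for ellipsoids. The only mild subtlety is to notice that the two ellipsoids, though centered at different points $\widehat{\bmu}(\bx)$ and $\mathring{\bmu}(\bx) = P_A \widehat{\bmu}(\bx)$, are homothetic (same shape, governed by $A$), so their volumes scale purely with the $m$-th power of the radii. Everything is deterministic, which is precisely why no assumption on the data is needed.
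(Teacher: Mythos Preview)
Your proof is correct and follows essentially the same approach as the paper: the Pythagorean inequality in $A$-norm gives $\mathring{s}_t \leq \check{s}_t$, hence $\mathring{q}_{1-\alpha} \leq \check{q}_{1-\alpha}$, and since both ellipsoids share the same defining matrix $A$, the volume inequality follows. You have simply made explicit a couple of steps (the order-statistics monotonicity and the $q^m V_A$ volume formula) that the paper leaves implicit.
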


\textbf{Concluding remark: no challenge.}
There was no challenge in providing a theory
of efficient conformal prediction based on ellipsoidal sets for hierarchical data under a joint-coverage objective.
This was not the case at all for component-wise coverage objectives, as the tools of forecast reconciliation
(all related to considering projections) are not component-wise tools.
The proof above
actually emphasizes the complexity of results such as Theorems \ref{th:reduc-W}--\ref{th:mint} and Corollary~\ref{cor:mint}.

\section{Proofs of the coverage results (Theorems~\ref{th:coverage joint_informal} and \ref{th:coverage})}
\label{app:coverage}

We conclude the theoretical part of the appendices with the proofs of the
coverage results. They rely on an absolutely standard methodology in the literature of conformal prediction
(see, for instance, \citealp[proof of Theorem~1]{tibshirani2019conformal}), with rather immediate adaptations
due to the multivariate context and to the choice of signed non-conformity scores.

The coverage results for Algorithms~\ref{alg:bench}--\ref{alg:main}--\ref{alg:sigma}
were formally stated in Theorem~\ref{th:coverage}, recalled below.
The ones for Algorithms~\ref{alg:messoudi} and~\ref{alg:reconciled messoudi}
were informally stated in the first part of Theorem~\ref{th:coverage joint_informal}
are formally stated next.

\begin{assumption}
\label{ass:iid messoudi}
The non-conformity scores $\check{s}_t = \lVert \bop{y}_t - \bop{\widehat{\mu}}(\bx_t) \rVert_{A}$
are i.i.d.\ for $t \in \mathcal{D}_{\calib} \cup \{T+1\}$,
and similarly for the scores $\mathring{s}_t = \lVert \bop{y}_t - \bop{\mathring{\mu}}(\bx_t) \rVert_{A}$.
This is in particular the case when data $(\bx_t,\by_t)_{1 \leq t \leq T+1}$ is i.i.d.
\end{assumption}

The second part of Assumption~\ref{ass:iid} follows from the fact that
$\bop{\widehat{\mu}}$, $A$, and thus $\bop{\mathring{\mu}} = P_A \bop{\widehat{\mu}}$
only depend on data from $\mathcal{D}_{\train} \cup \cD_{\est}$
and are therefore independent from the data from
$\mathcal{D}_{\calib} \cup \{T+1\}$.

\begin{theorem}
\label{th:coverage messoudi}
Fix $\alpha \in (0,1)$.
Algorithms~\ref{alg:messoudi} and~\ref{alg:reconciled messoudi},
used with any regression algorithm~$\mathcal{A}$ and any estimation procedure~$\mathcal{E}$,
ensure that whenever Assumption~\ref{ass:iid messoudi} (i.i.d.\ scores) holds,
\[
\P \bigl( \by_{T+1} \in \check{E}(\bx_{T+1}) \bigr) \geq 1-\alpha
\qquad \mbox{and} \qquad
\P \bigl( \by_{T+1} \in \mathring{E}(\bx_{T+1}) \bigr) \geq 1-\alpha
\,.
\]
In addition, if the non-conformity scores $\smash{\bigl( \check{s}_t \bigr)_{t \in \mathcal{D}_{\calib} \cup \{T+1\}}}$
and $\smash{\bigl( \mathring{s}_t \bigr)_{t \in \mathcal{D}_{\calib} \cup \{T+1\}}}$
are almost surely distinct, then, respectively,
\[
\P \bigl( \by_{T+1} \in \check{E}(\bx_{T+1}) \bigr)  \leq 1-\alpha + \frac{1}{T_{\calib}+1}
\qquad \mbox{and} \qquad
\P \bigl( \by_{T+1} \in \mathring{E}(\bx_{T+1}) \bigr)  \leq 1-\alpha + \frac{1}{T_{\calib}+1}\,.
\]
\end{theorem}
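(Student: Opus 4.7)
The plan is to follow the canonical split-conformal coverage argument, which transfers verbatim to both the plain and the reconciled ellipsoidal algorithms since they share the same scalar-score-and-quantile structure. For Algorithm~\ref{alg:messoudi}, I would first rewrite the coverage event: by definition of $\check{E}$, we have $\bop{y}_{T+1} \in \check{E}(\bx_{T+1})$ if and only if $\lVert \bop{y}_{T+1} - \widehat{\bmu}(\bx_{T+1}) \rVert_A \leq \check{q}_{1-\alpha}$, which, extending the notation of line~3 of Algorithm~\ref{alg:messoudi} to the test index, reads $\check{s}_{T+1} \leq \check{q}_{1-\alpha}$. The identical rewriting with $\mathring{s}_{T+1}$ and $\mathring{q}_{1-\alpha}$ handles Algorithm~\ref{alg:reconciled messoudi}.

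Second, I would invoke Assumption~\ref{ass:iid messoudi} to conclude that $(\check{s}_t)_{t \in \mathcal{D}_{\calib} \cup \{T+1\}}$ is an exchangeable family, and similarly for the $\mathring{s}_t$. By the standard rank argument, the rank of $\check{s}_{T+1}$ in the pooled sample of size $T_{\calib}+1$ is stochastically bounded above by a uniform variable on $\{1,\ldots,T_{\calib}+1\}$, and the event $\check{s}_{T+1} \leq \check{q}_{1-\alpha}$ coincides with this rank being at most $\lceil(T_{\calib}+1)(1-\alpha)\rceil$. This yields
\[
\P\bigl( \bop{y}_{T+1} \in \check{E}(\bx_{T+1}) \bigr) \geq \frac{\lceil (T_{\calib}+1)(1-\alpha) \rceil}{T_{\calib}+1} \geq 1-\alpha,
\]
and the same inequality for $\mathring{E}(\bx_{T+1})$.

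Third, under the a.s.-distinctness hypothesis, ties are excluded, so the rank of $\check{s}_{T+1}$ is exactly uniform on $\{1,\ldots,T_{\calib}+1\}$; this gives the matching upper bound $\lceil (T_{\calib}+1)(1-\alpha) \rceil / (T_{\calib}+1) \leq 1 - \alpha + 1/(T_{\calib}+1)$, and the same for $\mathring{E}$. No step is delicate: the only thing I would explicitly verify is the second sentence of Assumption~\ref{ass:iid messoudi}, namely that when $(\bx_t, \by_t)_{1\leq t \leq T+1}$ is i.i.d., the scalar scores $\check{s}_t$ (resp.\ $\mathring{s}_t$) inherit i.i.d.-ness. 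This follows because the regressor $\widehat{\bmu}$, the matrix $A$, and hence the projection $P_A = H(H^{\transp}AH)^{-1}H^{\transp}A$ and $\mathring{\bmu} = P_A \widehat{\bmu}$, are measurable functions of the data indexed by $\mathcal{D}_{\train} \cup \mathcal{D}_{\estim}$ only, hence independent from the calibration and test indices, so that each $\check{s}_t$ and $\mathring{s}_t$ is a fixed function of $(\bx_t,\by_t)$ conditionally on the training/estimation split.
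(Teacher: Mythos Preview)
Your proposal is correct and follows essentially the same approach as the paper: rewrite the coverage event as $\bigl\{\check{s}_{T+1} \leq \check{q}_{1-\alpha}\bigr\}$ (and similarly for $\mathring{s}_{T+1}$), then invoke exchangeability of the scalar scores to run the standard rank argument, with exact uniformity under the a.s.-distinctness hypothesis. The paper's own proof is a two-line reduction to its proof of Theorem~\ref{th:coverage}, and your verification of the second sentence of Assumption~\ref{ass:iid messoudi} matches the paper's remark immediately following that assumption.
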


We recall that Theorem~\ref{th:coverage} is stated
for Algorithm~\ref{alg:sigma} and thus entails the same results
for Algorithms~\ref{alg:bench} and~\ref{alg:main}, which are special cases
of Algorithm~\ref{alg:sigma}.

\assiid*

The second part of Assumption~\ref{ass:iid}
follows from its first part based on an argument similar
to the one stated after Assumption~\ref{ass:iid messoudi}.

\thcoverage*

We now formally prove these results,
by starting with the most important one given the angle of this article,
namely, Theorem~\ref{th:coverage}.
We recall that the proof schemes used are absolutely standard.

\subsection{Proof of Theorem~\ref{th:coverage}}

The condition $PH = H$ means that $P$ leaves elements of $\Im(H)$ unchanged.
Since observations $\by_t$ are coherent, i.e., belong to $\Im(H)$, we have, for all $t \in \cD_{\calib}$,
\[
\tilde{\bs}_t \eqdef \by_t - P \hat{\by}_t = P \bigl( \by_t - \hat{\by}_t \bigr) = P \, \hat{\bs}_t\,.
\]
In addition, $P$ depends only on data from $\cD_{\train} \cup \cD_{\est}$ and is therefore independent
from data in $\mathcal{D}_{\calib} \cup \{T+1\}$.
Assumption~\ref{ass:iid} thus entails that the projected residuals $\tilde{\bs}_t$, where $t \in \mathcal{D}_{\calib} \cup \{T+1\}$,
are also i.i.d., thus exchangeable---which is the only property we will use in the rest of this proof.

Fix $i \in [m]$.
By definition of $\wt{C}_i(\bx_{T+1})$ and of the score $\tilde{\bs}_{T+1} = \by_{T+1} - \tilde{\bmu}(\bx_{T+1})$,
the event of interest may be rewritten as
\begin{equation}
\label{eq:event-interest-coverage}
\Bigl\{ y_{T+1,i} \in \wt{C}_i(\bx_{T+1}) \Bigr\} =
\biggl\{ \widetilde{s}_{\big(\lfloor(T_{\calib} +1)\alpha/2\rfloor\big),i} \leq \widetilde{s}_{T+1,i}
\leq \widetilde{s}_{\big(\lceil(T_{\calib} +1)(1-\alpha/2)\rceil\big),i} \biggr\}\,.
\end{equation}

If $\alpha \in (0,1)$ is so small that $(T_{\calib} +1)\alpha/2 < 1$, i.e., $\alpha < 2/(T_{\calib} +1)$, then
\[
\widetilde{s}_{\big(\lfloor(T_{\calib} +1)\alpha/2\rfloor\big),i} = \widetilde{s}_{(0)} = -\infty
\quad \mbox{and} \quad
\widetilde{s}_{\big(\lceil(T_{\calib} +1)(1-\alpha/2)\rceil\big),i} = \widetilde{s}_{(T_{\calib} +1)} = +\infty\,.
\]
Thus, $\P\bigl( y_{T+1,i} \in \wt{C}_i(\bx_{T+1}) \bigr) = 1$ satisfies the claims $\geq 1 - \alpha$ and $\leq 1-\alpha + 2/(T_{\calib} +1)$.

Otherwise, $\smash{\widetilde{s}_{\big(\lfloor(T_{\calib} +1)\alpha/2\rfloor\big),i}}$ and
$\smash{\widetilde{s}_{\big(\lceil(T_{\calib} +1)(1-\alpha/2)\rceil\big),i}}$ correspond
to some $\widetilde{s}_{t,i}$ and $\widetilde{s}_{t',i}$ for some $t,t' \in \cD_{\calib}$.

We apply arguments of exchangeability in the latter case.
The new score $\widetilde{s}_{T+1,i}$ is equally likely to fall into any of the $T_{\calib}+1$
intervals defined by the $(\widetilde{s}_t)_{t \in \cD_{\calib}}$.
More formally, by Assumption~\ref{ass:iid}, and when scores are almost-surely distinct,
\begin{align*}
\P\bigl( \widetilde{s}_{T+1,i} < \wts_{(1),i} \bigr) = \P\bigl( \widetilde{s}_{T+1,i} > \wts_{(T_{\calib}),i} \bigr) & = \frac{1}{T_{\calib}+1} \\
\qquad \mbox{and} \qquad
\forall k \in [T_{\calib}-1], \qquad\qquad\qquad\quad
\P\bigl( \wts_{(k),i} < \widetilde{s}_{T+1,i} < \wts_{(k+1),i} \bigr) & = \frac{1}{T_{\calib}+1}\,.
\end{align*}
Therefore, when scores are almost-surely distinct, the event of interest~\eqref{eq:event-interest-coverage} rewrites
\[
\Bigl\{ y_{T+1,i} \in \wt{C}_i(\bx_{T+1}) \Bigr\} \eqas \biggl\{ \widetilde{s}_{\big(\lfloor(T_{\calib} +1)\alpha/2\rfloor\big),i}
< \widetilde{s}_{T+1,i}
< \widetilde{s}_{\big(\lceil(T_{\calib} +1)(1-\alpha/2)\rceil\big),i} \biggr\}
\]
and has a probability
\begin{align*}
\P\bigl( y_{T+1,i} \in \wt{C}_i(\bx_{T+1}) \bigr) & =
\frac{\lceil(T_{\calib} +1)(1-\alpha/2)\rceil -\lfloor(T_{\calib} +1)\alpha/2\rfloor}{T_{\calib}+1} \\
& \leq \frac{\bigl( (T_{\calib} +1)(1-\alpha/2)+1 \bigr) - \bigl( (T_{\calib} +1)\alpha/2 -1\bigr)}{T_{\calib}+1} \\
&= 1-\alpha + \frac{2}{T_{\calib}+1}\,,
\end{align*}
as claimed.

We now prove that $\P\bigl( y_{T+1,i} \in \wt{C}_i(\bx_{T+1}) \bigr) \geq 1-\alpha$
whether or not scores are almost-surely distinct. To do so,
we show below that
\begin{align}
\label{eq:atoms}
\forall k \in [T_{\calib}], \quad
\P\bigl( \widetilde{s}_{T+1,i} \leq \wts_{(k),i} \bigr) \geq \frac{k}{T_{\calib}+1}
\quad \mbox{and} \quad
\P\bigl( \widetilde{s}_{T+1,i} < \wts_{(k),i} \bigr) \leq \frac{k}{T_{\calib}+1}\,,
\end{align}
so that, given the rewriting~\eqref{eq:event-interest-coverage}, we will end up with
\begin{align*}
\P\bigl( y_{T+1,i} \in \wt{C}_i(\bx_{T+1}) \bigr) &\geq
\frac{\lceil(T_{\calib} +1)(1-\alpha/2)\rceil -\lfloor(T_{\calib} +1)\alpha/2\rfloor}{T_{\calib}+1}\\
&\geq \frac{(T_{\calib} +1)(1-\alpha/2) -(T_{\calib} +1)\alpha/2}{T_{\calib}+1} = 1-\alpha\,.
\end{align*}
It only remains to show~\eqref{eq:atoms}. The event $\bigl\{ \widetilde{s}_{T+1,i} \leq \wts_{(k),i} \bigr\}$
is exactly the fact that $\widetilde{s}_{T+1,i}$ is among the $k$ smallest elements of
the $(\widetilde{s}_t)_{t \in \cD_{\calib} \cup \{T+1\}}$. By exchangeability,
the probability of the latter event is at least $k/(T_{\calib}+1)$; it may be larger
if several scores take the same value as the $k$--th smallest value. Similarly,
the event $\{ \widetilde{s}_{T+1,i} < \wts_{(k),i} \}$ is exactly the
fact that $\widetilde{s}_{T+1,i}$ is among the $k$ smallest elements of
the $(\widetilde{s}_t)_{t \in \cD_{\calib} \cup \{T+1\}}$ and that there
are no ties at the $k$--th smallest value.
Due to the additional no-tie condition, and by exchangeability,
the probability of the latter event is at most $k/(T_{\calib}+1)$.

\subsection{Proof of Theorem~\ref{th:coverage messoudi}}

We first note that by definition,
\[
\Big\{ \bop{y}_{T+1} \in \check{E}(\bop{x}_{T+1}) \Big\} = \Big\{ \check{s}_{T+1} \leq \check{s}_{\big(\left\lceil (T_{\calib}+1)(1-\alpha)\right\rceil \big)} \Big\}\,,
\]
which replaces the equality~\eqref{eq:event-interest-coverage} in the proof above.
The same classical arguments that were already detailed above may then be adapted. The proof is identical for $\mathring{E}(\bop{x}_{T+1})$.

\section{Full details for the simulations: settings, methodology, results}
\label{app:simu}

In this appendix, we provide the full details
of the specifications and of the results of the numerical experiments summarized in Section~\ref{sec:simu}. The experimental setting described in Section~\ref{app:settings} is common to joint and component-wise coverage results.

\subsection{Experimental setting}
\label{app:settings}

The objective of the experiments on synthetic data is to illustrate how performance varies depending on the complexities of the hierarchies (in terms of depths and nodes) considered while controlling for the number $T = 10^6$ of observations available and for the forecasting task.

\subsubsection{Computational resources used} \label{app:computationalressources}

All experiments were conducted on a high-performance computing environment with a limited amount of 95 compute nodes per user. Each node is equipped with two CPUs, 36 cores in total, and 192 GiB of RAM. The computational workload was parallelized at the level of simulation jobs, where each job corresponds to one run for a given configuration. One run consists of the following steps:
1.\ data generation (Appendix~\ref{app:datageneration}); 2.\ forecasting (Appendix~\ref{sec:train artificial});
3.\ conformal prediction (Section~\ref{sec:componentwisemethodology}).

The complete experiment (with $N = 10^3$ runs) on the 4 smallest hierarchies was completed within approximately 2 hours using the parallelized setup, which emphasizes the computational efficiency of our approach. Each run for the 2 most complex hierarchies took approximatively 7 hours because of high dimension $m$. Due to memory constraints, the parallelization for these two configurations requires an entire node for each run. Since $N = 10^3$ runs were computed for each hierarchy, the experiment took approximately 3 days in the 2 most complex cases -- accounting for the 95-node parallelism ($10^3 \times 7 / 95 \approx 73.7$ hours).
In addition, we ran preliminary experiments to determine the types of hierarchies that would be possibly interesting.

\subsubsection{Data generation}\label{app:datageneration}

Several parameters need to be set to generate i.i.d.\ hierarchical data $(\bx_t,\by_t)_{1 \leq t \leq T}$.
The most critical one in our simulations is the structural matrix~$H$.

\textbf{Choice of the structural matrix.}
We used 6 different hierarchical configurations, of two main types: A and B.
The simplest type-A hierarchical configuration (numbered Configuration~1) is the following:
\begin{center}
    \centering
    \begin{tikzpicture}[
      edge from parent/.style={draw},
      level 1/.style={sibling distance=3cm,level distance=0.7cm},
      level 2/.style={sibling distance=0.8cm,level distance=0.9cm}
      ]
      \node {$y_{16}$}
        child {node {$y_{13}$}
          child {node {$y_1$}}
          child {node {$y_{2}$}}
          child {node {$y_{3}$}}
          child {node {$y_{4}$}}
        }
        child {node {$y_{14}$}
          child {node {$y_5$}}
          child {node {$y_{6}$}}
          child {node {$y_{7}$}}
          child {node {$y_{8}$}}
        }
        child {node {$y_{15}$}
          child {node {$y_9$}}
          child {node {$y_{10}$}}
          child {node {$y_{11}$}}
          child {node {$y_{12}$}}
        };
    \end{tikzpicture}
\end{center}
It is of depth 3, features a root node, $3^k$ child nodes, each of them having $4^k$ leaves, where $k=1$.
Configurations~3 and~5 are also of type A, for values $k=2$ and $k=3$, respectively.

The simplest type-B hierarchical configuration (numbered Configuration~2)
has the same number of leaves but is deeper:
\begin{center}
    \centering
    \begin{tikzpicture}[
      edge from parent/.style={draw},
      level 1/.style={sibling distance=4cm,level distance=0.7cm},
      level 2/.style={sibling distance=2cm,level distance=0.8cm},
      level 3/.style={sibling distance=0.7cm,level distance=0.9cm}
      ]
      \node {$y_{19}$}
        child {node {$y_{17}$}
          child {node {$y_{13}$}
            child {node {$y_1$}}
            child {node {$y_{2}$}}
            child {node {$y_{3}$}}
            }
          child {node {$y_{14}$}
           child {node {$y_{4}$}}
            child {node {$y_5$}}
            child {node {$y_{6}$}}
            }
        }
        child {node {$y_{18}$}
          child {node {$y_{15}$}
            child {node {$y_7$}}
            child {node {$y_{8}$}}
            child {node {$y_{9}$}}
            }
          child {node {$y_{16}$}
           child {node {$y_{10}$}}
            child {node {$y_{11}$}}
            child {node {$y_{12}$}}
            }
        };
    \end{tikzpicture}
\end{center}
It is of depth 4, features a root node, $2^k$ child nodes, $2^k \times 2^k$ grandchild nodes,
each of them having $3^k$ leaves, where $k=1$.
Configurations~4 and~6 are also of type B, for values $k=2$ and $k=3$, respectively.

We summarize the complexities of the hierarchical configurations considered in the table below,
where we recall that $m$ denotes the total number of nodes, and $n = 12^k$ the number of leaves
(which only depends on $k$, not on the type).

\begin{center}
\begin{tabular}{@{}cccccccc@{}}
\toprule
Config. & Type & $k$ & depth & $n$ & $m$\\
\midrule
 1 & A & 1 & 3 & 12 & 16 \\
 2  & B & 1 & 4 & 12 & 19 \\
 3  & A & 2 & 3 & 144 & 154 \\
 4  & B & 2 & 4 & 144 & 165 \\
 5  & A & 3 & 3 & 1,728 & 1,756 \\
 6  & B & 3 & 4 & 1,728 & 1,801 \\
\bottomrule \\
\end{tabular}
\end{center}

For the sake of completeness, and for readability of the code submitted,
we also display the general form of the structural matrices $H$, which are of size $m \times n$,
for type-A and type-B configurations, respectively:
\[
H = \left(
\begin{array}{cccc}
& ~~~~~~~~~\operatorname{Id_{12^k}} \\
\underbrace{1 \quad 1 \quad \cdots \quad 1}_{4^k} & & & \\
& \underbrace{1 \quad 1 \quad \cdots \quad 1}_{4^k} & & \\
& & \ddots & \\
& & & \underbrace{1 \quad 1 \quad \cdots \quad 1}_{4^k} \\
1 \quad\quad \quad\quad\quad\quad  & ~~~~~~\cdots & &\quad\quad\quad\quad\quad\quad 1
\end{array}
\right)
\]
and
\[
H = \left(
\begin{array}{ccccccc}
& & &  ~~\operatorname{Id_{12^k}} \\
\underbrace{1 \quad \cdots \quad 1}_{3^k} \\
& \cdots\\
& &\underbrace{1 \quad \cdots \quad 1}_{3^k}\\
& & &\cdots \\
&&&&\underbrace{1 \quad \cdots \quad 1}_{3^k}  \\
&&&&& \cdots \\
&&&&&& \underbrace{1 \quad \cdots \quad 1}_{3^k} \\
\multicolumn{3}{c}{\underbrace{1 \quad \quad \quad \quad \quad \quad \cdots \quad \quad \quad \quad \quad \quad  1}_{6^k}} \\
&&& \cdots\\
& & & & \multicolumn{3}{c}{\underbrace{1 \quad
\quad \quad \quad \quad \cdots  \quad \quad \quad \quad \quad  1}_{6^k}} \\
\multicolumn{3}{c}{1 \quad \quad \quad \quad \quad \quad \quad \quad \quad \quad \quad \quad \quad ~ } & \cdots & \multicolumn{3}{c}{ \quad \quad \quad \quad \quad \quad \quad \quad \quad \quad \quad \quad  1}
\end{array}
\right)
\]

\paragraph{The other data-generation steps.}
We now explain how to generate the observations at $n = 12^k$ leaves (i.e., at the most disaggregated level).
They are obtained as realizations of a model with additive effects of three covariates and with an additional multivariate noise.
All of the covariates, effect functions and correlations will be drawn at random as described in the following paragraphs.

\textbf{Data generation: initial draw of the parameters.} For each of the $N$ runs, we first pick at random a function $f : \R^3 \to \R^n$ and a correlation matrix $R$. We do so as explained later in this description.

\textbf{Data generation: draw of $T$--sample.} Then, given $H$, $f$, and $R$, we draw a $T$--sample $(\bx_t,\by_t)_{1 \leq t \leq T}$ of data as follows.
First, the features $\bx_t \in \R^3$ are drawn i.i.d.\ according to a Gaussian distribution:
\[
\bx_t = \left[ \begin{array}{c} x_{t,1} \\ x_{t,2} \\ x_{t,3} \end{array} \right]
\sim \mathcal{N} \!\left( \left[ \begin{array}{c} 10 \\ -5 \\ {5} \end{array} \right],
\begin{bmatrix} 2 & 0 & 0 \\ 0 & 2 & 0 \\ 0 & 0 & 1 \end{bmatrix} \right).
\]
Next, the observations $\by_{t,1:n} \in \mathbb{R}^{n}$ at the most disaggregated level are generated i.i.d.\
according to the following additive model:
\begin{equation}
\label{eq:generation base}
\by_{t,1:n} = f(\bx_t) + \bop{\epsilon}_t\,, \qquad \mbox{where} \qquad \bop{\epsilon}_t \sim
\mathcal{N} \!\left( \left[ \begin{array}{c} 10 \\ \vdots\smallskip \\ 10 \end{array} \right], \,\, R \right).
\end{equation}

The complete vectors of observations are finally given by $\by_t = H \by_{t,1:n}$.

\textbf{Data generation: initial draw of the parameters, continued.}
To obtain the correlation matrix $R$, a matrix $M$ is drawn component-wise, in an i.i.d.\ manner:
the $M_{i,j}$, where $i,j \in [n]$, follow a standard Gaussian distribution $\mathcal{N}(0,1)$. Then,
\[
D = \sqrt{\Diag(M^{\transp} M)} \qquad \mbox{and} \qquad
R = 100 \, D^{-1}M^{\transp} M D^{-1} \,.
\]

We draw $f = (f_1,\ldots,f_n)^{\transp}$ component-wise. To do so, we consider the following basis functions $\R^3 \to \R$:
\[
\begin{array}{lll}
g_1(\bx_t) = x_{1,t}\,, \quad & g_5(\bx_t) = x_{2,t}\,, \quad & g_9(\bx_t) = x_{3,t}\,, \smallskip \\
g_2(\bx_t) = x_{1,t}^2\,, \quad & g_6(\bx_t) = x_{2,t}^2\,, \quad & g_{10}(\bx_t) = x_{3,t}^2\,, \smallskip \\
g_3(\bx_t) = \sin(x_{1,t})\,, \quad & g_7(\bx_t) = \cos(x_{2,t}) \,, \quad & g_{11}(\bx_t) = \exp(x_{3,t}) \,, \smallskip \\
g_{4}(\bx_t) = \log{\big(|x_{1,t}|+1\big)} \,, \quad &  g_{8}(\bx_t) = \sqrt{x_{2,t}} \,.
\end{array}
\]
We now explain how $f_i$ is drawn for each component $i \in [n]$.
First, the number $k_i$ of effects to consider is drawn uniformly in the set $[11]$.
Then, we sample with replacement $k_i$ basis functions in the set $\{g_1,\ldots, g_{11}\}$; we denote them by $h_{i,1},\ldots,h_{i,k_i}$.
Finally, we add signs: we draw $k_i$ i.i.d.\ symmetric Rademacher random variables $r_{i,1},\ldots,r_{i,k_i}$ (i.e., variables that
take values ${-1}$ and $1$ with respective probabilities $1/2$).
All in all, we let
\[
f_i = \sum_{j=1}^{k_i} r_{i,j} \, h_{i,j}\,.
\]

\subsubsection{Data splitting}
\label{sec:split}

We take $T = 10^6$ since a large number of data points is necessary to provide an accurate estimate of the covariance matrix $\Sigma$ for large number of nodes $m$.
These $T$ observations are first randomly split into two subsets,
containing $80\%$ and $20\%$ of the data.

The smaller subset is referred to as the test set and is denoted by $\cD_{\test}$.
Its data points will play the role of the $(\bx_{T+1},\by_{T+1})$,
as explained later in Appendices~\ref{app:evaljoint}--\ref{app:eval componentwise}.

The larger subset of $80\%$ of the data is split again in three
sub-subsets, containing $40\%$ (train set $\cD_{\train}$),
$20\%$ (estimation set $\cD_{\estim}$), and $20\%$ (calibration set $\cD_{\calib}$)
of the total data. These data points are used to construct the prediction regions, which are either
of an ellipsoidal form (for Algorithms~\ref{alg:messoudi}--\ref{alg:reconciled messoudi}):
\[
\bx \longmapsto E(\bx) = \Big\{ \bop{y}\in\mathbb{R}^m :
\big\lVert \bop{y}- \widetilde{\bop{\mu}}(\bx) \big\rVert_A \leq q_{1-\alpha} \Big\}\,,
\]
or are hyper-rectangular (in Algorithms~\ref{alg:bench}--\ref{alg:main}--\ref{alg:sigma}):
\[
\bx \longmapsto \prod_{i=1}^m \wt{C}_i(\bx) =
\prod_{i=1}^m
\Big[\widetilde{\mu}_i(\bx) + \widetilde{q}_{\alpha/2}^{(i)}, \,
\widetilde{\mu}_i(\bx)+ \widetilde{q}_{1-\alpha/2}^{(i)} \Big]\,,
\]
and only depend on the features $\bx$ through the centers $\widetilde{\mu}_i(\bx)$.
The algorithms that do not use an estimation set $\cD_{\estim}$,
i.e., Algorithms \ref{alg:bench}--\ref{alg:main},
simply ignore data points in $\cD_{\estim}$.

\subsubsection{Train set: regression algorithm~$\cA$}
\label{sec:train artificial}
\label{sec:GAM}

The last piece to fully define the procedures implemented is
to describe the regression algorithm~$\cA$
given as input to Algorithms~\ref{alg:messoudi}--\ref{alg:reconciled messoudi}--\ref{alg:bench}--\ref{alg:main}--\ref{alg:sigma}.
This algorithm will be given by a base forecasting method run independently at each node.

Before we describe the base forecasting method, we mention
a constraint that we impose.
It turns out that in the practice of hierarchical forecasting,
explanatory variables are not necessarily all available at every level of granularity within the hierarchical structure.
This also makes the hierarchy more interesting from a forecasting viewpoint since the observations at some
nodes are harder to predict than others.

To reproduce this specificity, for each of the nodes at the most disaggregated level, indexed by $i \in [n]$,
we draw independently a Bernoulli variable $\rho_i$ with parameter~$0.8$:
if $\rho_i = 1$, then the forecasting strategy may use the entire
vectors $\bx_t$; otherwise, the forecasting strategy only accesses to
$\bx'_t = (x_{t,1},x_{t,2})^{\transp}$. For inner nodes $i > n$, we set $\rho_i=1$.

It only remains to describe the forecasting strategy
used independently at each node $i \in [m]$,
based on features that lie in $\R^2$ or $\R^3$.
Given the additive nature~\eqref{eq:generation base} of the data,
a natural choice is to resort to
the theory of estimation of generalized additive models, see
a reminder below.


For each $i \in [m]$,
depending on $\rho_i$, the regression estimate $\wh{\mu}_i$ produced for the $i$--th component
of the $\by$ is thus of the form
\[
    \wh{\mu}_i : \bx \longmapsto
\begin{cases}
    \wh{\mu}_i^{(1)}(x_{1}) + \wh{\mu}_i^{(2)}(x_{2}) + \wh{\mu}_i^{(3)}(x_{3}), & \text{if } \rho_i = 1\\
    \wh{\mu}_i^{(1)}(x_{1}) + \wh{\mu}_i^{(2)}(x_{2}), & \text{otherwise.}
\end{cases}
\]

\paragraph{Reminder on generalized additive models.}
Generalized additive models (GAMs, \citealp{wood2017generalized}) are a popular modeling for many real-world problems, like electricity demand (see \citealp{wood2015generalized}).
They form a good compromise between forecast efficiency and interpretability.
In that setting, univariate response variables $z_t$ based on features $\bx_t \in \R^d$, where $t \in [T]$,
are expressed as
\begin{equation}
\label{eq:GAM}
z_t = \beta_0 + \sum_{j=1}^d m_j(x_{t,j}) + \varepsilon_t \,,
\end{equation}
where the $m_j : \R \to \R$ do not depend on $t$ and are called the non-linear effects, and where
the $\varepsilon_t$ are i.i.d.\ random noises.
The non-linear effects $m_j$ are each possibly decomposed on a given spline basis $(B_{j,k})$,
chosen by the forecasting agent:
\begin{equation*}
m_j : x \in \R \longmapsto  \sum\limits_{k=1}^{K_j} \beta_{j,k} \, B_{j,k}(x) \,,
\end{equation*}
where $K_j$ depends on the dimension of the spline basis.
Estimating the model~\eqref{eq:GAM} then
amounts to estimating the coefficients~$\beta_{j,k}$.

At a high level, we may write that the estimation of these coefficients $\beta_{j,k}$ is performed via by penalized least-squares,
where the penalty term therein involves the second derivatives of the functions $m_j$, forcing the effects to be smooth.
We resorted to the R package \texttt{mgcv} of \citet{wood2015package} in our simulations,
with the basis by default: the thin plate spline basis, with a maximum number of degrees of freedom of $10$, and coefficient \texttt{sp=1} for fixed penalties. To improve computational efficiency, we estimate the spline coefficients using the \texttt{bam} function with the \texttt{discrete=TRUE} option, as described in \cite{wood2015generalized}. This allows for optimal parallelization and data compression.

\subsection{Evaluation and results: SCP for joint coverage based on ellipsoidal sets} \label{app:evaljoint}
\label{sec:eval-joint}

We first illustrate SCP for joint coverage, i.e.,
Theorem~\ref{th:coverage joint_informal} (and its formal restatements,
Theorems~\ref{th:reconciled coverage messoudi} and~\ref{th:coverage messoudi}).

\subsubsection{Evaluation on one given test set}

The metrics we consider in Section~\ref{sec:setting} for ellipsoidal prediction sets are both in terms of joint-coverage probability and of expected volume,
where in both cases, probabilities are with respect to all data (the observations to
be predicted as well as the data used to compute the prediction regions).

For the (conditional) joint-coverage probability,
we resort to Monte-Carlo-type estimates, based on data in the test set $\cD_{\test}$,
with cardinality $T_{\test} = 2 \cdot 10^5$:
given the specifications of the experiment (i.e., $f$, $R$, and the $\rho_i$)
and given data in the sets $\cD_{\train}$, $\cD_{\estim}$, and $\cD_{\calib}$,
\[
\hat{c} \eqdef \frac{1}{T_{\test}} \sum_{t \in \cD_{\test}} \ind_{\bigl\{\bop{y}_{t} \in \mathring{E}(\bx_t) \bigr\}}
\]

The volume of a $m$--dimensional ellipsoid with center $\bop{y}_0$, determined by an $A$--norm, and with radius~$r$, i.e.,
\[
E(\bop{y}_0,A,r) = \big\{ \bop{y}\in\mathbb{R}^m : \lVert \bop{y}- \bop{y}_0 \rVert_A \leq r \big\}\,,
\]
equals
\begin{align*}
\Leb_m \bigl(E(\bop{y}_0,A,r)\bigr) = r^m \operatorname{det}\bigl(A\bigr)^{-1/2} \Leb_m\bigl(B_m\bigr)\,,
\end{align*}
where $B_m = \{\by \in \mathbb{R}^m:\lVert \by \rVert_2 \leq 1 \} $ is the Euclidean unit ball.
To control for the values of $m$ in the hierarchical configurations considered,
we rather report the following normalized version of the volume:
\begin{align*}
v\bigl(E(\bop{y}_0,A,r)\bigr) = r \, \operatorname{det}\bigl(A\bigr)^{-1/(2m)}\,,
\end{align*}
which only depends on $A$ and $r$.
Now, the matrix~$A$ and the radius $r$ of the ellipsoidal prediction regions
are constant over $\cD_{\test}$ (only the centers vary) and we may thus
compute with the formula above the conditional expectation of the normalized volume,
conditionally to the data in the sets $\cD_{\train}$, $\cD_{\estim}$, and $\cD_{\calib}$
and to the specifications of the experiment.

We actually run the entire procedure a large number of times ($N = 10^3$) to get
unconditional probabilities and expectations, as described next.

\subsubsection{Evaluation thanks to Monte-Carlo estimates based on large numbers of runs} \label{sec:montecarloartificialjoint}

We run $N = 10^3$ times the entire procedure described above and get, for each run, an estimate of
the conditional coverage probability and the exact value of the conditional expectation of the normalized volume, which we denote by
\[
\hat{c}^{(r)} \quad \mbox{and} \quad v^{(r)}\,, \qquad \mbox{where} \quad r \in [10^3]\,.
\]
We in turn get the following estimates for the unconditional coverage probability and unconditional expectation of the normalized volume:
\[
\ol{c} \eqdef \frac{1}{10^3} \sum_{r=1}^{10^3} \hat{c}^{(r)}
\qquad \mbox{and} \qquad
\ol{v} \eqdef \frac{1}{10^3} \sum_{r=1}^{10^3} v^{(r)} \,.
\]
These empirical means estimate the underlying true values up to $95\%$--confidence errors margins given by
\[
\gamma_{c} \eqdef 1.96 \, \frac{\std \Bigl( \hat{c}^{(1)}, \, \ldots, \, \hat{c}^{(10^3)} \Bigr)}{\sqrt{10^3}}
\qquad \mbox{and} \qquad
\gamma_{v} \eqdef
1.96 \, \frac{\std\Bigl( v^{(1)} , \, \ldots, \, v^{(10^3)} \Bigr)}{\sqrt{10^3}}\,,
\]
where $\std(x_1,\ldots,x_{10^3})$ denotes the standard deviation of the arguments provided:
\[
\std(x_1,\ldots,x_{10^3}) =
\sqrt{\frac{1}{10^3} \sum_{r=1}^{10^3} \bigl( x_r - \ol{x}_{10^3} \bigr)^2}\,,
\qquad \mbox{where} \quad
\ol{x}_{10^3} = \frac{1}{10^3} \sum_{r=1}^{10^3} x_r \,.
\]

Finally, we report in the table below the following point estimates and associated confidence intervals on the underlying unconditional probabilities and expectations:
\begin{equation}
\label{eq:indMCjoint}
\ol{c} \quad \mbox{and} \quad {\ol{v}}\,,
\qquad \qquad
\bigl[ \ol{c} \pm \gamma_{c} \bigr] \quad \mbox{and} \quad \bigl[ \ol{v} \pm \gamma_{v} \bigr]\,.
\end{equation}

\subsubsection{Results: joint coverages and normalized volumes of ellipsoidal prediction regions}\label{app:resultjoint}

Section~\ref{sec:simu} only reported the results in terms of volumes and for the Mahalanobis distance, as in \citet{johnstone2021conformal},
which corresponds to considering
\[
A = \mathcal{E}\bigl( (\bx_t,\by_t)_{t \in \mathcal{D}_{\train} \cup \mathcal{D}_{\estim}}
\bigr) = \hat{\Sigma}^{-1}
\]
in Algorithms~\eqref{alg:messoudi} and~\eqref{alg:reconciled messoudi}.
In particular, the non-conformity scores are given by the $\hat{\Sigma}^{-1}$--norm $\lVert \,\cdot\, \rVert_{\hat{\Sigma}^{-1}}$ of the forecast errors on the calibration set.
This choice is natural to produce prediction regions that fit the underlying distribution, as it takes into account the dependencies within the components of the multivariate target (\citealp{johnstone2021conformal}, \citealp{messoudi2022ellipsoidal}). However, for the sake of completeness, we also consider estimation procedures $ \mathcal{E}$ that produce diagonal matrices, namely $A = \operatorname{Id}_m$ and $A = \Diag(\hat\Sigma)^{-1} $, the Moore-Penrose pseudo-inverse of the diagonal matrix $\Diag(\hat\Sigma)$ defined at the end of Section~\ref{sec:algo-sigma}.

We only report the results achieved for the smallest hierarchies, i.e., Configurations~1--2,
and illustrate Theorem~\ref{th:coverage joint_informal} (and its formal restatements,
Theorems~\ref{th:reconciled coverage messoudi} and~\ref{th:coverage messoudi})

Coverage-wise, the next table indicates that the nominal joint-coverage of $1-\alpha = 90 \%$ is achieved irrespective
of the algorithm considered or choice of matrix $A$.

\begin{center}
\begin{tabular}{ccccccc}
\toprule
Matrix $A$ & Config. & Alg.~\eqref{eq:alg:messoudi} & Alg.~\eqref{eq:alg:reconciled messoudi}  \\
\midrule
 \multirow{2}{0.8cm}{$\operatorname{Id}_m$} & 1 & 90\% $\pm$ 0.0059\% &  90\% $\pm$ 0.0059\% \\
 & 2 &  90\% $\pm$ 0.0061\% &  90\% $\pm$ 0.0061\% \smallskip \\
 \multirow{2}{0.8cm}{$\Diag(\hat\Sigma)^{-1}$} & 1 &  90\% $\pm$ 0.0060\% &  90\% $\pm$ 0.0063\% \\
 & 2 &  90\% $\pm$ 0.0060\% &  90\% $\pm$ 0.0059\% \smallskip  \\
 \multirow{2}{0.8cm}{$\hat\Sigma^{-1}$} & 1 & 90\% $\pm$ 0.0058\% &  90\% $\pm$ 0.0059\% \\
 & 2 & 90\% $\pm$ 0.0060\% &  90\% $\pm$ 0.0061\% \\
\bottomrule
\end{tabular}
\end{center}

Efficiency-wise, and as expected, we obtain smaller volumes with Algorithm~\eqref{eq:alg:reconciled messoudi},
that performs a projection step, than with the benchmark, Algorithm~\eqref{eq:alg:messoudi}.
Note also that we recover one key finding by \citet{johnstone2021conformal} and \citet{messoudi2022ellipsoidal}:
that prediction regions are particularly small with the choice $A = \hat\Sigma^{-1}$.

\begin{center}
\begin{tabular}{ccccccc}
\toprule
Matrix $A$ & Config. & Alg.~\eqref{eq:alg:messoudi} & Alg.~\eqref{eq:alg:reconciled messoudi}  \\
\midrule
 \multirow{2}{0.8cm}{$\operatorname{Id}_m$} & 1 & 284 $\pm$ 12 &  \textbf{257 $\pm$ 11}\\
 & 2 &  285 $\pm$ 12&  \textbf{249 $\pm$ 10}  \\
 \multirow{2}{0.8cm}{$\Diag(\hat\Sigma)^{-1}$} & 1 &  314 $\pm$ 6.8 &  \textbf{310 $\pm$ 6.7} \\
 & 2 &  370 $\pm$ 7.9 & \textbf{362 $\pm$ 7.7} \\
 \multirow{2}{0.8cm}{$\hat\Sigma^{-1}$} & 1 & 17.4 $\pm$ 0.6 & \textbf{16.5 $\pm$ 0.55} \\
 & 2 & 3.36 $\pm$ 0.12 & \textbf{3.19 $\pm$ 0.11}\\
\bottomrule
\end{tabular}
\end{center}

\subsection{Evaluation and results: component-wise SCP} \label{app:eval componentwise}

We illustrate the component-wise results for SCP, namely,
Theorems~\ref{th:reduc-W} and~\ref{th:mint}.
We follow the same structure as in Appendix~\ref{sec:eval-joint}.

\subsubsection{Evaluation on one given test set}

Component-wise SCP prediction sets should be evaluated both in terms of component-wise coverage probabilities and of expected total squared lengths,
for a given vector of weights; we actually pick $\bw = \bone = (1,\ldots,1)^{\transp}$.

On the test set of a given run, we estimate the conditional coverage probabilities and
compute exactly the conditional expectations of the lengths,
given the specifications of the experiment (i.e., $f$, $A$, and the $\rho_i$)
and given data in the sets $\cD_{\train}$, $\cD_{\estim}$, and $\cD_{\calib}$:
for each $i \in [m]$,
\[
\hat{c}_i \eqdef \frac{1}{T_{\test}} \sum_{t \in \cD_{\test}} \ind_{\bigl\{y_{t,i} \in \wt{C}_i(\bx_t) \bigr\}}
\qquad \mbox{and} \qquad
\ell_i \eqdef \Leb_1 \bigl( \wt{C}_i(\,\cdot\,) \bigr)\,;
\]
we note as before that the lengths of the intervals $\wt{C}_i(\bx)$ do not depend on $\bx$ and denote
by $\Leb_1 \bigl( \wt{C}_i(\,\cdot\,) \bigr)$ their common value.

\subsubsection{Evaluation thanks to Monte-Carlo estimates based on large numbers of runs} \label{sec:montecarlocomponentwise}

We propose component-wise metrics (for the figures) and global metrics (for the tables).

\textbf{Component-wise metrics.}
We perform $N = 10^3$ runs and get, for each run and each component $i \in [m]$, an estimate of
the conditional coverage probability and the exact value of the conditional expectation of the length, which we denote by:
\[
\hat{c}_i^{(r)} \quad \mbox{and} \quad \ell_i^{(r)}\,, \qquad \mbox{where} \quad i \in [m] \ \ \mbox{and} \ \ r \in [10^3]\,.
\]
We in turn get the following estimates for the unconditional coverage probabilities and unconditional expectation of the squared lengths:
for each $i \in [m]$,
\[
\ol{c}_i \eqdef \frac{1}{10^3} \sum_{r=1}^{10^3} \hat{c}_i^{(r)}
\qquad \mbox{and} \qquad
\ol{\ell}_i  \eqdef
\frac{1}{10^3} \sum_{r=1}^{10^3} \bigl( \ell_i^{(r)} \bigr)^2\,.
\]
These empirical means estimate the underlying true values
up to $95\%$--confidence errors margins given by
\[
\gamma_{c,i} \eqdef 1.96 \, \frac{\std \Bigl( \hat{c}_i^{(1)}, \, \ldots, \, \hat{c}_i^{(10^3)} \Bigr)}{\sqrt{10^3}}
\qquad \mbox{and} \qquad
\gamma_{\ell,i} \eqdef
1.96 \, \frac{\std\Bigl( \bigl( \ell_i^{(1)} \bigr)^2, \, \ldots, \, \bigl( \ell_i^{(10^3)} \bigr)^2 \Bigr)}{\sqrt{10^3}}\,.
\]
For scaling issues on the lengths, we rather report, in our experiments, when dealing with component-wise quantities (i.e., in the figures),
the following point estimates and associated confidence intervals on the
underlying unconditional probabilities and expectations: for all $i \in [m]$,
\begin{equation}
\label{eq:indMCcomponent1}
\ol{c}_i \quad \mbox{and} \quad \sqrt{\ol{\ell}_i}\,,
\qquad \qquad
\bigl[ \ol{c}_i \pm \gamma_{c,i} \bigr] \quad \mbox{and} \quad \Bigl[ \sqrt{\ol{\ell}_i - \gamma_{\ell,i}}\,,
\sqrt{\ol{\ell}_i + \gamma_{\ell,i}} \Bigr]\,.
\end{equation}

\textbf{Global metrics: total lengths.}
We also report results on the total lengths, i.e., for the quantities appearing in the efficiency
result of Theorem~\ref{th:reduc-W}, where we recall that $\bw = \bone$.

In the same spirit as right above,
we consider
\[
L_\bullet^{(r)} =  \sum_{i=1}^{m} \bigl( \ell_i^{(r)} \bigr)^2 \,, \quad \mbox{where} \ \ r \in [10^3]\,,
\qquad \mbox{and} \qquad
\ol{L}_\bullet \eqdef
\frac{1}{10^3} \sum_{r=1}^{10^3} L_\bullet^{(r)} = \sum_{i=1}^{m} \ol{\ell}_i \,.
\]
This empirical mean estimates the underlying expected sum of the squared lengths
up to $95\%$--confidence errors margins given by
\[
\gamma_{L,\bullet} \eqdef
1.96 \, \frac{\std\Bigl( L_\bullet^{(1)}, \, \ldots, \, L_\bullet^{(10^3)} \Bigr)}{\sqrt{10^3}}\,,
\]
For the same scaling issues as above, we rather report in our experiments
roots of the quantities defined above. Actually, to get
symmetric intervals and report more easily the results in the table of
Appendix~\ref{sec:resultcomponentwise},
we provide slightly larger confidence intervals (based on the inequality
$\sqrt{u+v} \leq \sqrt{u}+\sqrt{v}$).
More precisely, we report in the table
the following point estimates and associated confidence intervals:
\begin{equation}
\label{eq:indMCcomponent2}
\sqrt{\ol{L}_\bullet}\,,
\qquad \qquad
\Bigl[ \sqrt{\ol{L}_\bullet} - \sqrt{\gamma_{L,\bullet}}\,,
\sqrt{\ol{L}_\bullet} + \sqrt{\gamma_{L,\bullet}} \Bigr]\,.
\end{equation}

\subsubsection{Results: total lengths of hyper-rectangular prediction sets}\label{sec:resultcomponentwise}

In this section, we go over the results presented in Section~\ref{sec:simu}. For the convenience of the reader, we copy below the table presented in the aforementioned section, as well as the comments made therein.

\tablegloballength

\textbf{Additional comments.} MinT is unreliable on the largest hierarchies, namely,
for Configurations~5 and 6. This limitation appears when numerous components are to be predicted:
$m=$1,756 and $m=$1,801, which may suggest either a poor estimation of the covariance matrix or non-invertibility issues. Our understanding of the phenomenon is that if the base forecasts are good, then they should be almost coherent. Consequently, some forecasts are (almost) linear combinations of the others and the covariance matrix of the forecast errors can be (near) singular, especially for a large number of nodes. Our intuition is that this very phenomenon is the origin of the lack of robustness encountered during the experiment (indeed, we have $T = 10^6$ observations, which leaves a descent amount of $2\cdot10^5$ observations to estimate the covariance matrix). WLS does not encounter this issue because the diagonal matrix $\Diag\bigl(\wh{\Sigma}\bigr)$ used in the reconciliation step remains invertible. The takeaway message of this limitation is that, in practice, we advocate for the robust approach WLS instead of MinT, which attempts
to mimic the theoretically optimal approach.

\subsubsection{Results: component-wise coverages and lengths}\label{app:resultcomponentwise}

Section~\ref{sec:simu} (and the section above) only reported global
results of efficiency. We now move to a component-wise study, and
want to determine whether the conclusions made at a global level---in particular,
that WLS is a robust improvement to the benchmark---hold also at
individual levels.

To do so, we must be able to report concisely the indicators defined
in~\eqref{eq:indMCcomponent1}, for all $i \in [m]$.

\textbf{Report for a given $i$.}
We first explain how we report these indicators for a given $i$.
We do so via a graph whose $x$--axis is dedicated to coverage levels (in $\%$)
and whose $y$--axis indicates lengths.

The center of the cross is formed by $\bigl(\ol{c}_i,\sqrt{\ol{\ell}_i}\bigr)$
and the horizontal and vertical whiskers report, respectively,
\[
\bigl[ \ol{c}_i \pm \gamma_{c,i} \bigr] \quad \mbox{and} \quad \Bigl[ \sqrt{\ol{\ell}_i - \gamma_{\ell,i}}\,,
\sqrt{\ol{\ell}_i + \gamma_{\ell,i}} \Bigr]\,.
\]
We summarize these elements in Figure~\ref{fig:given-i}.
We evaluate performance as follows:
the closer the cross to the lower center of the plot,
the better the performance.

\begin{figure}[t]
\begin{center}
\fbox{\includegraphics[width=0.7\linewidth]{./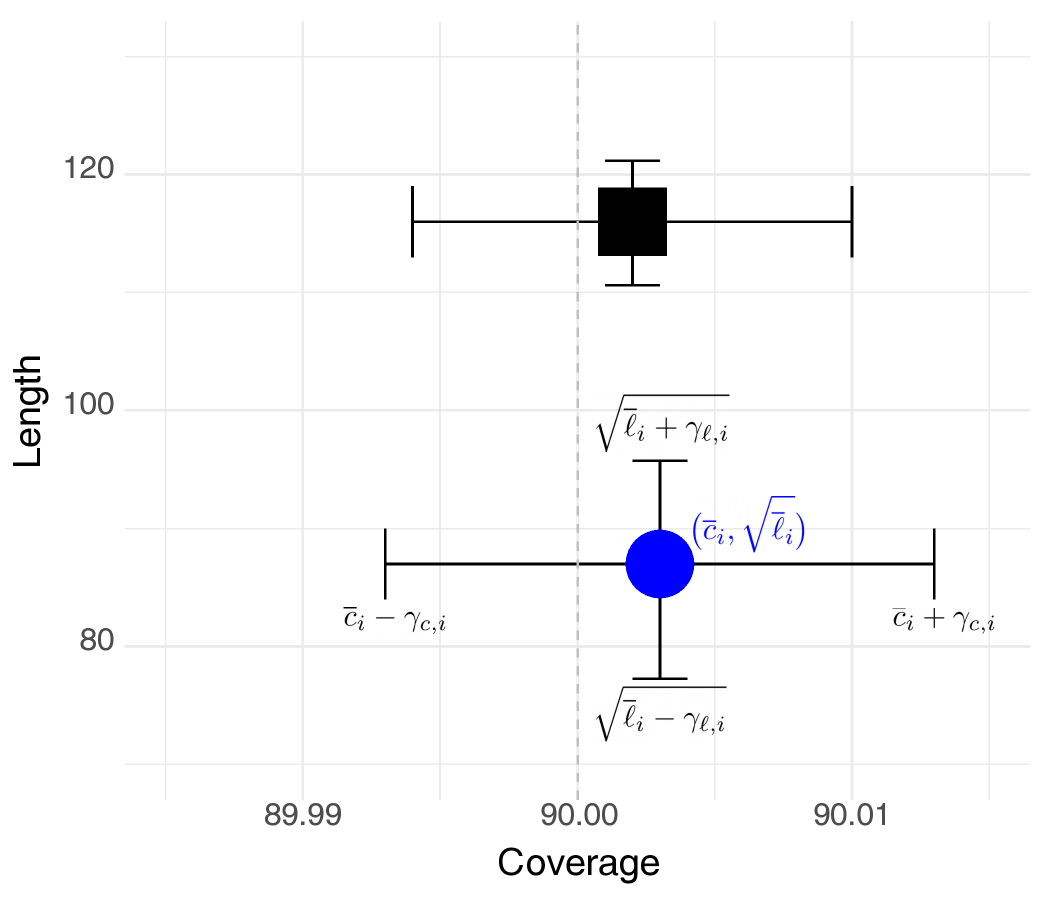}}
\end{center}
\caption{\label{fig:given-i} How to report
concisely the indicators defined in~\eqref{eq:indMCcomponent1} for a given element
$i$ in the hierarchy (and two methods).}
\end{figure}

\textbf{Report for all $i \in [m]$.}
Figure~\ref{fig:given-i} considered one given $i$ and we should
produce such pictures for all nodes $i \in [m]$ of a given
hierarchy. We do so in the figures of the next two pages,
where we organized the subgraphs by hierarchy levels (with horizontal separations
between levels); these figures correspond,
respectively, to Configurations~1 and~2 described in Appendix~\ref{app:datageneration}.

We now comment their outcomes.

\textbf{Component-wise coverages.}
The nominal coverage levels of $1-\alpha = 90\%$ are achieved, at each node and irrespective of the method considered, as stated Theorem~\ref{th:coverage}.

\textbf{Component-wise efficiency.}
The graphs reported on the next two pages depict a noteworthy dual behavior.

At the most disaggregated level (at the leaves), all algorithms exhibit a performance superior to the one for the Direct approach. This improvement is only mild for OLS, but is substantial for the other three algorithms: MinT, WLS, and Combi. In particular, MinT provides shorter prediction sets at each node of the disaggregated level (most often in a statistically significant way).

At aggregated levels, MinT and WLS have a performance comparable to the one of the Direct approach, which seems marginally superior
(but not in statistically significant way). This is not the case for Combi and OLS, which perform poorly at these aggregated levels.
This phenomenon might be linked to the superior performance of the base forecasts at these aggregated levels: as a result, it becomes more challenging to leverage the information provided by the base forecasts at the most disaggregated level.

\clearpage

\begin{figure}[t]
\begin{center}
\includegraphics[width=\textwidth]{./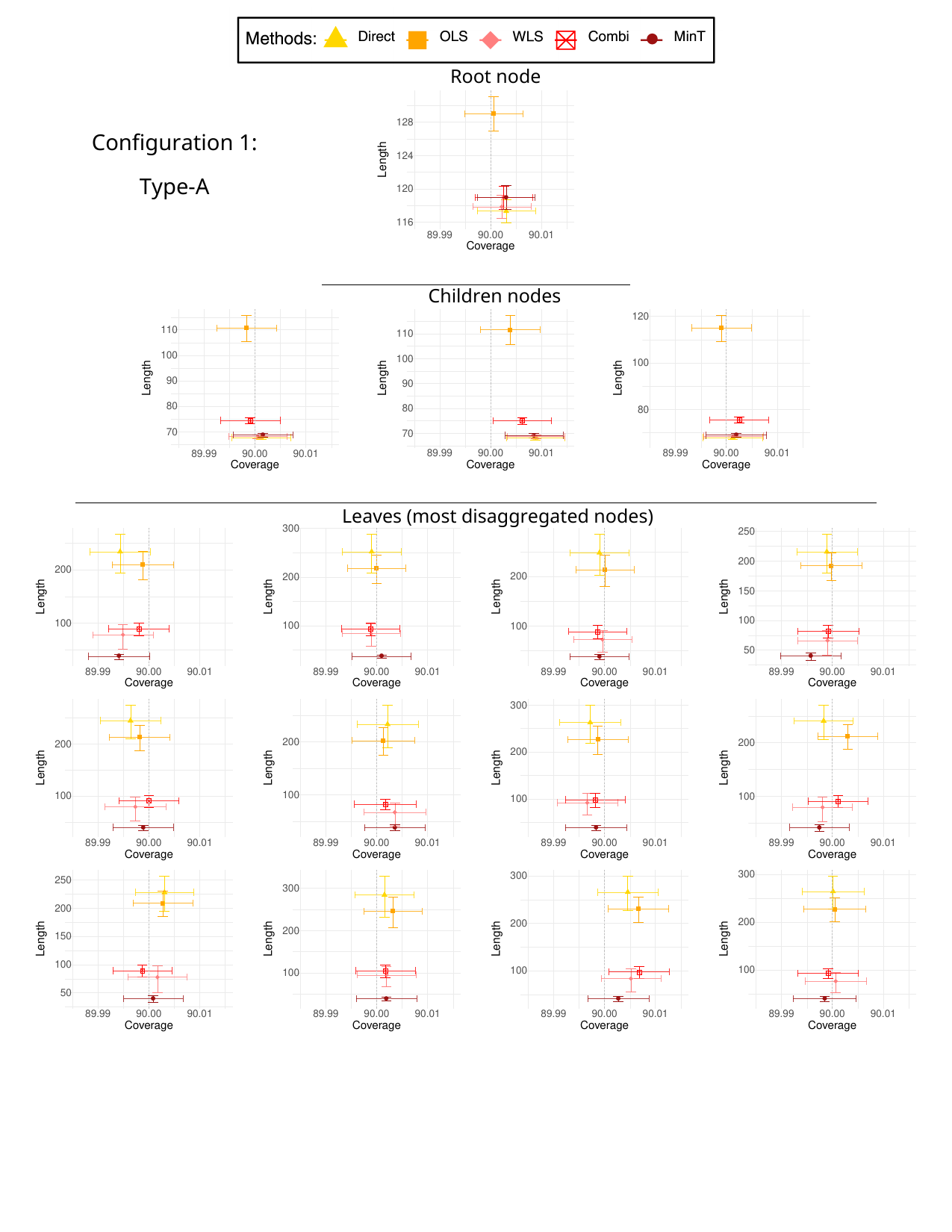}
\end{center}
\caption{Component-wise coverages and efficiencies achieved for the methods considered
on Configuration~1 of Appendix~\ref{app:datageneration}.
This figure should be read as indicated in Figure~\ref{fig:given-i}.}
\end{figure}

\clearpage

\begin{figure}[t]
\begin{center}
\includegraphics[width=\textwidth]{./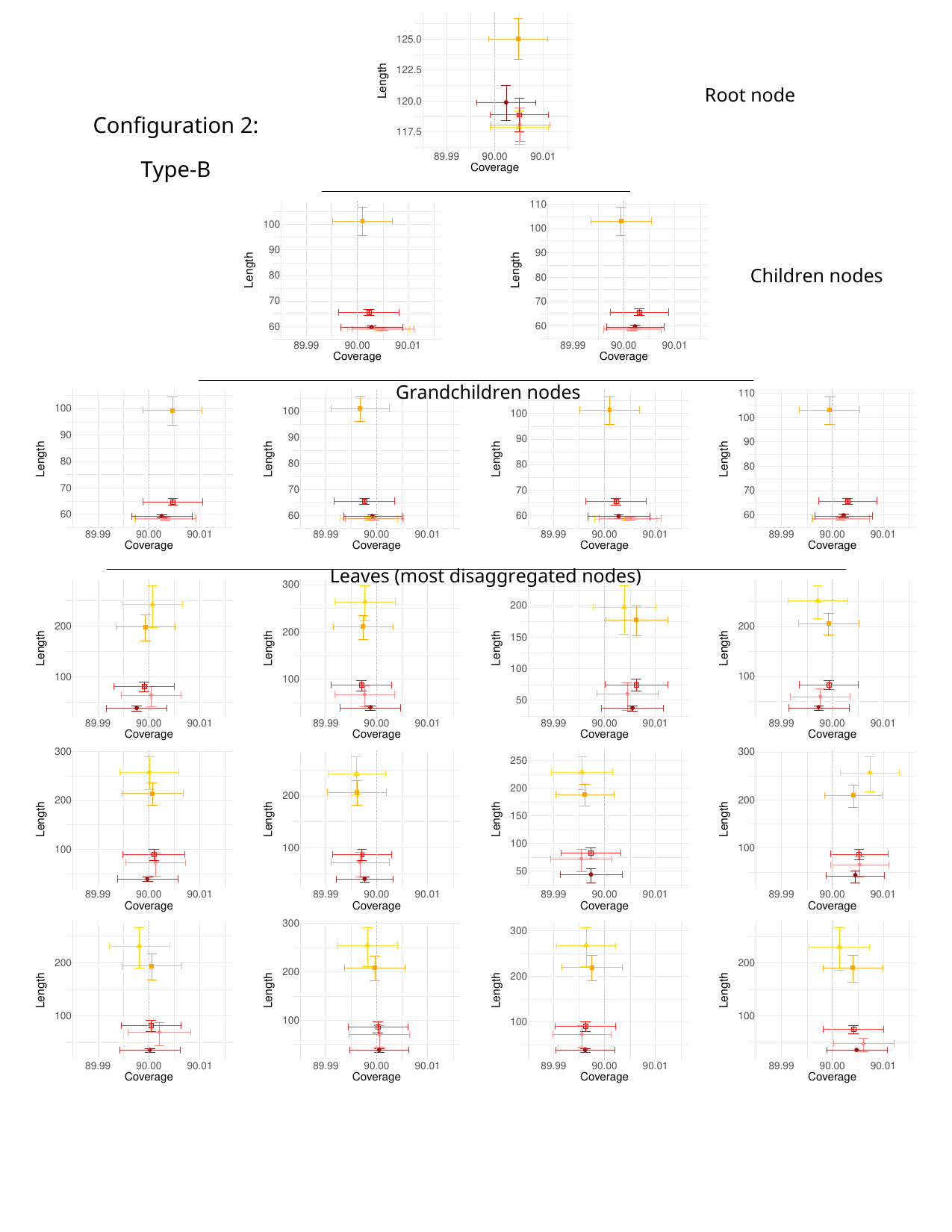}
\end{center}
\caption{Component-wise coverages and efficiencies achieved for the methods considered
on Configuration~2 of Appendix~\ref{app:datageneration}.
This figure should be read as indicated in Figure~\ref{fig:given-i}.}
\end{figure}

\end{document}